\documentclass{article}

\usepackage[utf8]{inputenc} 
\usepackage[T1]{fontenc}    
\usepackage{url}            
\usepackage{booktabs}       
\usepackage{amsfonts}       
\usepackage{nicefrac}       
\usepackage{xcolor}         

\usepackage[top=2cm, bottom=2cm, left=2.2cm, right=2.2cm]{geometry}
\usepackage{amsthm,amsmath,amsfonts,amssymb}

\usepackage[numbers]{natbib}
\usepackage[colorlinks,citecolor=blue,urlcolor=blue]{hyperref}
    \hypersetup{
    	colorlinks = true,
    	linkcolor = blue,
    	anchorcolor = blue,
    	citecolor = blue,
    	filecolor = blue,
    	urlcolor = blue
    }

\usepackage[linesnumbered,lined,boxed,commentsnumbered,ruled,longend]{algorithm2e}

\definecolor{darkslategray}{rgb}{0.18, 0.31, 0.31}
\usepackage{wrapfig}

\usepackage{subcaption}

\definecolor{darkcyan}{rgb}{0.0, 0.55, 0.55}
\definecolor{MidnightBlue}{RGB}{25,25,112}
\definecolor{MidnightBlueComplementingGreen}{RGB}{25,112,25}
\definecolor{MidnightBlueComplementingPurple}{RGB}{112,25,112}
\definecolor{MidnightBlueComplementingRed}{RGB}{112,25,69}
\definecolor{WowColor}{rgb}{.75,0,.75}
\definecolor{MildlyAlarming}{rgb}{0.85,0.25,0.1}
\definecolor{SubtleColor}{rgb}{0,0,.50}
\definecolor{antiquefuchsia}{rgb}{0.57, 0.36, 0.51}
\definecolor{fashionfuchsia}{rgb}{0.96, 0.0, 0.63}
\definecolor{jade}{rgb}{0.0, 0.66, 0.42}
\definecolor{caribbeangreen}{rgb}{0.0, 0.8, 0.6}
\definecolor{aquamarine}{rgb}{0.5, 0.8, 0.85}
\definecolor{lightseagreen}{rgb}{0.13, 0.7, 0.67}
\definecolor{darkgreen}{rgb}{0.0, 0.2, 0.13}
\definecolor{darkspringgreen}{rgb}{0.09, 0.45, 0.27}
\definecolor{attentioncolor}{RGB}{152,90,81}
\definecolor{burgred}{RGB}{40,3,22}
\definecolor{AnnieGreen}{RGB}{17,123,92}
\definecolor{Turquoise}{RGB}{64,224,208}

\definecolor{darkjade}{RGB}{0,122,84}
\definecolor{Window1}{RGB}{92,150,31}%
    \definecolor{Window1dark}{RGB}{41,67,13}%
\definecolor{Window2}{RGB}{255,168,28}
    \definecolor{Window2dark}{RGB}{114,75,12}
\definecolor{Window3}{RGB}{255,96,33}
    \definecolor{Window3dark}{RGB}{97,36,12}
\definecolor{InputColor}{RGB}{20,255,177}
    \definecolor{InputColorlight}{RGB}{222,237,229}

\definecolor{RedAlizarin}{rgb}{0.82, 0.1, 0.26}

\usepackage{mathtools}
\usepackage{hyperref}
\makeatletter
\newcommand{\mytag}[2]{%
  \text{#1}%
  \@bsphack
  \begingroup
    \@onelevel@sanitize\@currentlabelname
    \edef\@currentlabelname{%
      \expandafter\strip@period\@currentlabelname\relax.\relax\@@@%
    }%
    \protected@write\@auxout{}{%
      \string\newlabel{#2}{%
        {#1}%
        {\thepage}%
        {\@currentlabelname}%
        {\@currentHref}{}%
      }%
    }%
  \endgroup
  \@esphack
}
\makeatother

\usepackage{soul}

\usepackage[colorinlistoftodos]{todonotes}

\NewDocumentCommand{\AK}{mo}{
    \IfValueF{#2}{
                        {{
                            \textcolor{magenta}{ 
                            \textbf{AK:}
                            \textit{{#1}}
                            }
                        }}
        }
    \IfValueT{#2}{
                        \marginnote{{\scriptsize
                            \textcolor{magenta}{ 
                            \textbf{AK:}
                            \textit{{#1}}
                            }
                        }}
        }
                    }

\NewDocumentCommand{\TF}{mo}{
    \IfValueF{#2}{
                        {{
                            \textcolor{blue}{ 
                            \textbf{Takashi:}
                            \textit{{#1}}
                            }
                        }}
        }
    \IfValueT{#2}{
                        \marginnote{{\scriptsize
                            \textcolor{blue}{ 
                            \textbf{Takashi:}
                            \textit{{#1}}
                            }
                        }}
        }
                    }

\NewDocumentCommand{\ocariz}{mo}{
    \IfValueF{#2}{
                        {{
                            \textcolor{purple}{ 
                            \textbf{Ocariz:}
                            \textit{{#1}}
                            }
                        }}
        }
    \IfValueT{#2}{
                        \marginnote{{\scriptsize
                            \textcolor{purple}{ 
                            \textbf{Ocariz:}
                            \textit{{#1}}
                            }
                        }}
        }
                    }

\NewDocumentCommand{\Bum}{mo}{
    \IfValueF{#2}{\textcolor{teal}{\textbf{Bum: }\textit{#1}}}
    \IfValueT{#2}{\marginnote{{\scriptsize\textcolor{teal}{\textbf{Bum: }\textit{#1}}}}}
}

\definecolor{cobalt}{rgb}{0.0, 0.28, 0.67}

\usepackage{graphicx}

\usepackage[T1]{fontenc}    
\usepackage{booktabs}       

\usepackage{amsfonts,amsmath,amssymb,bbm}
\usepackage{enumitem}
\usepackage{graphicx}
\usepackage{adjustbox}
\usepackage[UKenglish]{isodate}
\usepackage{graphicx}
\usepackage[font=small,labelfont=it]{caption}
\usepackage{subcaption}
\usepackage{hyperref} 
    \hypersetup{
    	colorlinks = true,
    	linkcolor = blue,
    	anchorcolor = blue,
    	citecolor = blue,
    	filecolor = blue,
    	urlcolor = blue
    }
\usepackage{cleveref}
\usepackage{float}
\usepackage{enumitem}
\usepackage{multirow}
\usepackage{fancyvrb}
\usepackage{rotating}
    \usepackage{tikz}
    \usepackage{tikz-cd} 
    \pgfdeclarelayer{edgelayer}
    \pgfdeclarelayer{nodelayer}
    \pgfsetlayers{edgelayer,nodelayer,main}
    \tikzstyle{new style 0}=[fill={rgb,255: red,255; green,94; blue,247}, draw=black, shape=circle]
    \tikzstyle{pointy}=[fill=white, draw=black, shape=circle]
    \tikzstyle{pointy}=[->]
\usepackage{fontawesome}

\usepackage{lscape}

\renewcommand{\phi}{\varphi}

\newcommand{\eqdef}{\ensuremath{\stackrel{\mbox{\upshape\tiny def.}}{=}}}

\NewDocumentCommand{\luca}{mo}{
    \IfValueF{#2}{
                        {{\scriptsize
                            \textcolor{green}{ 
                            \textbf{L:}
                            \textit{{#1}}
                            }
                        }}
        }
    \IfValueT{#2}{
                        \marginnote{{\scriptsize
                            \textcolor{green}{ 
                            \textbf{L:}
                            \textit{{#1}}
                            }
                        }}
        }
                    }
\NewDocumentCommand{\giulia}{mo}{
    \IfValueF{#2}{
                        {{\scriptsize
                            \textcolor{red}{ 
                            \textbf{GL:}
                            \textit{{#1}}
                            }
                        }}
        }
    \IfValueT{#2}{
                        \marginnote{{\scriptsize
                            \textcolor{red}{ 
                            \textbf{GL:}
                            \textit{{#1}}
                            }
                        }}
        }
}
\NewDocumentCommand{\anastasis}{mo}{
    \IfValueF{#2}{
                        {{\scriptsize
                            \textcolor{violet}{ 
                            \textbf{A:}
                            \textit{{#1}}
                            }
                        }}
        }
    \IfValueT{#2}{
                        \marginnote{{\scriptsize
                            \textcolor{violet}{ 
                            \textbf{A:}
                            \textit{{#1}}
                            }
                        }}
        }
                    }
                    
\NewDocumentCommand{\cody}{mo}{
    \IfValueF{#2}{
                        {{\scriptsize
                            \textcolor{orange}{ 
                            \textbf{A:}
                            \textit{{#1}}
                            }
                        }}
        }
    \IfValueT{#2}{
                        \marginnote{{\scriptsize
                            \textcolor{orange}{ 
                            \textbf{A:}
                            \textit{{#1}}
                            }
                        }}
        }
                    }

\NewDocumentCommand{\Greg}{mo}{
    \IfValueF{#2}{
                        {{\scriptsize
                            \textcolor{cyan}{ 
                            \textbf{Y:}
                            \textit{{#1}}
                            }
                        }}
        }
    \IfValueT{#2}{
                        \marginnote{{\scriptsize
                            \textcolor{cyan}{ 
                            \textbf{Y:}
                            \textit{{#1}}
                            }
                        }}
        }
                    }

\usepackage{appendix}
\usepackage{comment}

\NewDocumentCommand{\NN}{oo}{
    \ensuremath{
        \mathcal{NN}
        \IfValueT{#1}{_{#1}}\IfValueF{#1}{_{[d]}}
        \IfValueT{#2}{^{#2}}\IfValueF{#2}{^{\sigma}}
    }
}

\usepackage{cleveref}
\newcounter{termcounter}
\renewcommand{\thetermcounter}{\Roman{termcounter}}
\crefname{term}{term}{terms}
\creflabelformat{term}{#2\textup{(#1)}#3}

\makeatletter
\def\term{\@ifnextchar[\term@optarg\term@noarg}
\def\term@optarg[#1]#2{%
  \textup{#1}%
  \def\@currentlabel{#1}%
  \def\cref@currentlabel{[][2147483647][]#1}%
  \cref@label[term]{#2}}
\def\term@noarg#1{%
  \refstepcounter{termcounter}%
  \textup{(\thetermcounter)}%
  \cref@label[term]{#1}}
\makeatother

\NewDocumentCommand{\comp}{o}{\operatorname{Comp}
    {\IfValueT{#1}{({#1}})}
}
\NewDocumentCommand{\DAG}{o}{
\operatorname{DAG}\IfValueT{#1}{
        _{{#1}}
    }
    \IfValueF{#1}{_{d,D}}
}

\NewDocumentCommand{\Rep}{o}{
{f}
    {\IfValueT{#1}{
        _{{#1}}
    }}
    {\IfValueF{#1}{
        (_{\mathcal{C}})
    }}
}

\usepackage{amsthm}

\theoremstyle{plain}
\newtheorem{theorem}{Theorem}[section]
\newtheorem{proposition}[theorem]{Proposition}
\newtheorem{lemma}[theorem]{Lemma}
\newtheorem{corollary}[theorem]{Corollary}
\theoremstyle{definition}
\newtheorem{definition}[theorem]{Definition}
\newtheorem{assumption}[theorem]{Assumption}
\theoremstyle{remark}
\newtheorem{remark}[theorem]{Remark}

\newtheorem{example}{Example}

\newcommand{\reals}{\mathbb{R}}

\newcommand{\calF}{\mathcal{F}}

\newcommand{\calC}{\mathcal{C}}

\newcommand{\mSet}[1]{\big\{#1\big\}}

\DeclareMathOperator{\pdim}{P\text{-}\dim}

\newcommand{\deriv}[2]{\frac{\partial #1}{\partial #2}}

\newcommand{\fat}{\mathrm{fat}}

\usepackage{amsmath}
\usepackage{amssymb}
\usepackage{amsfonts}
\usepackage{mathtools}
\usepackage{amsthm}
\usepackage{algpseudocode}

\usepackage{natbib}

\theoremstyle{plain}
	\newtheorem{thm}{Theorem}
	\numberwithin{thm}{section}
	\newtheorem*{thm*}{Theorem}
	
	\newtheorem*{cor*}{Corollary}
	
	\newtheorem*{prop*}{Proposition}
	
	\newtheorem*{fact*}{Fact}
	
	\newtheorem*{lem*}{Lemma}
	
	\newtheorem*{ex*}{Exercise}
	
	\newtheorem*{claim*}{Claim}
	
	\newtheorem*{conj*}{Conjecture}
	
	\newtheorem*{question*}{Question}
	\newtheorem*{notation*}{Notation}
\theoremstyle{definition}
	\newtheorem{Def}[thm]{Definition}
	\newtheorem*{Def*}{Definition}
	
	\newtheorem*{rmk*}{Remark}
	
	\newtheorem{soln*}{Solution}
	
	\newtheorem*{note*}{Note}
	
	\newtheorem*{eg*}{Example}	
	
	\newtheorem*{construction*}{Construction}
	
	\newtheorem*{warning*}{Warning}
	
	\newtheorem*{obs*}{Observation}	
	
	\newtheorem*{recall*}{Recollection}

\numberwithin{equation}{section}

\usepackage[most]{tcolorbox}

\newtcolorbox{questionbox}{
  enhanced,
  hbox,
  colback=gray!6,
  colframe=gray!50,
  boxrule=0.4pt,
  arc=1.5mm,
  left=3mm,
  right=3mm,
  top=1.2mm,
  bottom=1.2mm,
  before=\begin{center},
  after=\end{center}
}

\definecolor{slighlywarmblack}{rgb}{0.0, 0.25, 0.25}
\definecolor{slighlylesswarmblack}{rgb}{0.0, 0.15, 0.15}
\let\oldproof\proof
\renewcommand{\proof}{\color{slighlylesswarmblack}\oldproof}

\begin{document}

\title{Every Feedforward Neural Network Definable in an o-Minimal Structure Has Finite Sample Complexity}

\author{%
  Anastasis Kratsios\thanks{Department of Mathematics \& Statistics, McMaster University, Hamilton, ON, Canada; \texttt{kratsioa@mcmaster.ca}.}
  \and
  Gregory Cousins\thanks{Department of Mathematics \& Statistics, McMaster University, Hamilton, ON, Canada; \texttt{gcousins@alumni.nd.edu}.}
  \and
  Haitz S\'aez de Oc\'ariz Borde\thanks{University of Cambridge, United Kingdom; \texttt{chri6704@ox.ac.uk}.}
  \qquad\qquad
  \and 
  Bum Jun Kim\thanks{Graduate School of Engineering, The University of Tokyo, Tokyo, Japan; \texttt{bumjun.kim@weblab.t.u-tokyo.ac.jp}.}
  \and
  Simone Brugiapaglia\thanks{Department of Mathematics \& Statistics, Concordia University, Montr\'eal, QC, Canada; \texttt{simone.brugiapaglia@concordia.ca}.}
}

\maketitle

\begin{abstract}
We show that, in a precise sense, a broad class of feedforward neural networks learn (have finite sample complexity) in the PAC model: every fixed finite feedforward architecture whose layers are definable in an o-minimal structure has finite sample complexity in the agnostic PAC setting, even with unbounded parameters. This covers standard fixed-size MLPs, CNNs, GNNs, and transformers with fixed sequence length, together with the operations and layers typically used in such architectures, including linear projections, residual connections, attention mechanisms, pooling layers, normalization layers, and admissible positional encodings. Hence, distribution-free learnability for modern non-recurrent architectures is not an exceptional property of particular activations or architecture-specific VC arguments, but a consequence of tame feedforward computation. Our results reposition finite-sample PAC learnability as a baseline rather than a differentiator: they shift the focus of architectural comparison toward inductive biases, symmetries and geometric priors, scalability, and optimization behaviour.
\end{abstract}


\section{Introduction}
\label{s:Intro}

Modern neural architectures appear strikingly diverse, ranging from classical multilayer perceptrons~(MLPs) to graph neural networks~(GNNs) and contemporary transformers.  Each major architectural innovation introduces new layers, symmetries, and ``basic computational units,'' and has, in turn, sparked a corresponding line of architecture-specific statistical theory proving that the resulting model class can learn. Examples include guarantees for piecewise-linear MLPs~\citep{bartlett2019nearly}, message-passing GNNs~\citep{GargJegelkaJaakkola2020,MorrisGeertsTonshoffGrohe2023}, and transformer architectures~\citep{EdelmanGoelKakadeZhang2022,TraugerTewari2024,limmer2024higher}.

From the viewpoint of computation, however, these architectures share a common form.  At inference time, they are finite feedforward programs: directed acyclic computational graphs whose nodes execute elementary parametrized operations; cf.\ circuit-complexity theory~\citep{JunkaBooleanBook,jukna2023tropical,MR3308677}.  This observation suggests a simple question.

\begin{questionbox}
\centering
\emph{Do all standard fixed finite feedforward neural networks have finite sample complexity?}
\end{questionbox}

We answer this question \emph{affirmatively} for the broad class of feedforward architectures whose primitive operations are definable in an o-minimal structure. In simple terms: \textit{modern artificial neural networks are mathematically well-behaved enough to be theoretically learnable.} More precisely, we prove that every fixed finite feedforward architecture assembled from definable layers/gates and arranged in a directed acyclic computational graph has finite agnostic PAC sample complexity for both classification (Theorem~\ref{thrm:class}) and regression (Theorem~\ref{thrm:reg}). This holds even when their parameter space is unbounded.  

Throughout the paper, a ``fixed network'' or ``fixed architecture'' means that the computational graph, input and output dimensions, widths, depths, sequence lengths, and the number and placement of trainable parameters are fixed. The numerical values of the parameters themselves are not fixed; they are allowed to vary over their prescribed definable parameter spaces (free to vary during training).

We verify that the definability assumption is mild by checking it for the standard fixed finite building blocks of MLPs, CNNs, GNNs, and transformers, including common activations, convolution, attention, normalization, pooling, and admissible positional encodings.
Here, learnability is statistical and distribution-free: finite uniform-convergence, equivalently finite agnostic PAC sample complexity via ERM.  We do not claim efficient ERM or any optimization guarantee.

\paragraph{Takeaways.} Our results complement the ongoing work in the foundations of statistical learning theory seeking to characterize the exact conditions under which PAC learning is possible by uncovering sharp learning-theoretic dimensions; see, e.g.,~\cite{attias2023optimal,bressan2025fine}.  
Here, we demonstrate that most architectures satisfy the well-known necessary conditions of finite fat-shattering and finite VC dimension, whose sufficiency dates back at least to~\cite{AlonBenDavidCesaBianchiHaussler_PDGeneralization}.
Thus, our \emph{theoretical takeaway} is a simple, virtually always guaranteed \textbf{finite-sample-complexity} result under a unified framework, relying on clarified connections between fat-shattering dimension~\cite{BartlettKulkarniPosner_CoveringFatShattering_1997}, VC dimension, and o-minimal structures in \emph{mathematical logic}~\cite{PillaySteinhorn1986DefinableSetsOrderedStructuresI}, particularly model theory~\cite{vandenDries1998tame,coste1999introduction,speissegger1999pfaffian}.  

The \emph{practical takeaway} is that finite sample complexity is too coarse to distinguish standard feedforward architectures.  Once a model is fixed, finite, feedforward, and built from definable primitives, distribution-free learnability follows automatically.  Thus, model selection should be guided by finer criteria: problem-specific inductive bias, symmetry and geometric priors, as studied in geometric deep learning~\cite{Bronstein2016GeometricDL,bronstein2021geometric,borde2025mathematicalfoundationsgeometricdeep}, as well as scalability~\cite{kaplan2020scalinglawsneurallanguage} and optimization behavior.

\section{Background and Preliminaries}
\label{s:Background}

One of the most important bridges between mathematical logic and learning theory is shattering.  In PAC learning, finite VC dimension is the fundamental combinatorial certificate for distribution-free learnability~\citep{BlumerEhrenfeuchtHausslerWarmuth1989,AnthonyBartlett1999}; in model theory, the same phenomenon appears as the independence property for uniformly definable families~\citep{Laskowski1992,ChaseFreitag2019}.  Early neural-network theory already exploited this connection: tame real geometry was used to prove finiteness results for sigmoidal networks~\citep{MacintyreSontag1993}, polynomial VC bounds for Pfaffian networks~\citep{KarpinskiMacintyre1997}, and general bounds for real-parametrized concept classes generated by elementary operations~\citep{GoldbergJerrum1995}; see also~\citep{BartlettMaass2003}.  Our work revisits this logic-learning bridge in the language of modern architectures: a fixed feedforward network is a finite composition of parametrized gates, so joint definability of the gates propagates to the whole model, while o-minimal tameness rules out unbounded shattering.  Thus, model theory supplies a clean statistical certificate: definable feedforward computation has finite sample complexity.

\subsection{Notation and Terminology}

We begin by fixing the notation and terminology.  We denote the indicator function of a set $A$ by $I_A$.  We also use $I$ to convert Boolean variables into $\{0,1\}$-valued variables, so that
$
    I(\mathrm{True})=1
    \,\text{and}\,
    I(\mathrm{False})=0.
$
An interval means a set of one of the forms
$
    (a,b), (-\infty,a), \,\text{or}\, (b,\infty),
$
where $a,b\in\reals$.  For every $n\in\mathbb{N}_+$, we equip $\reals^n$ with its usual Euclidean topology.

\begin{definition}[Directed Acyclic Graph (DAG)]
\label{def:dag}
A \emph{directed graph} is a pair $G=(V,E)$, where $V$ is a non-empty set of \emph{nodes} and
$
    E\subseteq V^2\setminus \mSet{(v,v)}_{v\in V}
$
is a set of directed edges.  A \emph{directed path} from $v$ to $w$ is a finite sequence of directed edges
$
    (v,v_1),(v_1,v_2),\dots,(v_k,w)\in E.
$
The graph $G$ is called \emph{acyclic} if there is no non-empty directed path from any node to itself.
\end{definition}

\subsection{Background in Mathematical Logic: O-Minimality}
\label{s:ModGeo__ss:oMinimality}

We now review the background required to formulate our results.  Standard learning-theoretic notions used later, such as VC dimension and pseudo-dimension, are recalled in Appendix~\ref{s:MoreBackground__ss:LearningTheory}.
We begin with the basic language of o-minimal geometry; see, for instance,~\citep{vandenDries1998tame,coste1999introduction}.  The central object is a structure on the real field whose definable subsets form a tame class of sets: stable under the usual logical and geometric operations.

\begin{definition}[O-Minimal Structure Expanding the Real Field]
\label{def:o-minimal-structure}
A \emph{structure expanding the real field} is a sequence
$
    \mathcal{S}=(S_n)_{n\in\mathbb{N}_+},
$
where each $S_n$ is a collection of subsets of $\reals^n$, satisfying the following axioms:
\begin{enumerate}
    \item[(i)] Every semi-algebraic subset of $\reals^n$ belongs to $S_n$.
    
    \item[(ii)] For every $n\in\mathbb{N}_+$, the collection $S_n$ is a Boolean subalgebra of $\mathcal{P}(\reals^n)$.
    
    \item[(iii)] If $A\in S_m$ and $B\in S_n$, then $A\times B\in S_{m+n}$.
    
    \item[(iv)] If $p:\reals^{n+1}\to\reals^n$ denotes the projection onto the first $n$ coordinates and $A\in S_{n+1}$, then $p(A)\in S_n$.
\end{enumerate}
The elements of $S_n$ are called the \emph{$\mathcal{S}$-definable subsets} of $\reals^n$. The structure $\mathcal{S}$ is called \emph{o-minimal} if \emph{every set in $S_1$ is a finite union of points and intervals}.
\end{definition}

\begin{example}[Semi-Algebraic + (Global) Exponential]
The real exponential field $\mathbb{R}_{\exp}\eqdef(\mathbb{R},<,+,\cdot,0,1,\exp)$ is o-minimal by~\cite{Wilkie1996}.      
\end{example}
More broadly, we may add in all restricted analytic functions.
\begin{example}[Restricted Analytic Functions + (Global) Exponentials]
The structure $\mathbb{R}_{\mathrm{an},\exp}$, obtained by adding all restricted analytic functions and the global exponential map, is also o-minimal and contains the standard analytic operations used in softmax-type layers by~\cite{vandenDriesMiller1996}.    
\end{example}
Furthermore, we may add in Pfaffian functions and still preserve o-minimality.  Briefly, Pfaffian functions form a robust class of tame functions: their graphs may bend, grow, and interact non-linearly, but cannot oscillate with unrestricted complexity (cf.\  Appendix~\ref{s:MoreBackground__ss:MoreLogic} for a precise definition of Pfaffianity). 
The implication of learnability for neural networks with Pfaffian activation functions was considered in~\cite{KarpinskiMacintyre1997} and more recently for GNNs in~\cite{d2024vc}.
\begin{example}[Pfaffian Closures]
The Pfaffian closure of an o-minimal structure is obtained by adjoining the tame solution sets of definable Pfaffian differential systems.  More precisely, the Pfaffian closure $\mathcal{P}(\mathcal{S})$ of any o-minimal expansion $\mathcal{S}$ of the real field is again o-minimal, and contains the Pfaffian functions generated from $\mathcal{S}$-definable $C^1$ data~\cite{speissegger1999pfaffian}.
\end{example}
A function is well-behaved, in the o-minimal sense, if its graph is definable in the ambient structure.
\begin{definition}[Definable Map]
\label{def:definable-map}
Let $A \subset \mathbb{R}^n$.  
A map
$
f : A \to \mathbb{R}^p
$
is called \emph{definable} if its graph
$
\Gamma_f \;\eqdef\; \{(x,f(x)) \in \mathbb{R}^{n+p} : x \in A\}
$
is a definable subset of $\mathbb{R}^{n+p}$.
\end{definition}
In our examples section, Section~\ref{s:Examples}, we show that most neural networks are definable in the o-minimal structures above.  
We do so by verifying that the standard components used in deep learning are definable therein, and then using the fact that compositions of definable maps are again definable.  
Definable functions enjoy several closure properties: their linear combinations, coordinate-wise concatenations, and compositions are all again definable in $\mathcal{S}$; cf.~\citep[Chapter 1.3]{coste1999introduction}.

\subsection{Definable (Feedforward) Neural Networks}
\label{s:Background__ss:Neural_Networks}

As illustrated in Figure~\ref{fig:ANN}, feedforward neural-network computations are naturally organized by a DAG, in close analogy with the circuit-complexity literature, including Boolean circuits~\cite{JunkaBooleanBook,chiang2025transformers,li2026certifiable}, tropical circuits~\cite{jukna2023tropical}, probabilistic circuits~\cite{choi2020probabilistic,zhang2021probabilistic}, and algebraic circuits~\cite{wang2024compositional}; see also~\cite{kratsios2025quantifying}.  Here, we work with a broader class of feedforward computational graphs, subsuming most modern deep learning architectures as well as these semi-classical models of computational complexity theory.

We fix a family of finite-dimensional parametric maps, which will serve as the source of non-linearities in our deep learning models.  We refer to this family as our \textit{gate dictionary}
\begin{equation}
\label{eq:layer_dict}
    \mathbb{G}
\subseteq
    \bigcup_{n,p,m=1}^{\infty}
    \big\{
        g:\mathbb{R}^n\times\mathbb{R}^p\to\mathbb{R}^m
    \big\}
.
\end{equation}
For any
$
    \mathbb{G}\ni g:\mathbb{R}^n\times\mathbb{R}^p\to\mathbb{R}^m,
$
we interpret the first $n$ coordinates as \emph{inputs} and the last $p$ coordinates as \emph{parameters}, and we write
$
    g_{\theta}\eqdef g(\cdot,\theta)
$
for every fixed parameter $\theta\in\mathbb{R}^p$.

\begin{figure}[htp!]
    \centering
    \begin{subfigure}{0.45\linewidth}
        \centering
        \includegraphics[width=\linewidth]{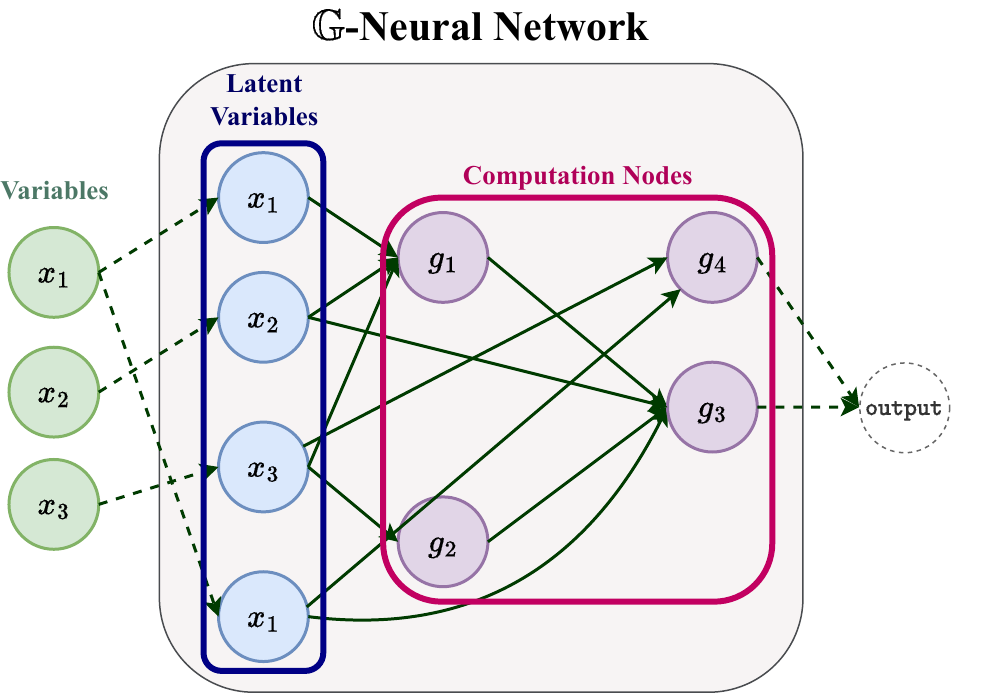}
        \caption{General feedforward $\mathbb{G}$-neural network}
        \label{fig:ANN_general}
    \end{subfigure}
    ~
    \begin{subfigure}{0.45\linewidth}
        \centering
        \includegraphics[width=\linewidth]{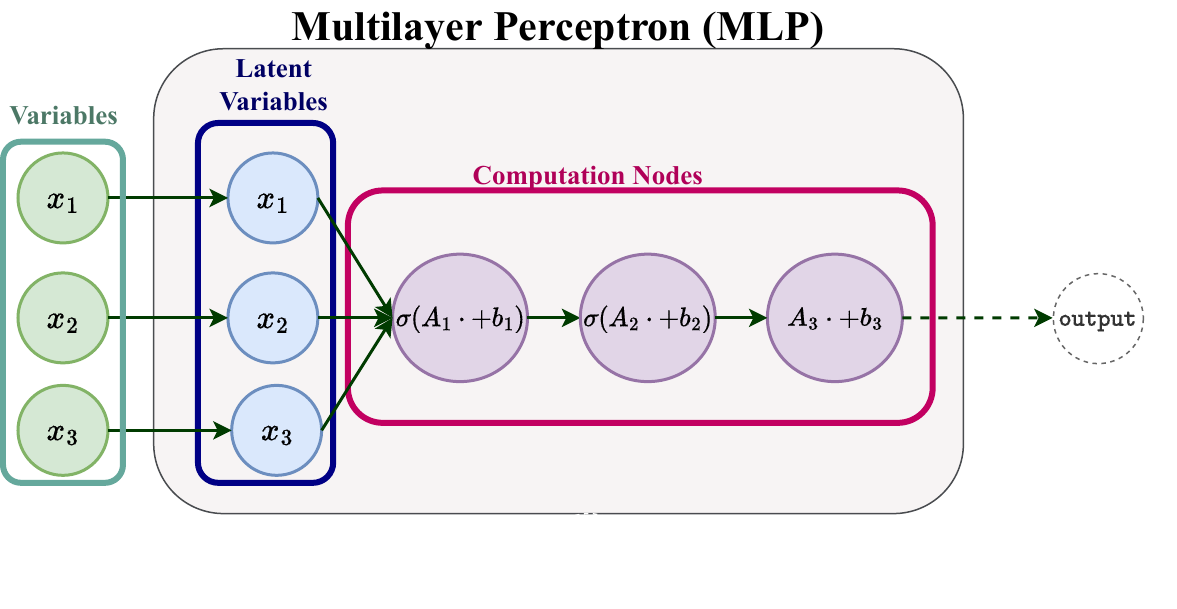}
        \caption{Special case: MLP}
        \label{fig:ANN_MLP}
    \end{subfigure}
    \caption{\textbf{$\mathbb{G}$-NN; cf.\ Definition~\ref{defn:ANN} (left)}: The structure of a \emph{general} non-recursive (feedforward) neural network ($\mathbb{G}$-NN).
    Parameterizable/trainable computations (gates) chosen from the \textit{gate dictionary} $\mathbb{G}$~\eqref{eq:layer_dict} are executed in the order specified by a DAG. \textbf{The ``standardized'' \textit{MLP} special case (right):}
    The standard multilayer perceptron (MLP) is recovered as a special case. The MLP model arises when the computational DAG is a directed line graph and all gates are either of the form $\sigma(A\cdot +b)$ or with $A\cdot + b$ in the final layer.
    }
    \label{fig:ANN}
\end{figure}

Let $G = (V,E)$ be a DAG. For any $v\in V$, the \textit{parents} of $v$ are the nodes
$
    \operatorname{Pa}_G(v)
    \eqdef
        \mSet{u\in V:(u,v)\in E}
.
$
Nodes without parents are called \emph{input} nodes and are denoted by $\operatorname{in}(G)$; all non-input nodes are called \emph{computation} nodes and are denoted by $\operatorname{comp}(G)$.  Nodes without children are called \emph{output} nodes and are denoted by $\operatorname{out}(G)$.
Every DAG induces a natural partial order (\textit{i.e., the direction of computation during the forward pass}) on its nodes: we write $u\lesssim v$ if either $u=v$ or there is a directed path from $u$ to $v$; see, e.g.,~\citep[Proposition~2.1.3]{bang2008digraphs}.  

\begin{definition}[$\mathbb{G}$-Neural Network]
\label{defn:ANN}
Fix dimensions $d_{in},P,d_{out}\in \mathbb{N}_+$ and a gate dictionary $\mathbb{G}$, cf.~\eqref{eq:layer_dict}. 
A map
$
    f:\mathbb{R}^{d_{in}}\times \mathbb{R}^P\to \mathbb{R}^{d_{out}}
$
is called a $\mathbb{G}$-\emph{neural network} if there exist a finite DAG
$
    G=(V,E),
$
a binary lifting channel $\Pi\in \{0,1\}^{|\operatorname{in}(G)|\times d_{in}}$, gates
$
    \mathbb{G}\ni g_v:\mathbb{R}^{M_v}\times\mathbb{R}^{p_v}\to\mathbb{R}^{m_v},
$
$
v\in\operatorname{comp}(G),
$
and a linear readout matrix
$
    A_{\operatorname{out}}\in \mathbb{R}^{d_{out}\times M_{\operatorname{out}}},
$
such that $f$ can be iteratively represented as follows.
First, set
$
    z^0\eqdef \Pi x
$
for each $x\in\mathbb{R}^{d_{in}}$, and identify the coordinates of $z^0$ with the input nodes of $G$.  Then, for every $v\in\operatorname{comp}(G)$, define the \emph{latent variables}
\begin{equation}
\label{eq:composition}
    z_v
    \eqdef
        (g_v)_{\theta_v}
        \big(
            (z_u)_{u\in\operatorname{Pa}_G(v)}
        \big),
\end{equation}
where the parent states are concatenated in any fixed order compatible with the partial order on $G$, and
$
    M_v
    \eqdef
        \sum_{u\in\operatorname{Pa}_G(v)} m_u
$
is the total dimension of the parent state.  Finally, the \emph{output}
\begin{equation}
\label{eq:readout}
    f_{\theta}(x)
    \eqdef
        A_{\operatorname{out}}
        (z_v)_{v\in\operatorname{out}(G)},
\end{equation}
where
$
    M_{\operatorname{out}}
    \eqdef
        \sum_{v\in\operatorname{out}(G)} m_v .
$
We write
$
        \theta
    \eqdef
        \bigl(A_{\operatorname{out}},(\theta_v)_{v\in\operatorname{comp}(G)}\bigr)
        \in\mathbb{R}^P,
$
for its parameter vector, where
$
    P
    =
        d_{out}M_{\operatorname{out}}
        +
        \sum_{v\in\operatorname{comp}(G)} p_v .
$
The \emph{domain} of $f$, denoted $\operatorname{dom}(f)$, consists of all $x\in \mathbb{R}^{d_{in}}$ for which the compositions in~\eqref{eq:composition} and~\eqref{eq:readout} are well-defined.
\end{definition}

\paragraph{The Key Insight: Joint Definability in Inputs and Parameters.}
\label{key_insight}
The core of this work is to show that, if a model family is \textit{jointly definable} in its inputs and parameters, and if the loss function is definable in the same structure, then the resulting model necessarily has finite sample complexity.
\begin{proposition}[Definable \emph{Gates} Imply Definable $\mathbb{G}$-\emph{Neural Networks}]
\label{prop:key_insight}
Fix dimensions $d_{in},P,d_{out}\in \mathbb{N}_+$ and an o-minimal structure $\mathcal{S}$ expanding the real field.  
If every gate in $\mathbb{G}$ is definable in $\mathcal{S}$, then every $\mathbb{G}$-NN is definable in $\mathcal{S}$, jointly in its inputs and parameters.
\end{proposition}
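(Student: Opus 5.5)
The plan is to induct along a topological ordering of the finite DAG $G$, maintaining for each node $v$ a map $\Phi_v:(x,\theta)\mapsto z_v$ that is definable jointly in $(x,\theta)$ on its natural domain. Since $G$ is finite and acyclic, the reachability relation $\lesssim$ is a partial order, so we can enumerate $V$ as $v_1,\dots,v_N$ with every parent appearing before its children. The parameter vector $\theta\in\mathbb{R}^P$ is fixed throughout, and each $\theta_v$ and $A_{\operatorname{out}}$ is a coordinate projection of $\theta$. Coordinate projections are linear and hence semi-algebraic, so they are definable by axiom (i).

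\textbf{Base case.} For an input node $v$, $z_v$ is a coordinate of $\Pi x$. The map $(x,\theta)\mapsto \Pi x$ is linear, so its graph is semi-algebraic and therefore lies in $\mathcal{S}$.

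\textbf{Inductive step.} Let $v$ be a computation node, and assume $\Phi_u$ is definable for every $u\in\operatorname{Pa}_G(v)$. First, the concatenation $(x,\theta)\mapsto\big((\Phi_u(x,\theta))_{u\in\operatorname{Pa}_G(v)},\theta_v\big)$ is definable. Its graph is the intersection of preimages of the component graphs under coordinate permutations and projections, and it uses the Boolean, product and projection axioms (ii)--(iv). Second, we compose with $g_v$, which is definable by hypothesis, and conclude using closure of definable maps under composition (cf.\ \citep[Chapter 1.3]{coste1999introduction}).

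To make the composition step self-contained, I would verify it directly. If $F:A\to\mathbb{R}^k$ and $g:B\to\mathbb{R}^m$ are definable, then
\[
\Gamma_{g\circ F}=\pi\Big(\{(y,w,z): (y,w)\in\Gamma_F,\ (w,z)\in\Gamma_g\}\Big),
\]
where $\pi$ forgets the $w$ coordinates. The set inside is an intersection of products of definable sets with copies of $\mathbb{R}^j$, after a coordinate permutation, which is semi-algebraic. Hence it is definable, and its projection is definable by (iv). The domain of $g\circ F$ is the projection of this set, so it is also definable. This handles the possibility, allowed by $\operatorname{dom}(f)$, that some gates are only partially defined.

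\textbf{Readout.} The output map $(x,\theta)\mapsto A_{\operatorname{out}}\,(z_v)_{v\in\operatorname{out}(G)}$ is the composition of the concatenated output states with the map $(A,z)\mapsto Az$. The latter is polynomial in $(A,z)$, so it is semi-algebraic. By the same composition lemma, $f$ is definable as a function of $(x,\theta)$.

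\textbf{Main obstacle.} The substantive point is not o-minimality, which is never used: the argument works in any structure. The real issue is ensuring that the definability is \emph{joint} in $(x,\theta)$ rather than holding for each fixed $\theta$ separately. This is why I carry $\theta$ along as a passive coordinate through every node, instead of composing the maps $g_{\theta_v}$ for a frozen $\theta$. The readout is another instance of this: the product $A_{\operatorname{out}}z$ is bilinear in parameters and states, and it is only definable because it is a polynomial in both jointly.

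The remaining care is bookkeeping. The fixed concatenation orders are coordinate permutations and hence semi-algebraic. Partial domains are handled by working with graphs throughout rather than with total functions.
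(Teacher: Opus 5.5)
Your proposal is correct and follows essentially the same route as the paper: induction along the DAG with $\theta$ carried as a passive coordinate, definability of the concatenated parent states together with the parameter projection $\theta\mapsto\theta_v$, composition with the definable gate $g_v$, and a readout that is polynomial jointly in $(A_{\operatorname{out}},z)$. The differences are cosmetic: you verify closure under composition directly from axioms (i)--(iv) where the paper cites it, you handle partial domains inside the main argument where the paper uses a remark, and you correctly observe that o-minimality plays no role in this step.
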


Proposition~\ref{prop:key_insight} motivates our only key assumption, Assumption~\ref{ass:bigass}, whose mildness is truly communicated via our broad list of practical examples in Section~\ref{s:Examples}.
\begin{assumption}[Definable Gates in an O-Minimal Structure]
\label{ass:bigass}
Fix an o-minimal structure $\mathcal{S}$ expanding the real field and suppose that every gate $g\in \mathbb{G}$, cf.~\eqref{eq:layer_dict}, is definable in $\mathcal{S}$.
\end{assumption}

\section{Main Statistical Guarantees}
\label{s:Stats}

We begin with the classification formulation of our general statistical guarantee.
\begin{theorem}[Definability Implies Learning Is Possible---Binary Classification Case]
\label{thrm:class}
Suppose Assumption~\ref{ass:bigass} holds.
Let $n,N\in \mathbb{N}_+$, $f:\mathbb{R}^n\times \mathbb{R}^P\to \mathbb{R}$ be a $\mathbb{G}$-NN, let $\mathbb{P}$ be a Borel probability measure on $\mathbb{R}^n\times \{0,1\}$, and let $((X_i,Y_i))_{i=1}^N$ be i.i.d.\ random variables with law $\mathbb{P}$.
\hfill\\
\noindent
There exist constants%
\footnote{\label{note1}Where $C$ is universal and $K$ depends only on the fixed architecture, the defining formulas of its gates, and, in the regression case, the loss $\ell$, but not on $N$, $\varepsilon$, $\delta$, $\mathbb{P}$, or the magnitude of the parameters.}~%
$C,K>0$ such that: for every error $\varepsilon>0$ and every failure probability $0<\delta \le 1$ 
\[
    \sup_{\theta\in \mathbb{R}^P}\,
    \biggl|
        \mathbb{E}_{(X,Y)\sim \mathbb{P}}\big[
            I(I_{(0,\infty)}\circ f_{\theta}(X)=Y)
        \big]
        -
        \frac1{N}\,\sum_{i=1}^N\,
            I(I_{(0,\infty)}\circ f_{\theta}(X_i)=Y_i)
    \biggr|
    \le
    \varepsilon
\]
holds with probability at least $1-\delta$, provided that 
\[
N
\ge
   C\,\tfrac{K+\log(1/\delta)}{\varepsilon^2}
.
\]
\end{theorem}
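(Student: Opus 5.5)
We reduce the statement to a uniform convergence bound for a binary concept class of finite VC dimension, and obtain that finiteness from o-minimality.

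The plan has three steps. First, we show that a certain set is definable. Second, we deduce that the associated loss class has finite VC dimension. Third, we apply a standard VC uniform-deviation inequality.

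\emph{Step 1 (definability of the loss class).} By Proposition~\ref{prop:key_insight}, under Assumption~\ref{ass:bigass} the map $(x,\theta)\mapsto f_\theta(x)$ is definable in $\mathcal{S}$. This holds jointly in $(x,\theta)$, on the definable set $\{(x,\theta): x\in\operatorname{dom}(f_\theta)\}$. Consider
\[
    Z\eqdef\bigl\{(x,y,\theta)\in\reals^n\times\reals\times\reals^P:\ y\in\{0,1\},\ I\bigl(f_\theta(x)>0\bigr)=y\bigr\}.
\]
We can write $Z$ as the union of $\{y=1\}\cap\{f_\theta(x)>0\}$ and $\{y=0\}\cap\{f_\theta(x)\le 0\}$. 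Each piece is obtained from the graph of $f$ by intersecting with semi-algebraic sets and then projecting. By axioms (i)--(iv) of Definition~\ref{def:o-minimal-structure}, $Z$ is therefore definable.

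Points outside $\operatorname{dom}(f_\theta)$ need a convention, for example $I_{(0,\infty)}\circ f_\theta=0$ there. Any such convention is again expressed by a first-order formula, so definability is preserved. The loss class is the family of fibers $\mathcal{L}\eqdef\{Z_\theta:\theta\in\reals^P\}$, viewed as subsets of $\reals^{n+1}$. It is thus a uniformly definable family with parameter space all of $\reals^P$; no boundedness of $\theta$ is needed.

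\emph{Step 2 (finite VC dimension: the main step).} We invoke the classical fact that definable families in an o-minimal structure do not have the independence property, due to Pillay--Steinhorn and Knight--Pillay--Steinhorn, and in the learning form of Laskowski~\citep{Laskowski1992}. Concretely, for any definable $Z\subseteq\reals^{m}\times\reals^P$ the fiber family $\{Z_\theta\}_\theta$ has finite VC dimension $K_0$.

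The proof of this fact goes through uniform finiteness in o-minimal structures. One first handles the case $m=1$: by cell decomposition, fibers are unions of at most $B$ points and intervals, with $B$ uniform in $\theta$. The general case follows via Shelah's reduction of NIP to formulas in one variable, or alternatively via the polynomial bound on the shatter function $\pi(N)=O(N^{\dim})$ coming from uniform bounds on the number of cells.

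This is the step I expect to carry the real content. Its subtlety is that the bound must be uniform over unbounded $\theta$, which is exactly what o-minimal uniform finiteness provides. Since $K_0$ depends only on the defining formula, and hence only on the architecture and the gate formulas, it meets the requirements of footnote~\ref{note1}.

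\emph{Step 3 (uniform deviation).} The complement indicator $1-I_{Z_\theta}$ generates the same VC dimension. We apply the standard VC uniform-convergence bound, via symmetrization, chaining and Haussler's packing bound, or Talagrand's sharp form for $\{0,1\}$-valued classes. It gives
\[
    \sup_\theta\Bigl|\mathbb{E}\,I_{Z_\theta}(X,Y)-\tfrac1N\sum_{i=1}^N I_{Z_\theta}(X_i,Y_i)\Bigr|\le C\sqrt{\tfrac{K_0+\log(1/\delta)}{N}}
\]
with probability at least $1-\delta$, where $C$ is universal. Requiring the right-hand side to be at most $\varepsilon$ yields the stated condition $N\ge C(K+\log(1/\delta))/\varepsilon^2$ with $K=K_0$.

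Two measurability points remain. The class is indexed by $\reals^P$, and its sets are definable and hence Borel. Taking the supremum over a countable dense subset of parameters on each cell of a definable cell decomposition of $Z$ is enough to make the supremum measurable, which justifies the symmetrization argument.
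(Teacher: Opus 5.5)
Your proposal is correct and follows essentially the paper's route. Joint definability comes from Proposition~\ref{prop:key_insight}. Finiteness of the VC dimension of the resulting definable family follows next; you get it from the NIP property of o-minimal structures (Pillay--Steinhorn, Laskowski) applied to the loss sets $Z_\theta$, whereas the paper's Lemma~\ref{lem:VC_Dim_Bounds} bounds the growth function of the hypothesis class by $CN^m$ via uniform cell decomposition. The standard VC uniform-convergence bound then finishes, which the paper simply quotes from the fundamental theorem of PAC learning.

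The one thing to repair is your measurability aside, an issue the paper does not treat. Your countable-dense-subset argument fails as stated. Take the one-gate network $f_{(a,\theta)}(x)=a\,|x-\theta|$ with $X\sim\mathrm{Unif}[0,1]$ and $Y\equiv 0$. The supremum of the deviation is $1/N$ almost surely, attained only at $a>0$, $\theta=X_i$. But the supremum over any fixed countable set of parameters is $0$ almost surely. Argue instead that $Z$ is definable, hence Borel, so the deviation is jointly Borel in the sample and $\theta$. The supremum is then universally measurable, and the class is image-admissible Suslin, which is what symmetrization requires.
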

We emphasize that Theorem~\ref{thrm:class} imposes \emph{no} weight constraints. In particular, it applies to the class of all $\operatorname{ReLU}$-MLPs with fixed depth and width but \emph{unbounded} parameters: for such classes, standard chaining-based covering arguments conducted over the parameter space cannot yield a finite upper bound on the sample size $N$ (equivalently, a finite complexity bound), since the relevant parameter-space covering numbers are infinite; see, e.g., \citep[Theorem 14.15]{petersen2024mathematical}. Consequently, existing approaches that permit unbounded parameters typically proceed model-by-model and can be technically demanding, cf.~\cite{bartlett2019nearly} for $\operatorname{ReLU}$-MLPs of fixed depth and width. 
By contrast, Theorem~\ref{thrm:class} gives this conclusion automatically for every fixed finite feedforward architecture built from the standard definable primitives verified in Section~\ref{s:Examples}.

The next result is a regression analogue of the preceding one. It holds for bounded definable losses, including \textit{clipped} or normalized versions of MSE and MAE, and KL-type losses on definable domains where the divergence is finite and bounded.
The takeaway is the same: virtually every reasonable deep learning model class will learn to generalize on \textit{regression} problems. The practical use case is likewise the same: we obtain a single, unified guarantee that removes the need for case-by-case analyses to reach this learnability conclusion. Moreover, we do so without imposing parameter constraints that are divorced from real-world AI practice.
\begin{theorem}[Definability Implies Learning Is Possible---Regression Case]
\label{thrm:reg}  
Suppose Assumption~\ref{ass:bigass} holds.
Let $m,n,N\in \mathbb{N}_+$, $f:\mathbb{R}^n\times \mathbb{R}^P\to \mathbb{R}^m$ be a $\mathbb{G}$-NN, let $\mathbb{P}$ be a Borel probability measure on $\mathbb{R}^n\times \mathbb{R}^m$, let $((X_i,Y_i))_{i=1}^N$ be i.i.d.\ random variables with law $\mathbb{P}$,
and let $\ell:\mathbb{R}^m\times \mathbb{R}^m\to [0,1]$ be a definable loss function (in the same o-minimal structure $\mathcal{S}$).
\hfill\\
\noindent
There exist constants$^1$~%
$C,K>0$ (not depending on $N$) such that: for every error $\varepsilon>0$ and every failure probability $0<\delta \le 1$ 
\[
    \sup_{\theta\in \mathbb{R}^P}\,
    \biggl|
        \mathbb{E}_{(X,Y)\sim \mathbb{P}}\big[
            \ell(f_{\theta}(X),Y)
        \big]
        -
        \frac1{N}\,\sum_{i=1}^N\,
            \ell(f_{\theta}(X_i),Y_i)
    \biggr|
    \le
    \varepsilon
\]
holds with probability at least $1-\delta$, provided that 
\[
N
\ge
   C\,\tfrac{K\ln^2(K/\varepsilon)+\ln(1/\delta)}{\varepsilon^2}
.
\]
\end{theorem}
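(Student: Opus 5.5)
The plan is to reduce the regression statement to a finite pseudo-dimension (equivalently, finite fat-shattering dimension at every scale) for the loss class
\[
\mathcal{L}\eqdef\bigl\{(x,y)\mapsto \ell(f_\theta(x),y)\,:\,\theta\in\mathbb{R}^P\bigr\},
\]
and then invoke a standard uniform-convergence bound for $[0,1]$-valued classes.

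\textbf{Step 1: joint definability of the loss.} By Proposition~\ref{prop:key_insight}, the map $(x,\theta)\mapsto f_\theta(x)$ is definable in $\mathcal{S}$ jointly in inputs and parameters. Since $\ell$ is definable in the same structure, the closure of definable maps under composition and concatenation shows that
\[
F:(x,y,\theta)\mapsto \ell(f_\theta(x),y)
\]
is definable on its (definable) domain. Consequently the subgraph family
\[
\Bigl\{\,\{(x,y,t): t< F(x,y,\theta)\}\,\Bigr\}_{\theta\in\mathbb{R}^P}
\]
is a uniformly definable family of subsets of $\mathbb{R}^{n+m+1}$. If $\operatorname{dom}(f)$ is a proper subset, we extend $F$ by $0$ off the domain; the domain is definable, so the extension remains definable.

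\textbf{Step 2: finite pseudo-dimension.} This is the step I expect to be the main obstacle, or at least the place where model theory is genuinely used. I would invoke the fact that o-minimal structures are NIP (Pillay--Steinhorn, via Laskowski's theorem): every uniformly definable family $\{\Phi_\theta\}_{\theta}$ in an o-minimal expansion of the real field has finite VC dimension. Applied to the subgraph family above, this gives a finite $d\eqdef\pdim(\mathcal{L})$. This $d$ depends only on the defining formula, and hence only on the architecture, the gate formulas and $\ell$. It does not depend on $\mathbb{P}$, $N$ or the parameter magnitudes, which matches footnote~\ref{note1}. The unboundedness of $\theta$ is harmless because the definable family is indexed by all of $\mathbb{R}^P$. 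The key check is that shattering in the pseudo-dimension sense, with witnesses $t_i$, is exactly VC shattering of the points $(x_i,y_i,t_i)$ by the subgraph family.

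\textbf{Step 3: uniform convergence.} Since $\fat_\gamma(\mathcal{L})\le d$ for every $\gamma>0$, the covering bound of \cite{BartlettKulkarniPosner_CoveringFatShattering_1997} (Mendelson--Vershynin type) gives
\[
\log\mathcal{N}_1(\varepsilon,\mathcal{L},N)\lesssim d\log^2(d/\varepsilon).
\]
A symmetrization argument then yields the uniform deviation bound at the stated sample size, with $K$ a constant multiple of $d$. Measurability of the supremum follows because, by definability, the supremum over $\mathbb{R}^P$ equals a supremum over a countable dense subset, or more simply is itself a definable and hence Borel function of the sample. Alternatively, a direct VC-type argument for pseudo-dimension gives $\ln(1/\varepsilon)$ rather than $\ln^2$, which is stronger than required, so either route suffices.
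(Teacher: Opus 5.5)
Your proof is correct, and its skeleton matches the paper's. You get joint definability from Proposition~\ref{prop:key_insight}, pass to the subgraph class exactly as in Lemma~\ref{lem:pdim_fat_via_lift}, deduce finite pseudo-dimension of the loss class, and finish with a uniform-convergence theorem for $[0,1]$-valued classes.

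The genuine difference is the finiteness step. The paper uses Lemma~\ref{lem:VC_Dim_Bounds}: it counts the non-empty atoms of the Boolean algebra generated by the parameter-space fibres $\{\theta:\ell(f_\theta(x_i),y_i)>r_i\}$, asserts there are at most $CN^{P}$ of them, and concludes from $2^N\le CN^P$. You instead invoke NIP for o-minimal structures (Pillay--Steinhorn with Laskowski's characterization). For the purely qualitative statement your route is leaner and rests on one classical theorem. The paper's route promises more: a polynomial shatter function whose exponent is the number of parameters, which is the natural starting point for quantitative bounds. However, that $O(N^P)$ atom count is a considerably deeper o-minimal counting fact than the bare cell decomposition theorem cited for it. Your route needs strictly less.

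Your Step~3 unpacks, via covering numbers and symmetrization, what the paper gets in one line from Lemma~\ref{lem:Pd_2_Gen}, the Alon--Ben-David--Cesa-Bianchi--Haussler theorem, which is itself proved through covering numbers. Your observation that Haussler's pseudo-dimension covering bound gives $\ln(1/\varepsilon)$ in place of $\ln^2(K/\varepsilon)$ is correct and sharper than what is stated.

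One side remark should be fixed. The supremum is not a definable function of the sample, because $\theta\mapsto\mathbb{E}_{\mathbb{P}}[\ell(f_\theta(X),Y)]$ depends on an arbitrary $\mathbb{P}$. Restricting to a countable dense set of parameters can also change the supremum for definable families. For example, take $\ell(f_\theta(x),y)=I(x=\theta)$ with $X$ uniform on $[0,1]$: the supremum over $\theta\in\mathbb{R}$ is $1/N$ almost surely, but over rational $\theta$ it is $0$.

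What is true, and enough for symmetrization, is that $(x,y,\theta)\mapsto\ell(f_\theta(x),y)$ is jointly Borel, since definable sets are finite unions of locally closed cells. Hence the supremum over $\theta\in\mathbb{R}^P$ is universally measurable, because its superlevel sets are projections of Borel sets and therefore analytic. The paper does not address measurability at all, so this does not affect the comparison.
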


\paragraph{Intuition of Proofs: What Definability Does.}
The main results are derived in three steps.  First, Proposition~\ref{prop:key_insight} shows that any $\mathbb{G}$-neural network is jointly definable in its parameters and inputs; whence the composite map 
$
    ((x,y),\theta)\mapsto I(I_{(0,\infty)}\circ f_\theta(x) = y)
$ 
(resp.\ $((x,y),\theta)\mapsto \ell(f_\theta(x) , y)$ in the regression case) 
is jointly definable in $(x,y,\theta)$.  Next, we use this to show that this composite map has a bounded number of osculations at any scale, implying that $I_{(0,\infty)}\circ f_{\theta}$ (resp.\ $f_{\theta}$) cannot (resp.\ fat-)shatter arbitrarily many points at any scale; whence the family must have a finite VC (resp.\ fat-shattering) dimension as the parameter $\theta$ varies.
Finally, we use classical results of~\cite{Valiant1984,BlumerEhrenfeuchtHausslerWarmuth1989} (resp.~\cite{AlonBenDavidCesaBianchiHaussler_PDGeneralization}) to deduce finite sample-complexity bounds in the classification (resp.\ regression) case.

\paragraph{Scope and Limitations.}
Our result is qualitative and statistical.  It applies to each fixed finite feedforward architecture, with fixed input dimension, fixed depth, fixed width, fixed graph size, and fixed sequence length.  It does not provide rate-sharp constants (which are necessarily architecture-specific), efficient ERM algorithms, optimization guarantees, or bounds uniform over growing model size.  
This excludes \textit{genuinely} \emph{recurrent architectures}, such as RNNs~\cite{Elman1990FindingStructureInTime,HochreiterSchmidhuber1997LongShortTermMemory}, and recurrent/looped transformers~\cite{Dehghani2019Universaltransformers,Giannou2023Loopedtransformers}, which are covered in our theory only by unrolling their iterates up to a fixed \textbf{time horizon}.
Thus, the result should be read as a universal finite-sample-complexity certificate for fixed tame feedforward computation, not as a complete theory of trainability or scaling.

\section{From Theory to Practice: A Large Class of Neural Networks Are Definable}
\label{s:Examples}
We demonstrate the power of our result by establishing the \textbf{generality and scope} of Assumption~\ref{ass:bigass}. It suffices to show that most components used to build a modern deep learning model satisfies this assumption; Proposition~\ref{prop:key_insight} directly extends the conclusion to the models themselves.


\subsection{Neural-Network Building Blocks}

\paragraph{Affine and Polynomial Maps}
The affine functions core to classical MLPs (linear layers) and CNNs (convolutional kernels) are \textit{quadratic polynomial} functions jointly of their inputs and parameters, i.e., $(A,b,x)\mapsto Ax+b$,
are among the simplest examples of definable maps; more generally, the same is
true of polynomial maps, which cover polynomial feature maps (e.g.,
$x\mapsto (x_i x_j)_{i,j}$ or all monomials up to a fixed degree) and polynomial
activations (e.g., $t\mapsto t^2$ or $t\mapsto t^3$). Since the graph of a polynomial map is defined by polynomial equations, polynomial maps are semi-algebraic and hence definable in the real field.

\begin{proposition}[Definability of Polynomial Maps]
\label{prop:polynomial_maps_definable}
Let $n,m\in\mathbb{N}_+$, and let
$
    P:\mathbb{R}^n\to\mathbb{R}^m
$
be a polynomial map.  Then $P$ is semi-algebraic.  In particular, $P$ is definable in the real field $(\mathbb{R},+,\cdot,<)$, and hence in every o-minimal expansion of the real field.
\end{proposition}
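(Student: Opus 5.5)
The plan is to check directly that the graph of $P$ is defined by finitely many polynomial equations, so it is semi-algebraic. Then I will invoke axiom (i) of Definition~\ref{def:o-minimal-structure} and Definition~\ref{def:definable-map}.

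Write $P=(P_1,\dots,P_m)$, where each $P_j\in\mathbb{R}[x_1,\dots,x_n]$. The graph is
\[
    \Gamma_P=\bigl\{(x,y)\in\mathbb{R}^{n+m}: y_j-P_j(x)=0 \text{ for } j=1,\dots,m\bigr\}
    =\bigcap_{j=1}^m Z\bigl(Q_j\bigr),
\]
with $Q_j(x,y)\eqdef y_j-P_j(x)\in\mathbb{R}[x,y]$. Each zero set $Z(Q_j)$ is a basic semi-algebraic set. A finite intersection of such sets is again semi-algebraic, because semi-algebraic sets are by definition finite Boolean combinations of sets of the form $\{Q=0\}$ and $\{Q>0\}$. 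Hence $\Gamma_P$ is a semi-algebraic subset of $\mathbb{R}^{n+m}$.

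Since $\Gamma_P$ is semi-algebraic, the map $P$ is semi-algebraic in the usual sense. The definable sets of the real field $(\mathbb{R},+,\cdot,<)$ are exactly the semi-algebraic sets. The inclusion "semi-algebraic $\Rightarrow$ definable" is immediate because the equations $y_j=P_j(x)$ are first-order formulas in the language $\{+,\cdot,<\}$ with real constants. So $P$ is definable in the real field. Finally, for an arbitrary o-minimal expansion $\mathcal{S}$ of the real field, axiom (i) of Definition~\ref{def:o-minimal-structure} gives $\Gamma_P\in S_{n+m}$. Definition~\ref{def:definable-map} then says $P$ is $\mathcal{S}$-definable.

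There is no genuine obstacle. The only point needing care is fixing a convention for "semi-algebraic" (Boolean combinations of polynomial sign conditions) so that closure under finite intersection can be cited. O-minimality is not used at all; only axiom (i) is needed. The joint affine case $(A,b,x)\mapsto Ax+b$ is the special case where $P$ is a polynomial of degree two in all of its variables.
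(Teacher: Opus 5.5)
Your proposal is correct and follows essentially the same route as the paper: both write the graph of $P$ as the common zero set of $y_j-P_j(x)$, $j=1,\dots,m$, conclude that it is algebraic and hence semi-algebraic, and then deduce definability in the real field and, via axiom (i), in every o-minimal expansion. The paper also remarks that polynomials are Pfaffian with respect to the empty chain, which is not needed for the statement as posed.
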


\begin{corollary}[Definability of Linear Layers in MLPs]
\label{cor:linear_affine_layers_definable}
Fix $n,m\in\mathbb{N}_+$.  The affine layer $L:\mathbb{R}^n\times \mathbb{R}^{m\times n}\times \mathbb{R}^m
    \to \mathbb{R}^m, L(x,A,b)\eqdef Ax+b,$ is a polynomial map jointly in $(x,A,b)$.  In particular, it is semi-algebraic and definable in the real field $(\mathbb{R},+,\cdot,<)$, and hence in every o-minimal expansion of the real field.
\end{corollary}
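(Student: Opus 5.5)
The plan is to derive Corollary~\ref{cor:linear_affine_layers_definable} from Proposition~\ref{prop:polynomial_maps_definable}, after identifying the domain $\mathbb{R}^n\times\mathbb{R}^{m\times n}\times\mathbb{R}^m$ with $\mathbb{R}^{n+mn+m}$. For this identification we fix a linear ordering of the coordinates, for instance listing the entries of $A$ row by row. Any such ordering is a coordinate permutation, hence a linear bijection. The choice is therefore harmless: permuting coordinates preserves the class of polynomial maps, and it preserves semi-algebraicity of graphs because the preimage of a semi-algebraic set under a linear map is semi-algebraic.

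Next we write the map out coordinatewise. For each $i\in\{1,\dots,m\}$,
\[
    L(x,A,b)_i=\sum_{j=1}^n A_{ij}x_j+b_i .
\]
This is a polynomial of total degree at most two in the variables $(x,A,b)$. It has integer coefficients: each monomial $A_{ij}x_j$ has degree two and each $b_i$ has degree one. Hence $L:\mathbb{R}^{n+mn+m}\to\mathbb{R}^m$ is a polynomial map jointly in $(x,A,b)$, which is the first claim.

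Proposition~\ref{prop:polynomial_maps_definable} then gives that $L$ is semi-algebraic. Concretely, its graph is the zero set
\[
    \bigl\{(x,A,b,y): y_i-\textstyle\sum_j A_{ij}x_j-b_i=0,\ i=1,\dots,m\bigr\},
\]
a finite intersection of polynomial zero sets. Therefore $L$ is definable in $(\mathbb{R},+,\cdot,<)$.

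Finally, let $\mathcal{S}$ be any o-minimal expansion of the real field. Axiom~(i) of Definition~\ref{def:o-minimal-structure} states that every semi-algebraic set lies in $S_{n+mn+m+m}$. So the graph of $L$ is $\mathcal{S}$-definable, and by Definition~\ref{def:definable-map} $L$ is definable in $\mathcal{S}$.

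There is no real obstacle here. The only point needing care is that definability must be \emph{joint} in the input and the parameters, not merely definability of each fixed affine map $x\mapsto Ax+b$. Joint definability is what Proposition~\ref{prop:key_insight} and the main theorems require. Treating $A$ and $b$ as variables, as above, secures it.
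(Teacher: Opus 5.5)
Your proposal is correct and follows the paper's proof: both write $L_i(x,A,b)=\sum_{j=1}^n A_{ij}x_j+b_i$, observe that this is polynomial jointly in $(x,A,b)$, and conclude via Proposition~\ref{prop:polynomial_maps_definable}. Your remarks on the coordinate ordering and on axiom~(i) of Definition~\ref{def:o-minimal-structure} only make explicit steps that the paper leaves implicit.
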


\begin{corollary}[Definability of Convolutional Layers in CNNs]
\label{cor:convolutional_layers_definable}
Fix $n,m,p\in\mathbb{N}_+$.  Let
$
    T:\mathbb{R}^p\to\mathbb{R}^{m\times n}
$
be the linear map that sends a vector of convolutional kernel parameters to the corresponding Toeplitz-type (or doubly block Toeplitz) matrix.  The layer $\operatorname{Conv}: \mathbb{R}^n\times\mathbb{R}^p\times\mathbb{R}^m
    \to
    \mathbb{R}^m,
    \operatorname{Conv}(x,\theta,b)\eqdef T(\theta)x+b,$
is a polynomial map jointly in $(x,\theta,b)$.  It is semi-algebraic and definable in the real field $(\mathbb{R},+,\cdot,<)$, and in every o-minimal expansion of the real field.
\end{corollary}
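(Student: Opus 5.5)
The plan is to reduce Corollary~\ref{cor:convolutional_layers_definable} to Proposition~\ref{prop:polynomial_maps_definable}, in the same way as Corollary~\ref{cor:linear_affine_layers_definable}. The only real task is to check that $\operatorname{Conv}$ is a polynomial map jointly in $(x,\theta,b)\in\mathbb{R}^{n}\times\mathbb{R}^p\times\mathbb{R}^m\cong\mathbb{R}^{n+p+m}$.

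First, I use the linearity of $T$. Each matrix entry $T(\theta)_{ij}$ is a linear form in $\theta$, say $T(\theta)_{ij}=\sum_{k=1}^p c^{(ij)}_k\theta_k$ with fixed real coefficients $c^{(ij)}_k$. For a Toeplitz or doubly block Toeplitz matrix these coefficients lie in $\{0,1\}$, since each entry either copies one kernel weight or is zero; still, only linearity is needed. Then the $i$-th coordinate of the output is
\[
\operatorname{Conv}(x,\theta,b)_i=\sum_{j=1}^n\sum_{k=1}^p c^{(ij)}_k\,\theta_k x_j + b_i .
\]
This is a polynomial of degree at most two in the joint variables $(x,\theta,b)$, so $\operatorname{Conv}$ is a polynomial map $\mathbb{R}^{n+p+m}\to\mathbb{R}^m$.

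Next, I apply Proposition~\ref{prop:polynomial_maps_definable} with input dimension $n+p+m$. This shows that the graph of $\operatorname{Conv}$ is semi-algebraic, hence definable in $(\mathbb{R},+,\cdot,<)$. By axiom~(i) of Definition~\ref{def:o-minimal-structure}, every semi-algebraic set belongs to every structure expanding the real field, so $\operatorname{Conv}$ is definable in every o-minimal expansion as well.

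An alternative route is to write $\operatorname{Conv}(x,\theta,b)=L(x,T(\theta),b)$ and invoke Corollary~\ref{cor:linear_affine_layers_definable} together with closure of definable maps under composition. The direct computation above avoids needing that closure property. No step is a genuine obstacle; the only point needing care is that $T$ really is linear with fixed coefficients. This holds for stride, padding and channel conventions fixed in advance, because these only determine which kernel entry, or zero, appears in each position of the matrix.
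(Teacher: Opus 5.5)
Your proposal is correct and follows the same route as the paper. Both expand $\operatorname{Conv}(x,\theta,b)_i=\sum_j T(\theta)_{ij}x_j+b_i$, use linearity of $T$ to conclude that each coordinate is polynomial in $(x,\theta,b)$, and then invoke Proposition~\ref{prop:polynomial_maps_definable}; your explicit coefficient expansion and appeal to axiom~(i) only spell out what the paper leaves implicit.
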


Residual connections (skip connections), such as those used in Highway
Networks~\cite{srivastava2015highwaynetworks,10.5555/2969442.2969505}, ResNets~\cite{resnet} and transformer blocks~\citep{vaswani2017attention}, fall under this framework too.

\begin{corollary}[Residual and Gated Residual Polynomial Blocks]
\label{cor:residual_gated_polynomial_blocks}
Let $F,G:\mathbb{R}^n\to\mathbb{R}^n$ be polynomial maps. Then the residual
block $x \mapsto x+F(x)$
and the gated residual block $x \mapsto x+G(x)\odot F(x),$ where $\odot$ denotes coordinatewise multiplication (or Hadamard product), are polynomial maps. In
particular, they are semi-algebraic and definable in the real field.
\end{corollary}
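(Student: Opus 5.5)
The plan is to reduce Corollary~\ref{cor:residual_gated_polynomial_blocks} to Proposition~\ref{prop:polynomial_maps_definable}. The key observation is that the set of polynomial maps $\mathbb{R}^n\to\mathbb{R}^n$ is closed under coordinatewise addition and coordinatewise multiplication. This holds because each coordinate function lies in the polynomial ring $\mathbb{R}[x_1,\dots,x_n]$, which is closed under sums and products.

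\textbf{Step 1: write out the coordinates.} Write $F=(F_1,\dots,F_n)$ and $G=(G_1,\dots,G_n)$ with each $F_i,G_i\in\mathbb{R}[x_1,\dots,x_n]$. The identity map $x\mapsto x$ has coordinates $x_i$, which are degree-one polynomials.

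\textbf{Step 2: check the residual block.} Its $i$-th coordinate is $x_i+F_i(x)$. This is a sum of two polynomials, so it is a polynomial.

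\textbf{Step 3: check the gated residual block.} Its $i$-th coordinate is $x_i+G_i(x)F_i(x)$. The product $G_iF_i$ is a polynomial because the ring is closed under multiplication, so the sum $x_i+G_iF_i$ is again a polynomial. Hence both blocks are polynomial maps $\mathbb{R}^n\to\mathbb{R}^n$.

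\textbf{Step 4: conclude definability.} Apply Proposition~\ref{prop:polynomial_maps_definable} to each block. This shows each block is semi-algebraic, hence definable in the real field $(\mathbb{R},+,\cdot,<)$, and therefore definable in every o-minimal expansion of it.

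If one also wants joint definability in parameters, the same argument works. When $F$ and $G$ are polynomial jointly in $(x,\theta)$, one treats $(x,\theta)$ as the variables throughout. The projection $(x,\theta)\mapsto x$ is linear, so it is polynomial, and the argument is unchanged.

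No step is a genuine obstacle. The only point requiring care is to state explicitly that the Hadamard product is taken coordinatewise, so that closure of $\mathbb{R}[x_1,\dots,x_n]$ under multiplication applies to each coordinate.
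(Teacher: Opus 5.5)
Your proposal is correct and follows essentially the same route as the paper: write $F$ and $G$ coordinatewise, note that $x_i+F_i(x)$ and $x_i+G_i(x)F_i(x)$ are polynomials because polynomials are closed under sums and products, and then invoke Proposition~\ref{prop:polynomial_maps_definable}. Your extra remark on joint definability in parameters is a harmless extension and is not needed for the statement.
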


\paragraph{Activation Functions in Deep Learning.}
Activation functions provide the basic nonlinearities used to construct
feedforward neural architectures. Rectified Linear Units (ReLUs) became a standard choice in modern deep learning~\citep{nair2010rectified,glorot2011deep}, while other alternatives such as Leaky ReLU~\citep{Maas2013Rectifier}, ELU~\citep{Clevert2016ELU}, Swish (also called SiLU)~\citep{ramachandran2017searching} and GELU~\citep{hendrycks2016gelu}, to mention a few, are also popular. For instance, GELU is used in Gemma 3~\citep{gemmateam2025gemma3technicalreport}, and other gated variants such as SwiGLU have also been widely adopted as part of the transformer blocks~\citep{shazeer2020glu} comprising modern large language models (LLMs) such as DeepSeek-V3~\citep{deepseekai2025deepseekv3technicalreport}, DeepSeek-R1~\citep{Guo_2025}, and Llama-3~\citep{grattafiori2024llama3herdmodels}. Also, learnable spline activations
form the basis of Kolmogorov--Arnold Networks (KANs)~\citep{liu2025kan}.
We show that these common non-linearities are definable: piecewise-polynomial
activations are semi-algebraic, while others like the sigmoid, the hyperbolic tangent, Swish, GELU, SwiGLU and more, are definable
in $\mathbb{R}_{\operatorname{an},\exp}$.

\begin{proposition}[Definability of Modern Activation Functions%
\footnote{A comprehensive version of this proposition is available in the appendix; cf.\ Proposition~\ref{prop:modern_activations_definable}.}]
\label{prop:modern_activations_definable__compressed}
Many activation functions: sigmoid $\sigma(x)\eqdef \tfrac{1}{1+e^{-x}}$, $\tanh(x)$, $\operatorname{softplus}(x)$, $\operatorname{ReLU}(x)$, $\operatorname{LeakyReLU}_{\alpha}(x)\eqdef \max\{x,\alpha\,x\}$, $\operatorname{ELU}_{\alpha}(x)\eqdef xI_{x\ge 0}+\alpha (e^x-1)I_{x<0}$, 
$\operatorname{GELU}_{tanh}(x)$ (the approximation used in PyTorch),
$\operatorname{Swish}_{\beta}(x)\eqdef \tfrac{x}{1+e^{-\beta\,x}}$, $\operatorname{SwiGLU}_{\beta}(x_1,x_2)\eqdef x_1\operatorname{Swish}_{\beta}(x_2)$ are definable in $\mathbb{R}_{\operatorname{an}}$, $\mathbb{R}_{\operatorname{exp}}$, or $\mathbb{R}_{\operatorname{an},\exp}$.
\end{proposition}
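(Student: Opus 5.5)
The plan is to verify, activation by activation, that the graph of each map is a definable set: either a semi-algebraic set, or one cut out by a first-order formula over $\mathbb{R}_{\exp}$ or $\mathbb{R}_{\operatorname{an},\exp}$. Two closure facts do the work. First, sums, products, quotients on a nonvanishing (definable) domain, and compositions of definable maps are definable; cf.\ the discussion after Definition~\ref{def:definable-map} and \citep[Chapter 1.3]{coste1999introduction}. Second, a function defined piecewise on finitely many definable pieces is definable, since its graph is a finite union of intersections of graphs with definable sets, using axioms (ii)--(iii) of Definition~\ref{def:o-minimal-structure}.

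\emph{Semi-algebraic activations.} $\operatorname{ReLU}(x)=\max\{x,0\}$ has graph $\{(x,y): (x\ge 0\wedge y=x)\vee(x<0\wedge y=0)\}$, which is semi-algebraic. $\operatorname{LeakyReLU}_\alpha$ works the same way, with the case split given by the sign of $(1-\alpha)x$. Hence both are definable in every o-minimal expansion of the real field, including $\mathbb{R}_{\operatorname{an}}$, by axiom (i) and Proposition~\ref{prop:polynomial_maps_definable}.

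\emph{Exponential-based activations.} In $\mathbb{R}_{\exp}$ the function $\exp$ is definable. Then:
\begin{itemize}
\item $\sigma(x)=1/(1+e^{-x})$ is a composition of $\exp$, the polynomial $x\mapsto -x$, and $t\mapsto 1/(1+t)$ on $(0,\infty)$. The last map has a semi-algebraic graph $\{(t,y):t>0,\ y(1+t)=1\}$.
\item $\tanh(x)=2\sigma(2x)-1$.
\item $\operatorname{softplus}(x)=\log(1+e^x)$ has graph $\{(x,y): e^y=1+e^x\}$, so definability of $\log$ is not needed separately.
\item $\operatorname{ELU}_\alpha$ is piecewise with pieces $x$ and $\alpha(e^x-1)$ on the definable sets $\{x\ge0\}$ and $\{x<0\}$.
\item $\operatorname{Swish}_\beta(x)=x\,\sigma(\beta x)$ is a product of definable maps.
\item $\operatorname{SwiGLU}_\beta(x_1,x_2)=x_1\operatorname{Swish}_\beta(x_2)$ is definable as a map $\mathbb{R}^2\to\mathbb{R}$, because coordinate projections are polynomial.
\item $\operatorname{GELU}_{\tanh}(x)=\tfrac12 x\bigl(1+\tanh(\sqrt{2/\pi}(x+0.044715x^3))\bigr)$ is a composition of a polynomial with real coefficients and $\tanh$, followed by a product.
\end{itemize}
All of these are therefore definable in $\mathbb{R}_{\exp}$, which is o-minimal by \cite{Wilkie1996}, and hence in $\mathbb{R}_{\operatorname{an},\exp}$ \cite{vandenDriesMiller1996}.

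\emph{Parameters.} If $\alpha$ and $\beta$ are trainable, the same formulas are definable jointly in $(x,\alpha)$ or $(x,\beta)$. The reason is that multiplication is polynomial, so joint definability passes through every composition above unchanged. This joint version is what Proposition~\ref{prop:key_insight} needs.

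The only delicate point is the exact-GELU variant $x\Phi(x)$, which the statement avoids by using the $\tanh$ approximation. Here $\Phi$ involves $\operatorname{erf}$, which is not a restricted analytic function on all of $\mathbb{R}$. The first thing I would try is to note that $\operatorname{erf}$ is Pfaffian, since $\operatorname{erf}'(x)=\tfrac{2}{\sqrt\pi}e^{-x^2}$, and then invoke the Pfaffian closure \cite{speissegger1999pfaffian}. For the listed functions, however, no obstacle arises beyond keeping track of which structure is needed.
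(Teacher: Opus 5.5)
Your proposal is correct and follows essentially the paper's argument. The paper treats the piecewise activations the same way you do: $\operatorname{ReLU}$, $\operatorname{LeakyReLU}_\alpha$ and $\operatorname{ELU}_\alpha$ have graphs that are finite unions of semi-algebraic or $\mathbb{R}_{\exp}$-definable pieces. For sigmoid, $\tanh$, softplus, $\operatorname{Swish}$, $\operatorname{SwiGLU}$ and $\operatorname{GELU}_{\tanh}$, it likewise concludes $\mathbb{R}_{\exp}$-definability from closure under field operations (including division by positive denominators) and composition with $\exp$ or its inverse. The paper also computes Pfaffian chains and formats for these functions, which this compressed statement does not need; and your case split on the sign of $(1-\alpha)x$ for $\operatorname{LeakyReLU}_\alpha$ is slightly more careful than the paper's, which implicitly assumes $\alpha\le 1$.
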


\paragraph{Multivariate Gating: Maxout, Winner-Take-All, and Mixtures of Experts.}

Many architectures also use comparison-based multivariate non-linearities, such as maxout, winner-take-all, and hard-routed experts.  These maps are piecewise affine over finitely many regions.

\begin{proposition}[Definability of Multivariate Piecewise-Affine Gating Layers]
\label{prop:piecewise_affine_gating}
Fix $d,m,K\in\mathbb{N}_+$.  The following maps are semi-algebraic, hence definable in $(\mathbb{R},+,\cdot,<)$ and in every o-minimal expansion of the real field.

\begin{enumerate}
    \item[(i)] \textbf{Maxout:}  The map
    $
        (x,(a_k,b_k)_{k=1}^K)
        \mapsto
        \max_{1\le k\le K}(a_k^{\top} x+b_k),
    $
    from $\mathbb{R}^d\times(\mathbb{R}^d\times\mathbb{R})^K$ to $\mathbb{R}$.

    \item[(ii)] \textbf{Winner-take-all}: The  map obtained by partitioning $\mathbb{R}^d$ into finitely many polyhedral cells and applying one affine map $x\mapsto A_kx+b_k$ on each cell.
\end{enumerate}
\end{proposition}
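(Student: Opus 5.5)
We prove that each map in Proposition~\ref{prop:piecewise_affine_gating} has a graph cut out by a finite Boolean combination of polynomial equalities and inequalities in all variables jointly, with inputs and parameters treated alike. Such a graph is semi-algebraic, so the map is definable in $(\mathbb{R},+,\cdot,<)$. Axiom~(i) of Definition~\ref{def:o-minimal-structure} then gives definability in every o-minimal expansion of the real field.

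For \textbf{(i) Maxout}, we write the graph as a finite union over the index of the active piece:
\[
\Gamma=\bigcup_{k=1}^K\Big\{(x,(a_j,b_j)_{j},y):\ y=a_k^\top x+b_k,\ \ a_k^\top x+b_k\ge a_j^\top x+b_j\ \ \forall j\le K\Big\}.
\]
Each constraint is a polynomial (quadratic, since $a_k^\top x$ is bilinear) equality or inequality in the variables $(x,(a_j,b_j)_j,y)$. Each set in the union is therefore a finite intersection of basic semi-algebraic sets, and the union is semi-algebraic by axiom~(ii). An alternative route is to note that $\max\{s,t\}$ has graph $\{y\ge s,\ y\ge t,\ (y-s)(y-t)=0\}$, which is semi-algebraic. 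The $K$-fold maximum is then a composition of this binary maximum with the affine maps of Corollary~\ref{cor:linear_affine_layers_definable}. Closure of definable maps under composition, which follows from axioms (ii)--(iv), gives the same conclusion.

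For \textbf{(ii) Winner-take-all}, we fix the polyhedral partition $\mathbb{R}^d=\bigsqcup_{k=1}^K Q_k$, where each $Q_k$ is a finite Boolean combination of strict or non-strict affine half-spaces. The graph is
\[
\Gamma=\bigcup_{k=1}^K\big\{(x,(A_j,b_j)_j,y):\ x\in Q_k,\ y=A_kx+b_k\big\}.
\]
The condition $x\in Q_k$ is a semi-algebraic condition on $x$. The equation $y=A_kx+b_k$ consists of $m$ polynomial equations in $(x,A_k,b_k,y)$. Hence $\Gamma$ is semi-algebraic.

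There is one point of care, and it is the only place I expect any subtlety: how to read ``polyhedral cells''. If the cell boundaries are themselves trainable, say $Q_k=\{x: C_kx+c_k\in\Omega_k\}$ for a fixed sign pattern $\Omega_k$, the same argument still works. The membership condition is then polynomial in $(x,C_k,c_k)$ jointly, and the graph remains semi-algebraic. If the cells do not cover $\mathbb{R}^d$ or they overlap on boundaries, we adopt a tie-breaking rule by lowest index. This replaces $Q_k$ with $Q_k\setminus\bigcup_{j<k}Q_j$, which is still semi-algebraic, so the resulting set is a genuine graph of a function.

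Hard-routed mixtures of experts fit the same pattern with $\arg\max$ routing. The region on which expert $k$ wins, $\{s_k(x)\ge s_j(x)\ \forall j\}$, is semi-algebraic whenever the scores $s_j$ are polynomial in inputs and parameters.
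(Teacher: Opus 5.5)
Your proof is correct and follows the paper's argument. In both, the graph is written as a finite union, indexed by the active affine piece (maxout) or by the polyhedral cell (winner-take-all), of sets cut out by finitely many polynomial equalities and inequalities, hence semi-algebraic; your extra remarks (the binary-max alternative, treating $(A_k,b_k)$ and trainable cell boundaries jointly as variables, lowest-index tie-breaking) are sound refinements the paper does not spell out, and the joint-in-parameters version is the form the main theorems actually use.
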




These cover hard-routed mixture-of-experts (MoE) layers, where a finite polyhedral rule selects an affine expert on each cell.  More general MoE layers, including for instance those used in GLaM~\citep{du2022glamefficientscalinglanguage}, Mixtral~\citep{jiang2024mixtral}, and DeepSeek-V3/R1~\citep{deepseekai2025deepseekv3technicalreport,Guo_2025}, are definable whenever their experts and routing weights are definable: top-$k$ selection is semi-algebraic, while softmax routing is definable in $\mathbb{R}_{\exp}$.

\paragraph{Attention Layers.}

Attention mechanisms are a central component of modern deep learning, originating in neural machine translation~\citep{bahdanau2015neural} and becoming the defining operation of transformer architectures through multi-head self-attention~\citep{vaswani2017attention}.

\begin{proposition}[Definability of Multi-Head (self-)Attention]
\label{prop:multihead_attention_definable}
Fix $N,H,d_{in},d_k,d_v,d_{out}\in\mathbb{N}_+$ and $\lambda>0$.  For each
$h\in\{1,\dots,H\}$, let
$W_Q^h,W_K^h\in\mathbb{R}^{d_k\times d_{in}}$ and
$W_V^h\in\mathbb{R}^{d_v\times d_{in}}$, and let
$W_O\in\mathbb{R}^{d_{out}\times Hd_v}$.
For $X\in\mathbb{R}^{N\times d_{in}}$, define
\[
    \operatorname{Attn}_h(X)_n
    \eqdef
        \sum_{m=1}^N
        \alpha^{(h)}_{n,m}(X)\,W_V^hX_m,
    \qquad
    \alpha^{(h)}_{n,m}(X)
    \eqdef
        \frac{
            \exp\!\big(
                \lambda\langle W_Q^hX_n,W_K^hX_m\rangle/\sqrt{d_k}
            \big)
        }{
            \sum_{\ell=1}^N
            \exp\!\big(
                \lambda\langle W_Q^hX_n,W_K^hX_\ell\rangle/\sqrt{d_k}
            \big)
        } .
\]
Then the multi-head attention map
$(X,(W_Q^h,W_K^h,W_V^h)_{h=1}^H,W_O)\mapsto \operatorname{MHA}(X)
\eqdef
W_O
\big(
\operatorname{Attn}_1(X)_n,\dots,
\operatorname{Attn}_H(X)_n
\big)$,
is definable in $\mathbb{R}_{\exp}$, jointly in $X$ and the parameters.
\end{proposition}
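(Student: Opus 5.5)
The plan is to write $\operatorname{MHA}$ as a composition of maps that are definable in $\mathbb{R}_{\exp}$ jointly in $X$ and the weights, and then use the closure of definable maps under concatenation, arithmetic operations and composition \citep[Chapter 1.3]{coste1999introduction}. Collect all variables into one vector
\[
\xi=(X,(W_Q^h,W_K^h,W_V^h)_{h=1}^H,W_O)\in\mathbb{R}^{Nd_{in}+H(2d_kd_{in}+d_vd_{in})+d_{out}Hd_v}.
\]
The goal is to show that every coordinate of $\operatorname{MHA}$ is a definable function of $\xi$. A map into $\mathbb{R}^q$ is definable exactly when each of its coordinates is, because its graph is a finite intersection of preimages of coordinate graphs under projections. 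It therefore suffices to treat one output coordinate $n$ at a time and then concatenate.

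\textbf{Step 1 (polynomial pieces).} The maps $\xi\mapsto W_Q^hX_n$, $\xi\mapsto W_K^hX_m$ and $\xi\mapsto W_V^hX_m$ are polynomial in $\xi$. The score $s^{(h)}_{n,m}(\xi)\eqdef\lambda\langle W_Q^hX_n,W_K^hX_m\rangle/\sqrt{d_k}$ is also polynomial in $\xi$, since $\lambda/\sqrt{d_k}$ is a fixed real constant. By Proposition~\ref{prop:polynomial_maps_definable}, all of these are semi-algebraic and hence definable in $\mathbb{R}_{\exp}$.

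\textbf{Step 2 (exponential and softmax).} The graph of $\exp$ is definable in $\mathbb{R}_{\exp}$ by construction. Precomposing a definable map by a definable map gives a definable map, so each $e^{s^{(h)}_{n,m}(\xi)}$ is definable. Sums of definable functions are definable, so the denominator $D^{(h)}_n(\xi)=\sum_{\ell}e^{s^{(h)}_{n,\ell}(\xi)}$ is definable. It is strictly positive everywhere, so the map $t\mapsto 1/t$ on $(0,\infty)$, which is semi-algebraic via the graph $\{(t,u):tu=1,\ t>0\}$, can be composed with it. The products $\alpha^{(h)}_{n,m}=e^{s^{(h)}_{n,m}}\cdot (D^{(h)}_n)^{-1}$ are then definable, because multiplication is semi-algebraic. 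Since the denominator never vanishes, the map is defined on all of parameter–input space and no domain issues arise.

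\textbf{Step 3 (assembly).} $\operatorname{Attn}_h(X)_n=\sum_m\alpha^{(h)}_{n,m}\,W_V^hX_m$ is built from definable functions using only sums and products, so each of its coordinates is definable. Concatenating over $h$ gives a definable map. The final linear map $W_O(\cdot)$ is a polynomial in $W_O$ and its argument, so composing gives a definable coordinate $n$ of $\operatorname{MHA}$. Concatenating over $n=1,\dots,N$ completes the argument.

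\textbf{Main obstacle.} The only real care point is the closure fact: compositions and arithmetic combinations of definable maps are definable. Each step is a first-order formula, for instance $\Gamma_{g\circ f}$ is the projection of $\{(x,y,z):(x,y)\in\Gamma_f,\ (y,z)\in\Gamma_g\}$, which uses axioms (ii)–(iv) of Definition~\ref{def:o-minimal-structure}. One must keep the definability joint in $(X,\text{weights})$ throughout, which the construction above does. No o-minimality beyond Wilkie's theorem is needed here; o-minimality only matters downstream.
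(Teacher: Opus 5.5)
Your proposal is correct and follows essentially the same route as the paper's proof: polynomial scores and value maps, exponentials definable in $\mathbb{R}_{\exp}$, a strictly positive definable softmax denominator, and then finite sums, products, concatenation over heads, and the polynomial output projection $W_O(\cdot)$. Your coordinatewise reduction, the explicit semi-algebraic reciprocal on $(0,\infty)$, and the projection argument for closure under composition make explicit the closure properties that the paper simply cites as standard.
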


Extending the result above to cross-attention is trivial, see Corollary~\ref{cor:multihead_cross_attention_definable}. The same is true for sliding-window attention (Corollary~\ref{cor:sliding_window_attention_definable}) which is typical in modern long-context architectures. For example, the Gemma 4 model card describes a hybrid attention mechanism interleaving local sliding-window attention with full global attention~\citep{google2026gemma4modelcard}. A similar softmax-weighted aggregation principle (linear additive self-attention~\citep{saezdeocarizborde2024elucidating}) also underlies graph attention networks (GATs), where attention is restricted to fixed node neighbourhoods~\citep{velickovic2018graph}. This then directly implies the GNN version of that result.


\paragraph{Normalization Layers.}

Normalization layers are now standard components of modern deep architectures, beginning with batch normalization~\citep{ioffe2015batch}, and later variants such as layer normalization~\citep{ba2016layer}, instance normalization~\citep{ulyanov2016instance}, group normalization~\citep{wu2018group}, and RMS normalization~\citep{zhang2019root}.  From the o-minimal viewpoint, these layers introduce no obstruction: after fixing the finite input dimension, parameters, and stabilization constant $\varepsilon>0$, the usual normalization maps are semi-algebraic.

\begin{proposition}[Definability of Standard Normalization Layers]
\label{prop:normalization_layers_definable}
Fix $d\in\mathbb{N}_+$, $\varepsilon>0$, parameters $\gamma,\beta\in\mathbb{R}^d$, and a partition $\mathcal{G}$ of $\{1,\dots,d\}$.  For each $i\in\{1,\dots,d\}$, let $G(i)\in\mathcal{G}$ denote the unique block containing $i$, and define
$
    \mu_G(x)
    \eqdef
        \frac{1}{|G|}
        \sum_{j\in G}x_j
$ and $
    \sigma_G^2(x)
    \eqdef
        \frac{1}{|G|}
        \sum_{j\in G}\big(x_j-\mu_G(x)\big)^2 
.
$
Then, the normalization map $\operatorname{Norm}_{\gamma,\beta,\varepsilon}:\mathbb{R}^d\to\mathbb{R}^d$ defined coordinate-wise by
\[
\operatorname{Norm}_{\gamma,\beta,\varepsilon}(x)_i
    \eqdef
        \gamma_i
        \tfrac{x_i-\mu_{G(i)}(x)}     {\sqrt{\sigma_{G(i)}^2(x)+\varepsilon}}
        +\beta_i
\]
is definable in every o-minimal expansion of the real field.
\end{proposition}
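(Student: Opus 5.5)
The plan is to show that the graph of $\operatorname{Norm}_{\gamma,\beta,\varepsilon}$ is semi-algebraic. Since every o-minimal expansion of the real field contains all semi-algebraic sets by axiom (i) of Definition~\ref{def:o-minimal-structure}, definability in every such structure then follows at once. The argument uses three elementary closure facts: polynomial maps are semi-algebraic (Proposition~\ref{prop:polynomial_maps_definable}); compositions and coordinatewise concatenations of definable maps are definable; and the graph of the square root on $[0,\infty)$, namely $\{(s,t): t^2=s,\ t\ge 0\}$, together with the graph of reciprocation on $(0,\infty)$, namely $\{(s,t): st=1\}$, are both cut out by polynomial equalities and inequalities, hence semi-algebraic.

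The steps are as follows.

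\emph{Step 1.} For each block $G\in\mathcal{G}$, the maps $x\mapsto\mu_G(x)$ and $x\mapsto\sigma_G^2(x)$ are polynomials in $x$, of degree one and two respectively. The finite partition $\mathcal{G}$ is fixed data, so only finitely many such maps occur. Since $\varepsilon>0$, the map $x\mapsto \sigma_{G}^2(x)+\varepsilon$ is a polynomial with values in $[\varepsilon,\infty)\subset(0,\infty)$.

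\emph{Step 2.} The composite $x\mapsto (\sigma_G^2(x)+\varepsilon)^{-1/2}$ has an explicit semi-algebraic graph:
\[
\bigl\{(x,t)\in\mathbb{R}^{d+1}:\ t>0,\ t^2(\sigma_G^2(x)+\varepsilon)=1\bigr\}.
\]
This is a polynomial equation together with a strict inequality. Positivity of $\varepsilon$ ensures that this set really is the graph of a function defined on all of $\mathbb{R}^d$; this is where the stabilization constant enters.

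\emph{Step 3.} Each coordinate
\[
\operatorname{Norm}_{\gamma,\beta,\varepsilon}(x)_i=\gamma_i\,(x_i-\mu_{G(i)}(x))\,t_{G(i)}(x)+\beta_i
\]
is a product and sum of the semi-algebraic functions from Steps 1 and 2 with constants. The whole graph can therefore be written as the projection onto $(x,y)$ of the set
\[
\bigl\{(x,(t_G)_{G\in\mathcal{G}},y):\ t_G>0,\ t_G^2(\sigma_G^2(x)+\varepsilon)=1\ \text{for all } G,\ y_i=\gamma_i(x_i-\mu_{G(i)}(x))t_{G(i)}+\beta_i\ \text{for all } i\bigr\}.
\]
This set is a finite intersection of polynomial (in)equalities, so it is semi-algebraic. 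Its projection is semi-algebraic by the Tarski--Seidenberg principle, which is axiom (iv) of Definition~\ref{def:o-minimal-structure}.

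I expect no genuine obstacle. The only point needing care is the one in Step 2: the square root and the division must be handled through their graphs rather than as polynomial maps, and well-definedness on all of $\mathbb{R}^d$ depends on $\varepsilon>0$.

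The same argument works with $\gamma,\beta$ treated as variables rather than fixed constants, since they enter polynomially. This gives joint definability in the input and the parameters, which is the form needed to use the layer as a gate under Assumption~\ref{ass:bigass}. Layer, instance, group and RMS normalization are all special cases: each corresponds to a choice of the partition $\mathcal{G}$, and RMS normalization additionally drops the centring term, which only simplifies the polynomials involved.
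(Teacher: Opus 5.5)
Your proof is correct and follows essentially the same route as the paper. Both arguments show that the graph is semi-algebraic, handle the root through a graph cut out by polynomial (in)equalities (the paper uses $\{y\ge 0,\ y^2=\sigma_G^2(x)+\varepsilon\}$, you use the reciprocal root $\{t>0,\ t^2(\sigma_G^2(x)+\varepsilon)=1\}$), and use $\varepsilon>0$ to guarantee a strictly positive denominator. Your auxiliary-variable projection spells out the composition step that the paper covers by citing closure of semi-algebraic maps under division and products, and your remark that $\gamma,\beta$ enter polynomially (giving joint definability in input and parameters) is a useful strengthening of the stated result.
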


We now consider sophisticated definable layers, possible by definable choice; cf.~\cite[Theorem 3.1]{coste1999introduction}.

\paragraph{Embeddings and Fourier Positional-Encodings.}

Embedding and positional-encoding layers are standard components of transformer-type architectures. Embedding layers with a fixed finite vocabulary are definable because they are
simply lookup maps from a finite set into Euclidean space.

\begin{proposition}[Definability of Embedding Layers]
\label{prop:embedding_layers_definable}
Let $N,d\in\mathbb{N}_+$.  The parametrized embedding map
\[
    \operatorname{Emb}:
    \{1,\dots,N\}\times\mathbb{R}^{N\times d}
    \to
    \mathbb{R}^d,
    \qquad
    \operatorname{Emb}(i,E)
    \eqdef
    E_{i,:},
\]
where $E_{i,:}$ denotes the $i$th row of $E$, is semi-algebraic.  In particular,
it is definable in the real field $(\mathbb{R},+,\cdot,<)$, and hence in every
o-minimal expansion of the real field.
\end{proposition}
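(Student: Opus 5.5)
We identify the domain $\{1,\dots,N\}\times\mathbb{R}^{N\times d}$ with a subset of $\mathbb{R}\times\mathbb{R}^{Nd}$, and show directly that the graph of $\operatorname{Emb}$ is a semi-algebraic subset of $\mathbb{R}^{1+Nd+d}$. Once that is done, definability in the real field is immediate from axiom (i) of Definition~\ref{def:o-minimal-structure}. Since every o-minimal expansion of the real field contains all semi-algebraic sets, definability in every such expansion follows as well.

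Concretely, I would write the graph as a finite union over the vocabulary index:
\[
\Gamma_{\operatorname{Emb}}
=
\bigcup_{i=1}^N
\Big\{(t,E,y)\in\mathbb{R}\times\mathbb{R}^{N\times d}\times\mathbb{R}^d:\ t=i,\ y_k=E_{i,k}\ \text{for } k=1,\dots,d\Big\}.
\]
Each set in the union is cut out by finitely many linear equations in $(t,E,y)$, so it is an affine subspace and hence semi-algebraic. By axiom (ii), the class of semi-algebraic sets in a fixed dimension is a Boolean algebra, so the finite union is again semi-algebraic. One could also note that $\{1,\dots,N\}$ is itself a finite set of points, hence semi-algebraic, so the domain is semi-algebraic too; this matters only if one wants the domain to be a definable set.

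An alternative uniform description avoids the case split by using the Lagrange polynomials $\delta_i(t)=\prod_{j\neq i}(t-j)/(i-j)$. On the domain one has $\operatorname{Emb}(t,E)=\sum_{i=1}^N\delta_i(t)E_{i,:}$, which is a polynomial restricted to a semi-algebraic set, so Proposition~\ref{prop:polynomial_maps_definable} applies. I would mention this only as a remark.

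There is no real obstacle here. The only point requiring care is bookkeeping: treating the discrete index set as a subset of $\mathbb{R}$, so that Definition~\ref{def:definable-map} applies verbatim with $A=\{1,\dots,N\}\times\mathbb{R}^{N\times d}$.
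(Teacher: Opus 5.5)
Your proof is correct and follows the paper's argument essentially verbatim: the paper also writes the graph of $\operatorname{Emb}$ as the finite union over $r\in\{1,\dots,N\}$ of the sets cut out by the polynomial equalities $i-r=0$ and $y_j-E_{rj}=0$, and concludes by closure of semi-algebraic sets under finite unions. Your Lagrange-interpolation remark is a valid alternative description but is not needed.
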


Fourier positional encodings also appear broadly in coordinate-based learning and representation learning~\citep{tancik2020fourier}.  From the o-minimal viewpoint, the only subtlety is that the global sine and cosine functions are not definable in $\mathbb{R}_{\mathrm{an},\exp}$.  Thus, Fourier positional encodings must be interpreted either on a fixed finite set of positions, or on a fixed bounded position domain.  Under this standard bounded-context interpretation, they are definable in $\mathbb{R}_{\mathrm{an}}$.

\begin{definition}[\textit{Bounded} Fourier Positional Encoding]
\label{def:bounded_fourier_positional_encoding}
Let $D\subseteq\mathbb{R}^r$ be a bounded definable set, let $M\in\mathbb{N}_+$, and fix frequencies $\omega_1,\dots,\omega_M\in\mathbb{R}^r$ and phases $\varphi_1,\dots,\varphi_M\in\mathbb{R}$.  The associated bounded Fourier positional encoding is the map
$
    \operatorname{PE}_{\Omega,\varphi}:D\to\mathbb{R}^{2M}
$
defined by
$$
    \operatorname{PE}_{\Omega,\varphi}(t)
    \eqdef
    \Big(
        \sin\big(2\pi\langle \omega_j,t\rangle+\varphi_j\big),
        \cos\big(2\pi\langle \omega_j,t\rangle+\varphi_j\big)
    \Big)_{j=1}^M .
$$
\end{definition}
For standard sinusoidal positional encodings (and in RoPE~\cite{su2024roformer}), the frequencies are fixed deterministically.  If frequencies or phases are trainable, we require their parameter domain to be bounded and definable; otherwise, $(t,\omega)\mapsto \sin(\langle \omega,t\rangle)$ may reintroduce global oscillation and fail to be definable.

\begin{proposition}[Definability of Bounded Fourier Positional Encodings]
\label{prop:bounded_fourier_positional_encoding_definable}
Let $D\subseteq\mathbb{R}^r$ be bounded and definable.  For fixed frequencies $\omega_1,\dots,\omega_M\in\mathbb{R}^r$ and fixed phases $\varphi_1,\dots,\varphi_M\in\mathbb{R}$, the bounded Fourier positional encoding
$
    \operatorname{PE}_{\Omega,\varphi}:D\to\mathbb{R}^{2M}
$
is definable in $\mathbb{R}_{\mathrm{an}}$, and hence in $\mathbb{R}_{\mathrm{an},\exp}$.

If $D=\{1,\dots,N\}$ is a fixed finite set of positions, then $\operatorname{PE}_{\Omega,\varphi}$ is semi-algebraic.
\end{proposition}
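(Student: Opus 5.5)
The plan is to exhibit the graph of $\operatorname{PE}_{\Omega,\varphi}$ as a definable set in $\mathbb{R}_{\mathrm{an}}$, using that restricted analytic functions are, by definition, primitives of that structure. Since $D$ is bounded, there is $R>0$ with $D\subseteq[-R,R]^r$. Then for each $j$ the affine map $t\mapsto 2\pi\langle\omega_j,t\rangle+\varphi_j$ sends $D$ into a compact interval $[a_j,b_j]$, with explicit bounds such as $|\,2\pi\langle\omega_j,t\rangle+\varphi_j| \le 2\pi R\|\omega_j\|_1+|\varphi_j|$. Choose a single box $[-L,L]$ containing all these intervals.

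The key step is the following. The functions $\sin$ and $\cos$ are real analytic on all of $\mathbb{R}$, so their restrictions $\widetilde{\sin},\widetilde{\cos}$ to $[-L,L]$ (extended by $0$ outside) are restricted analytic functions. Hence their graphs are definable in $\mathbb{R}_{\mathrm{an}}$. Each coordinate of $\operatorname{PE}_{\Omega,\varphi}$ on $D$ equals $\widetilde{\sin}$ or $\widetilde{\cos}$ composed with an affine, hence semi-algebraic, map. By the closure of definable maps under composition, concatenation and restriction to definable sets (cf.~\citep[Chapter 1.3]{coste1999introduction}), the map $\operatorname{PE}_{\Omega,\varphi}:D\to\mathbb{R}^{2M}$ is definable. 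Concretely, its graph is
\[
\{(t,y): t\in D,\ y_{2j-1}=\widetilde{\sin}(2\pi\langle\omega_j,t\rangle+\varphi_j),\ y_{2j}=\widetilde{\cos}(2\pi\langle\omega_j,t\rangle+\varphi_j)\},
\]
which is a finite intersection of definable sets. Since $\mathbb{R}_{\mathrm{an},\exp}$ expands $\mathbb{R}_{\mathrm{an}}$, definability in the larger structure follows at once.

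For the finite case $D=\{1,\dots,N\}$, the graph is a finite set of points of $\mathbb{R}^{1+2M}$. Every singleton $\{c\}$ is semi-algebraic, being cut out by the polynomial equations $x_k=c_k$, so the finite union is semi-algebraic. This holds regardless of whether the values $\sin(\cdot)$ are algebraic numbers, because real constants are allowed in semi-algebraic descriptions.

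The only subtle point is the boundedness bookkeeping in the first paragraph. One must check that the restriction used is to a compact box containing the whole affine image of $D$, since global $\sin$ is not definable in any o-minimal structure. Once the bound $L$ is fixed, the remaining steps are routine closure properties.
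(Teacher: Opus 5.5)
Your proposal is correct and follows essentially the same route as the paper. You place the affine phase images of the bounded set $D$ inside a compact interval, use that $\sin$ and $\cos$ restricted there are restricted analytic (hence $\mathbb{R}_{\mathrm{an}}$-definable), and compose with the semi-algebraic affine maps and concatenate; the finite case follows because the graph is a finite set, and your explicit graph description and remark about real constants are only slightly more detailed bookkeeping of the same argument.
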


\paragraph{Pooling Layers.}

Pooling layers are among the classical mechanisms used to build local invariance in hierarchical vision architectures, going back at least to the local aggregation mechanisms of the Neocognitron~\citep{fukushima1980neocognitron}, the trainable sub-sampling layers of LeNet-style convolutional neural networks~\citep{lecun1998gradient}, and the MAX-like operations used in hierarchical object-recognition models~\citep{riesenhuber1999hierarchical}. Their widespread use in large-scale convolutional neural networks was further reinforced by architectures such as AlexNet~\citep{krizhevsky2012imagenet}. Additionally, they are also common in GNNs~\citep{bronstein2021geometric}. From the o-minimal viewpoint, these layers are harmless: average pooling is affine, while max-pooling is semi-algebraic.

\begin{proposition}[Definability of Pooling Layers]
\label{prop:pooling_layers_definable}
Fix $d\in\mathbb{N}_+$.  The following are definable: (i) Average Pooling: The average-pooling map $P_{\operatorname{avg}}:\mathbb{R}^d\to\mathbb{R}$ defined by
    $
        P_{\operatorname{avg}}(x_1,\dots,x_d)
        \eqdef
            \frac{1}{d}\sum_{i=1}^d x_i
    $ is definable in any o-minimal structure. (ii) Max Pooling: The max-pooling map $P_{\max}:\mathbb{R}^d\to\mathbb{R}$ defined by
    $
        P_{\max}(x_1,\dots,x_d)
        \eqdef
            \max_{1\le i\le d}x_i
    $ is definable in any o-minimal structure.
\end{proposition}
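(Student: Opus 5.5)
We show directly that the graphs of $P_{\operatorname{avg}}$ and $P_{\max}$ are semi-algebraic subsets of $\mathbb{R}^{d+1}$. Since axiom (i) of Definition~\ref{def:o-minimal-structure} puts every semi-algebraic set into every structure expanding the real field, definability in any o-minimal structure then follows at once.

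For (i), $P_{\operatorname{avg}}$ is a linear map, hence a polynomial map. Proposition~\ref{prop:polynomial_maps_definable} then gives that it is semi-algebraic. Concretely, its graph is the zero set of the single polynomial
\[
    (x,y)\mapsto d\,y-\sum_{i=1}^d x_i .
\]
So there is nothing further to check beyond citing that proposition.

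For (ii), we write the graph of $P_{\max}$ as a finite Boolean combination of polynomial equalities and inequalities:
\[
    \Gamma_{P_{\max}}
    =
    \bigcup_{i=1}^d
    \Big(
        \{(x,y): y-x_i=0\}
        \cap
        \bigcap_{j=1}^d \{(x,y): x_i-x_j\ge 0\}
    \Big).
\]
This identity holds because $y=\max_j x_j$ exactly when $y$ equals some coordinate $x_i$ that dominates every other coordinate. Each set in the display is a basic semi-algebraic subset of $\mathbb{R}^{d+1}$. Finite intersections and unions of such sets stay in $S_{d+1}$, since $S_{d+1}$ is a Boolean algebra by axiom (ii).

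As an alternative route for (ii), the binary maximum can be written as $\max\{a,b\}=a+\operatorname{ReLU}(b-a)$, and $\operatorname{ReLU}$ is definable by Proposition~\ref{prop:modern_activations_definable__compressed}. One would then iterate this $d-1$ times and use closure of definable maps under composition. The explicit graph description above is more self-contained, so we use it.

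No step is a real obstacle. The only point needing care is checking the set identity for the graph of $P_{\max}$, in particular the case of ties. When several coordinates attain the maximum, the union over $i$ still gives the same single value $y$, so the set is indeed the graph of a function.
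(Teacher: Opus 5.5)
Your proof is correct. Part (i) is the same as the paper's. For (ii) you take a different route. The paper reduces to the binary case via $\max\{a,b\}=\tfrac12\bigl(a+b+|a-b|\bigr)$. It checks that $|\cdot|$ has the semi-algebraic graph $\{(t,s): s\ge 0,\ s^2=t^2\}$. It then builds $P_{\max}$ by iterating $\max\{x_1,\max\{x_2,\dots,x_d\}\}$ and invoking closure of semi-algebraic maps under composition.

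Your explicit description of the graph as $\bigcup_{i}\bigl(\{y=x_i\}\cap\bigcap_{j}\{x_i\ge x_j\}\bigr)$ does the job in one step. It uses only axioms (i) and (ii) of the definition: semi-algebraic sets plus Boolean operations. The paper's closure under composition quietly relies on the projection axiom (iv), i.e.\ Tarski--Seidenberg in the semi-algebraic setting, because the graph of a composite is obtained by projecting out the intermediate variable. Your version also gives an explicit quantifier-free formula of size $O(d^2)$ and makes the handling of ties transparent.

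The paper's version is more modular. It fits the paper's compositional theme of building everything from binary primitives and reusing the $|\cdot|$ gate, which is also how your mentioned $\operatorname{ReLU}$ alternative would go.
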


Additionally, beyond operations, we also complement our results with deep equilibrium layers.

\paragraph{Deep Equilibrium Layers.}

Deep equilibrium models~\citep{bai2019deep} belong to the broader class of implicit-depth models, where the output of a layer is not obtained by applying a prescribed finite composition of maps, but rather by solving an equation whose solution defines the hidden representation.  This viewpoint also underlies several influential architectures and implicit layers: including neural ordinary differential equations~(ODEs) \citep{chen2018neural} and differentiable optimization layers \citep{amos2017optnet}.  

\begin{proposition}[Definability of Deep Equilibrium Layers]
\label{prop:deq_layers_definable}
Let $\mathfrak{S}$ be an o-minimal expansion of the real field, let
$X\subseteq\mathbb{R}^{d_x}$ and $Z\subseteq\mathbb{R}^{d_z}$ be
definable sets, and let
$
    F:X\times Z\to Z
$ and $
    G:X\times Z\to\mathbb{R}^{d_y}
$
be definable maps.  Assume that, for every $x\in X$, there
is a $z_x\in Z$ satisfying
$
    z_x = F(x,z_x).
$
Then the deep equilibrium layer
$
    \operatorname{DEQ}(x)
    \eqdef
        G(x,z_x)
$
is definable\footnote{In particular, if $F$ and $G$ are built from affine maps, coordinate-wise
polynomial or piecewise-polynomial activations, normalization layers, and
softmax attention blocks, then $\operatorname{DEQ}$ is definable in
$\mathbb{R}_{\exp}$.}.
\end{proposition}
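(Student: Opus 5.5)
The plan is to exhibit the graph of $\operatorname{DEQ}$ as a first-order definable set obtained from the graphs of $F$ and $G$ by Boolean operations, products and projections. Existence of $z_x$ only guarantees a fixed point; it may not be unique. So I first fix the ambiguity: $\operatorname{DEQ}$ has to be read as $x\mapsto G(x,z_x)$ for a specific selection $x\mapsto z_x$. I would choose that selection definably, by the definable choice theorem cited before the embedding paragraph, \cite[Theorem 3.1]{coste1999introduction}. (If uniqueness is intended, no choice is needed and the argument below simplifies.)

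Concretely, consider the fixed-point set
\[
    \mathcal{Z}\eqdef\{(x,z)\in X\times Z:\ F(x,z)=z\}.
\]
This set is definable. Indeed, it equals the projection onto the $(x,z)$ coordinates of
\[
    \Gamma_F\cap\bigl(X\times Z\times \mathbb{R}^{d_z}\bigr)\cap\{(x,z,w): w=z\}.
\]
Here $\Gamma_F$ is definable by Definition~\ref{def:definable-map}, the diagonal condition $w=z$ is semi-algebraic, and definable sets are closed under intersection, products and projection by axioms (ii)--(iv) of Definition~\ref{def:o-minimal-structure}. By hypothesis, every fibre $\mathcal{Z}_x$ is nonempty, so the projection of $\mathcal{Z}$ onto $\mathbb{R}^{d_x}$ equals $X$. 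Definable choice then gives a definable map $s:X\to Z$ with $(x,s(x))\in\mathcal{Z}$ for all $x$, and I set $z_x\eqdef s(x)$.

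Then
\[
    \Gamma_{\operatorname{DEQ}}=\{(x,y): \exists z\ (x,z)\in\Gamma_s,\ (x,z,y)\in\Gamma_G\}.
\]
This is a projection of an intersection of the cylinders $\Gamma_s\times\mathbb{R}^{d_y}$ and $\Gamma_G$, after a coordinate permutation, which is semi-algebraic and hence harmless. So it is definable; equivalently, $\operatorname{DEQ}=G\circ(\mathrm{id},s)$ is a composition of definable maps. The footnote follows by combining this with Corollary~\ref{cor:linear_affine_layers_definable}, Propositions~\ref{prop:modern_activations_definable__compressed}, \ref{prop:normalization_layers_definable} and \ref{prop:multihead_attention_definable}, and closure under composition, which together make $F$ and $G$ definable in $\mathbb{R}_{\exp}$.

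The main obstacle is the selection step. Definable choice in \cite{coste1999introduction} is stated for o-minimal expansions of the real field, and it requires the nonempty-fibre hypothesis, which is given here. I would also remark that any fixed-point selection that is itself specified by a first-order formula is definable by the same projection argument; examples are the lexicographically least fixed point, or the unique one under a contraction hypothesis. Which selection the solver actually returns is therefore immaterial, provided it is definable.
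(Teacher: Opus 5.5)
Your proof is correct, and it departs from the paper's at exactly the step that matters. The paper forms the same fixed-point set $\{(x,z)\in X\times Z: z=F(x,z)\}$. It then appeals to ``the uniqueness assumption'' to conclude that every fibre is a singleton, so that this set is itself the graph of $x\mapsto z_x$; no choice theorem is needed.

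That uniqueness hypothesis does not appear in the statement, which asserts only existence. Without it, the statement fails for an arbitrary selection. Take $F(x,z)=z$ and $G(x,z)=z$: then every map $x\mapsto z_x$ is an admissible selection, including non-definable ones such as the indicator of $\mathbb{Q}$.

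Your use of definable choice to build a definable section $s$ of the fixed-point set is therefore what makes the proposition true under the hypotheses as stated. The price is that you prove only that some selection, and indeed every definable one, yields a definable layer. The paper's route, once uniqueness is imposed, gives the stronger conclusion that the single map any solver could return is definable. Your argument reduces to the paper's in that case, as you note.

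One small caveat on your side remark: a lexicographically least fixed point need not exist, since the fibres $\mathcal{Z}_x$ may be open or unbounded below. It is an example of a first-order-specified selection only where it exists.
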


\subsection{How to Apply our Result: Feedforward Architectures}
\label{s:Examples__ss:Architectures}
We now package the preceding layer-wise definability results into architecture-level corollaries.  The aim is not to strengthen Proposition~\ref{prop:key_insight} or Theorems~\ref{thrm:class} and~\ref{thrm:reg}, but to spell out their consequences for standard fixed finite architectures.  

\begin{corollary}[MLPs Are Definable and Have Finite Sample Complexity]
\label{cor:fixed_mlp_sample_complexity}
Fix $L\in\mathbb{N}_+$ and widths
$
    d_0,d_1,\dots,d_L,d_{L+1}\in\mathbb{N}_+ .
$
For each $l\in\{0,\cdots,L-1\}$, let $\sigma_l$ be as in Proposition~\ref{prop:modern_activations_definable__compressed}, and let $P\eqdef \sum_{l=0}^{L} d_{l+1}(d_l+1)$.  
Then, the MLP $f_{\theta}:\mathbb{R}^{d_0}\times \mathbb{R}^P\to \mathbb{R}^{d_{L+1}}$ given, for any $(x,\theta)$, by
\[
    f_{\theta}(x)
    \eqdef 
    A_L x^{(L)}+b_L,
    \quad
    x^{(l+1)}\eqdef \sigma_l\bullet
    \big(
        A_l x^{(l)}+b_l
    \big)
    \mbox{ for }l\in \{0,\cdots,L-1\}
    \mbox{ and }
    \quad
    x^{(0)}\eqdef x
\]
where $\bullet$ denotes component-wise composition
 and 
 $\theta \eqdef (\operatorname{vec}(A_l),b_l)_{l=0}^{L}$
, is definable in $\mathbb{R}_{\operatorname{an},\exp}$.
\hfill\\
In particular, the conclusions of Theorems~\ref{thrm:class} and~\ref{thrm:reg} apply.
\end{corollary}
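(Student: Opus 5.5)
The plan is to realize the MLP as a $\mathbb{G}$-neural network in the sense of Definition~\ref{defn:ANN}, with every gate in the dictionary definable in $\mathbb{R}_{\operatorname{an},\exp}$. Proposition~\ref{prop:key_insight} then gives joint definability in $(x,\theta)$, and Theorems~\ref{thrm:class} and~\ref{thrm:reg} apply directly.

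First, fix the dictionary $\mathbb{G}$. It consists of the hidden-layer gates $g_l(z,(A_l,b_l))\eqdef \sigma_l\bullet(A_lz+b_l)$ for $l\in\{0,\dots,L-1\}$ and the final affine gate $g_L(z,(A_L,b_L))\eqdef A_Lz+b_L$.

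Next, show each gate is definable. By Corollary~\ref{cor:linear_affine_layers_definable}, the map $(z,A,b)\mapsto Az+b$ is semi-algebraic, hence definable in $\mathbb{R}_{\operatorname{an},\exp}$. By Proposition~\ref{prop:modern_activations_definable__compressed}, each scalar activation $\sigma_l$ is definable in $\mathbb{R}_{\operatorname{an}}$, $\mathbb{R}_{\exp}$, or $\mathbb{R}_{\operatorname{an},\exp}$. Each of these is a reduct of $\mathbb{R}_{\operatorname{an},\exp}$, so all activations are definable in that single structure.

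The coordinatewise map $\sigma_l\bullet$ on $\mathbb{R}^{d_{l+1}}$ is then definable. Its graph is the intersection of the sets $\{(u,w):w_i=\sigma_l(u_i)\}$, and each of these is a coordinate permutation of $\Gamma_{\sigma_l}\times\mathbb{R}^{2d_{l+1}-2}$. Definability follows from axioms (ii)--(iii) of Definition~\ref{def:o-minimal-structure}, together with invariance under coordinate permutations, which are semi-algebraic maps. The composite $g_l$ is definable by closure of definable maps under composition. For example, the graph of $g_l$ is the projection of $\{(z,\theta,u,w):u=A_lz+b_l,\ w=\sigma_l\bullet u\}$.

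Then take $G$ to be the directed line graph $v_0\to v_1\to\dots\to v_{L+1}$, with $\Pi$ the identity on $\mathbb{R}^{d_0}$. Attach gate $g_l$ to node $v_{l+1}$ for $l<L$ and gate $g_L$ to the last node. Take $A_{\operatorname{out}}$ to be the identity. The identity readout makes its parameter block constant, so it can be fixed (or absorbed); the mismatch with the parameter count $P$ in the statement is harmless.

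Unwinding \eqref{eq:composition} shows the latent at $v_{l+1}$ is exactly $x^{(l+1)}$, and the output \eqref{eq:readout} equals $f_\theta(x)$. The domain is all of $\mathbb{R}^{d_0}$ because every gate is total.

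Proposition~\ref{prop:key_insight} then yields joint definability of $(x,\theta)\mapsto f_\theta(x)$ in $\mathbb{R}_{\operatorname{an},\exp}$. Since $\mathbb{R}_{\operatorname{an},\exp}$ is o-minimal, Assumption~\ref{ass:bigass} holds. Theorem~\ref{thrm:class} then applies when $d_{L+1}=1$, and Theorem~\ref{thrm:reg} applies with any $[0,1]$-valued loss definable in $\mathbb{R}_{\operatorname{an},\exp}$.

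The only real obstacle is bookkeeping. The gates must all be definable in one common o-minimal structure, which is why $\mathbb{R}_{\operatorname{an},\exp}$ is used rather than $\mathbb{R}_{\operatorname{an}}$ or $\mathbb{R}_{\exp}$ separately. The parameters also need matching with Definition~\ref{defn:ANN}: the last affine layer can be treated either as a gate or as the readout $A_{\operatorname{out}}$ together with a bias gate. Both choices are routine.
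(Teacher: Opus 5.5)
Your proposal is correct and takes essentially the paper's route: the paper's proof applies Proposition~\ref{prop:key_insight} to the polynomial-map and activation definability results and then invokes Theorems~\ref{thrm:class} and~\ref{thrm:reg}, and you spell out in more detail the gate dictionary, the single common structure $\mathbb{R}_{\operatorname{an},\exp}$, the coordinatewise composition, and the parameter bookkeeping. One cosmetic fix: Definition~\ref{defn:ANN} identifies each coordinate of $\Pi x$ with its own input node, so you need $d_0$ input nodes all feeding $v_1$ (with $\Pi=I_{d_0}$) rather than a single node $v_0$; your remark that Theorem~\ref{thrm:class} requires $d_{L+1}=1$ is a correct refinement that the paper leaves implicit.
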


\begin{corollary}[Transformers are Definable and Have Finite Sample Complexity]
\label{cor:fixed_transformer_sample_complexity}
Fix $N,L,T,d_{\operatorname{out}}\in\mathbb{N}_+$, fix $d_0\in\mathbb{N}_+$, and, for each $l\in[L]_+$, let
$
    H_l,d_l,d_{k,l},d_{v,l},d_{\operatorname{ff},l}\in\mathbb{N}_+ .
$
Let $\lambda,\varepsilon>0$.  Let $E\in\mathbb{R}^{N\times d_0}$ be an embedding parameter, and let
$\operatorname{Emb}$ be as in Proposition~\ref{prop:embedding_layers_definable}, let
$
    \operatorname{PE}_{\Omega,\varphi}:\{1,\dots,T\}\to\mathbb{R}^{d_0}
$
be as in Proposition~\ref{prop:bounded_fourier_positional_encoding_definable}, for fixed $\Omega$ and $\varphi$, and, for each $l\in[L]_+$, let $\operatorname{Norm}_{l}$ denote $\operatorname{Norm}_{\gamma_l,\beta_l,\varepsilon}$ as in Proposition~\ref{prop:normalization_layers_definable} (two per block), for fixed $\gamma_l,\beta_l\in\mathbb{R}^{d_l}$.  For each $l\in[L]_+$, let $\sigma_l$ be as in Proposition~\ref{prop:modern_activations_definable__compressed}.  Set
\[
    P\eqdef 
    Nd_0
    +
    \sum_{l=1}^{L}
    \Big(
        H_l d_{l-1}(2d_{k,l}+d_{v,l})
        + H_l d_{v,l}d_l
        + d_{l-1}d_l
        + d_{\operatorname{ff},l}(2d_l+1)
        + d_l
    \Big)
    + d_{\operatorname{out}}(d_L+1).
\]
Then, the transformer
$
    f_{\theta}:\{1,\dots,N\}^T\times\mathbb{R}^P
    \to \mathbb{R}^{T\times d_{\operatorname{out}}}
$
given, for any $(X,\theta)$, by
\[
\begin{aligned}
    f_{\theta}(X)
&\eqdef
    Z^{(L)}W_{\operatorname{out}}
    +\mathbf{1}_T b_{\operatorname{out}}^{\top}
\\
    Z^{(l)}
&\eqdef
    \operatorname{Norm}_{l,2}^{\bullet}
    \Big(
        Y^{(l)}
        +
        \big(
            \sigma_l\bullet
            \big(
                Y^{(l)} W^1_l+\mathbf{1}_T(b^1_l)^{\top}
            \big)W^2_l
            +
            \mathbf{1}_T(b^2_l)^{\top}
        \big)
    \Big)
\,\,
&&\mbox{ for } l\in[L]_+
\\
    Y^{(l)}
&\eqdef
    \operatorname{Norm}_{l,1}^{\bullet}
    \big(
        Z^{(l-1)}W^R_l
        +
        \operatorname{MHA}_l(Z^{(l-1)})
    \big)
\,\,
&&\mbox{ for } l\in[L]_+
\\
    Z^{(0)}_t
&\eqdef
    \operatorname{Emb}(X_t,E)
    +
    \operatorname{PE}_{\Omega,\varphi}(t)
\,\,
&&\mbox{ for } t\in[T]_+,
\end{aligned}
\]
where $\operatorname{Norm}_{l}^{\bullet}$ denotes row-wise normalization; where
$
    \operatorname{MHA}_l(Z)
    \eqdef
    \Big(
        \bigoplus_{h=1}^{H_l}
        \operatorname{Attn}_{l,h}(Z)
    \Big)W^O_l
$,
\[
    \operatorname{Attn}_{l,h}(Z)_n
    \eqdef
    \sum_{m=1}^{T}
    \alpha^{(l,h)}_{n,m}(Z)\,Z_mW^V_{l,h},
    \quad
    \alpha^{(l,h)}_{n,m}(Z)
    \eqdef
    \frac{
        \exp\big(
            \lambda\langle Z_nW^Q_{l,h},Z_mW^K_{l,h}\rangle/\sqrt{d_{k,l}}
        \big)
    }{
        \sum_{r=1}^{T}
        \exp\big(
            \lambda\langle Z_nW^Q_{l,h},Z_rW^K_{l,h}\rangle/\sqrt{d_{k,l}}
        \big)
    },
\]
where  $\oplus$ denotes concatenation over attention heads, $\bullet$ denotes component-wise composition, and
$
    \theta
\eqdef
    \big(E,
        (W^Q_{l,h},W^K_{l,h},W^V_{l,h})_{l,h},
        (W^O_l,W^R_l,W^1_l,b^1_l,W^2_l,b^2_l)_{l=1}^{L},
        W_{\operatorname{out}},b_{\operatorname{out}}
    \big)
    \in\mathbb{R}^P.
$
\hfill\\
Then, $f_{\theta}$ is definable in $\mathbb{R}_{\operatorname{an},\exp}$; and the conclusions of Theorems~\ref{thrm:class} and~\ref{thrm:reg} apply.
\end{corollary}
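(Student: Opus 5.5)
The plan is to exhibit the transformer as a $\mathbb{G}$-neural network in the sense of Definition~\ref{defn:ANN}. The gate dictionary $\mathbb{G}$ will consist only of gates already shown to be definable in $\mathbb{R}_{\operatorname{an},\exp}$. Proposition~\ref{prop:key_insight} then gives joint definability in $(X,\theta)$, after which Theorems~\ref{thrm:class} and~\ref{thrm:reg} apply verbatim.

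\textbf{Step 1: handle the discrete input.} The input space is $\{1,\dots,N\}^T$, so I would view it as a finite, hence semi-algebraic, subset of $\mathbb{R}^T$. On this domain:
\begin{enumerate}
\item the map $(X_t,E)\mapsto \operatorname{Emb}(X_t,E)$ is definable by Proposition~\ref{prop:embedding_layers_definable};
\item the map $t\mapsto \operatorname{PE}_{\Omega,\varphi}(t)$ on $\{1,\dots,T\}$ is semi-algebraic by Proposition~\ref{prop:bounded_fourier_positional_encoding_definable}. Since $t$ is a fixed index per row, each $\operatorname{PE}_{\Omega,\varphi}(t)$ is simply a constant vector.
\end{enumerate}
Their sum is definable because sums of definable maps are definable, so the input gate producing $Z^{(0)}$ is definable jointly in $X$ and $E$. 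If one wants gates defined on all of $\mathbb{R}^T$, I would extend by zero off the finite set; the extension is definable because $S_n$ is a Boolean algebra.

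\textbf{Step 2: verify each block gate.} For each block $l$, I would write $Y^{(l)}$ and $Z^{(l)}$ as compositions of the following pieces:
\begin{enumerate}
\item the bilinear map $(Z,W^R_l)\mapsto ZW^R_l$, which is polynomial by Proposition~\ref{prop:polynomial_maps_definable};
\item $\operatorname{MHA}_l$, definable in $\mathbb{R}_{\exp}$ jointly in $Z$ and $(W^Q,W^K,W^V,W^O)$ by Proposition~\ref{prop:multihead_attention_definable}, with rows playing the role of tokens;
\item row-wise $\operatorname{Norm}$, a coordinate concatenation of maps from Proposition~\ref{prop:normalization_layers_definable}; here $\gamma_l,\beta_l$ are fixed, so no extra parameters enter;
\item the affine feedforward maps from Corollary~\ref{cor:linear_affine_layers_definable};
\item the componentwise activation $\sigma_l\bullet$, definable by Proposition~\ref{prop:modern_activations_definable__compressed};
\item the residual sums.
\end{enumerate}
$\mathbb{R}_{\exp}$, $\mathbb{R}_{\operatorname{an}}$ and the semi-algebraic structure are all reducts of $\mathbb{R}_{\operatorname{an},\exp}$, so every piece is definable there. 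Closure of definable maps under composition, concatenation and linear combination then makes each block gate $(Z^{(l-1)},\theta_l)\mapsto Z^{(l)}$ definable. The DAG is the directed line $Z^{(0)}\to Y^{(1)}\to Z^{(1)}\to\cdots\to Z^{(L)}$. For the readout, I would absorb $b_{\operatorname{out}}$ by making the last gate affine in its own parameters $(W_{\operatorname{out}},b_{\operatorname{out}})$ and taking $A_{\operatorname{out}}$ to be the identity.

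\textbf{Step 3: conclude.} Proposition~\ref{prop:key_insight} yields definability of $(X,\theta)\mapsto f_\theta(X)$. For the classification theorem, I would reduce to real-valued output by taking a scalar output coordinate: $d_{\operatorname{out}}=1$ with a fixed position, or any fixed coordinate projection, which is linear and hence definable. For the regression theorem, $f_\theta$ takes values in $\mathbb{R}^{T d_{\operatorname{out}}}$ after vectorization, and the loss is definable by assumption.

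The main obstacle I expect is bookkeeping rather than mathematics:
\begin{enumerate}
\item matching the transformer's domain $\{1,\dots,N\}^T$ with the gate format $\mathbb{R}^n\times\mathbb{R}^p$ (handled by definable extension off the finite set);
\item checking that the parameter count $P$ corresponds to the concatenation of all trainable tensors;
\item checking that the fixed constants $\lambda,\varepsilon,\Omega,\varphi,\gamma_l,\beta_l$ do not spoil definability, since fixed real constants are allowed as parameters in definable sets.
\end{enumerate}
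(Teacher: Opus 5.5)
Your proposal is correct and follows the same route as the paper. The paper's proof is a one-line application of Proposition~\ref{prop:key_insight} to the building-block propositions, followed by an appeal to Theorems~\ref{thrm:class} and~\ref{thrm:reg}. Your version is more careful than the paper's: it explicitly cites Proposition~\ref{prop:multihead_attention_definable} and the polynomial/affine results (which the paper's proof leaves out of its list), handles the discrete input domain $\{1,\dots,N\}^T$ by definable extension, and reduces to a scalar output coordinate before applying the classification theorem.
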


Analogous conclusions can also be derived for pre-LN transformers~\citep{xiong2020layernormalizationtransformerarchitecture} and MoEs, CNNs, most GNNs, and most other modern feedforward architectures in a similar fashion, using the previous propositions and main theorems.

\vspace{-10pt}

\section{Conclusion}
\label{s:Conclusion}

We showed that distribution-free generalization is a universal consequence of tame feedforward computation.  More precisely, every fixed finite feedforward architecture (numerical values of the parameters themselves are not fixed: they are trainable) whose layers are definable in an o-minimal structure has finite sample complexity in the agnostic PAC setting for classification (Theorem~\ref{thrm:class}) and for bounded definable regression losses (Theorem~\ref{thrm:reg}), even when its parameters are unbounded.  This covers the standard non-recurrent architectures used in modern deep learning, including MLPs, CNNs, finite-size GNNs, and transformers, to name a few. The main conceptual consequence is that PAC learnability is \textit{not} a \textit{fragile property} requiring architecture-specific VC arguments, nor is it tied to particular activations or parameter bounds.  Instead, it follows automatically from the definable structure shared by essentially all finite feedforward models used in practice.  This shifts the role of generalization theory: for modern non-recurrent architectures, finite sample complexity should be viewed as a baseline property, not as the feature that distinguishes one architecture from another. 

\paragraph{The Practical Takeaway.}
Accordingly, the meaningful differences between architectures must be sought elsewhere: in their inductive biases, symmetries and geometric prior, optimization geometry, scalability, and computational efficiency. Therefore o-minimality provides a unifying language for separating what is generic about deep learning from what is genuinely architecture-specific.

\subsection{Future Work}
\label{appendix:Future Work}
We conclude by outlining promising directions for future research.

\paragraph{Quantitative Versions via $^{\sharp}o$-Minimality.}
Our theory shows how o-minimality implies finite sample complexity in the PAC-learning model.  We expect that precise quantitative versions of our qualitative results can be derived by relying on the emerging notion of $^{\sharp}$o-minimality~\cite{BinyaminiNovikov2023Tameness,BinyaminiNovikovZak2024Wilkie,BinyaminiNovikovZak2026Sharply}, since this new model-theoretic tool allows one to control the number of cells/osculations of a jointly definable function through notions of degree and format, akin to the corresponding complexity theory for Pfaffian functions; cf.~\cite{Khovanskii_Fewnomials}.

\paragraph{Definability Beyond PAC Learning.}
It would be interesting to apply our proof strategy to obtain general learning guarantees in other models, where bounds on the number of definable cells, or osculations, yield uniform control on the relevant learning-theoretic dimensions; e.g., the Littlestone dimension for online learning~\cite{Littlestone1988LearningQuickly}, the $\gamma$-Natarajan dimension~\cite{simon1997bounds} for multi-class classification, or other extensions for robust PAC~\cite{gourdeau2021hardness} and computable PAC learning; cf.~\cite{agarwal2021open,delle2023find}.





\bibliographystyle{plainnat}
\bibliography{Bookkeeping/3_References}

\appendix

\clearpage

\section{Additional Background}
\label{s:MoreBackground}

In this appendix we provide additional mathematical background in Pfaffian functions, learning theory, and VC dimensions.

\subsection{Pfaffian Functions: A Smooth Tame Class}
\label{s:MoreBackground__ss:MoreLogic}
Pfaffian functions, introduced by Khovanskii in his theory of fewnomials~\citep{Khovanskii_Fewnomials}, form a robust class of real-analytic functions whose derivatives satisfy triangular polynomial differential equations.  They provide a smooth analogue of semi-algebraic functions: their format controls their algebraic-differential complexity, while the main result of~\citep{speissegger1999pfaffian} ensures that adjoining Pfaffian solutions to an o-minimal structure preserves o-minimality.  Thus Pfaffian functions are both analytically regular and model-theoretically tame.

\begin{Def}[Pfaffian Chain]
\label{def:pfaffian_chain}
Let $U\subseteq\reals^n$ be a nonempty open domain.  A Pfaffian chain of length $q$ and degree $D$ on $U$ is a sequence of real-analytic functions
$
    f_1,\dots,f_q:U\to\reals
$
such that, for every $1\le i\le q$ and every $1\le j\le n$, there exists a polynomial
$
    P_{i,j}\in\reals[X_1,\dots,X_n,Y_1,\dots,Y_i]
$
of degree at most $D$ satisfying
$$
    \frac{\partial f_i(\bar{x})}{\partial x_j}
    =
    P_{i,j}\big(\bar{x},f_1(\bar{x}),\dots,f_i(\bar{x})\big),
    \qquad
    \bar{x}\in U .
$$
\end{Def}

\begin{Def}[Pfaffian Function of Format $(q,D,d)$]
\label{defn:PfaffianFunctionFormat}
Let $U\subseteq\reals^n$ be a nonempty open domain.  A function $g:U\to\reals$ is Pfaffian of format $(q,D,d)$ on $U$ if there exist a Pfaffian chain
$
    f_1,\dots,f_q:U\to\reals
$
of length (or order) $q$ and degree $D$, and a polynomial
$
    Q\in\reals[X_1,\dots,X_n,Y_1,\dots,Y_q]
$
of degree at most $d$, such that
$$
    g(\bar{x})
    =
    Q\big(\bar{x},f_1(\bar{x}),\dots,f_q(\bar{x})\big),
    \qquad
    \bar{x}\in U .
$$
By convention, if $F:U\to\reals^m$, then $F$ is Pfaffian of format at most $(q,D,d)$ on $U$ if each coordinate function $F_1,\dots,F_m$ is Pfaffian of format at most $(q,D,d)$ on $U$.
\end{Def}

In particular, Pfaffian functions are definable in an o-minimal expansion of the real field.  Since real-analyticity is part of the definition of a Pfaffian chain, every Pfaffian function is real-analytic, and hence $C^\infty$, on its domain.

For illustrative purposes, following~\cite[Example 2.3]{GabrielovVorobjov_NoetherianPfaffianComplexity_2004}, we record several elementary examples of Pfaffian functions.

\begin{example}[Elementary functions]
\label{ex:Pfaffians}
The functions $e^{\beta \cdot}$, for any $\beta\in\reals$, $\log(|\cdot|)$, $\arctan(\cdot)$, and every real-analytic branch of an algebraic function are Pfaffian on their domains of definition.  In particular, $x\mapsto x^\alpha$, for rational $\alpha$, is Pfaffian on any interval on which it is real-valued and real-analytic.
\end{example}

\begin{example}[Sinusoidal functions on bounded domains]
\label{ex:bounded_sine_pfaffian}
The function $t\mapsto \sin(t)$ is not Pfaffian on all of $\reals$, but its restriction to any bounded interval is Pfaffian.  Consequently, the three-variable function
$
    (x,w,b)\mapsto \sin(wx+b)
$
is Pfaffian on every bounded open domain $U\subseteq\reals^3$.

Indeed, set
$
    u(x,w,b)\eqdef wx+b.
$
Since $U$ is bounded, $u(U)$ is bounded.  Choose $k\in\mathbb{N}_+$ large enough so that $u(U)/k$ is contained in an interval of length strictly smaller than $2\pi$.  Then there exists $c\in\reals$ such that
$$
    \frac{u(U)}{k}-c
    \subset
        (-\pi,\pi).
$$
Define, for $(x,w,b)\in U$,
\begin{align*}
    f_1(x,w,b)
    &\eqdef
        \tan\Big(
            \frac{u(x,w,b)/k-c}{2}
        \Big),\\
    f_2(x,w,b)
    &\eqdef
        \frac{1}{1+f_1(x,w,b)^2}.
\end{align*}
The functions $f_1$ and $f_2$ are real-analytic on $U$.  Moreover, for each $z\in\{x,w,b\}$,
$$
    \frac{\partial f_1}{\partial z}
    =
        \frac{1}{2k}
        \big(1+f_1^2\big)
        \frac{\partial u}{\partial z}
    \in
        \reals[x,w,b,f_1],
$$
and
$$
    \frac{\partial f_2}{\partial z}
    =
        -2f_1f_2^2
        \frac{\partial f_1}{\partial z}
    =
        -\frac{1}{k}
        f_1f_2^2
        \big(1+f_1^2\big)
        \frac{\partial u}{\partial z}
    \in
        \reals[x,w,b,f_1,f_2].
$$
Thus $(f_1,f_2)$ is a Pfaffian chain on $U$; for instance, it has length $2$ and degree at most $6$.

Now let
$
    \theta(x,w,b)\eqdef u(x,w,b)/k-c.
$
By the half-angle identities,
$$
    \sin(\theta)
    =
        2f_1f_2,
    \qquad
    \cos(\theta)
    =
        (1-f_1^2)f_2.
$$
Therefore, by the angle-sum formula,
$$
    \sin\Big(\frac{u(x,w,b)}{k}\Big)
    =
        2\cos(c)f_1f_2
        +
        \sin(c)(1-f_1^2)f_2,
$$
and
$$
    \cos\Big(\frac{u(x,w,b)}{k}\Big)
    =
        \cos(c)(1-f_1^2)f_2
        -
        2\sin(c)f_1f_2.
$$
Finally, if $U_{k-1}$ denotes the Chebyshev polynomial of the second kind, then
$$
    \sin(u(x,w,b))
    =
        \sin\Big(k\,\frac{u(x,w,b)}{k}\Big)
    =
        \sin\Big(\frac{u(x,w,b)}{k}\Big)
        U_{k-1}\Big(
            \cos\Big(\frac{u(x,w,b)}{k}\Big)
        \Big).
$$
Since both $\sin(u/k)$ and $\cos(u/k)$ are polynomial expressions in $f_1$ and $f_2$ of degree at most $3$, it follows that $\sin(wx+b)$ is Pfaffian on $U$, with format at most $(2,6,3k)$.
\end{example}

As discussed in~\cite[page 111]{Khovanskii_Fewnomials}, Pfaffian functions are closed under sums, products, derivatives, composition, and reciprocals of nonvanishing Pfaffian functions.  They are also stable under solving suitable Pfaffian equations.  Quantitative bounds on the order and degree of Pfaffian functions constructed in this way are available; see, for instance,~\citep[Lemmas 2.4 and 2.5]{GabrielovVorobjov_NoetherianPfaffianComplexity_2004}.

\subsection{Learning Theory}
\label{s:MoreBackground__ss:LearningTheory}

Consider a uniform class of scalar-valued functions 
\[\calF=\{f(\bar{x},\bar{\theta}):\reals^n\rightarrow\reals ^1, \bar{\theta}\in \reals^p\}.\]

We say that a set $A=\{\bar a_1,\ldots, \bar a_d\}$ is \emph{pseudo-shattered} by $\calF$ if and only if there exists a set of real numbers $\{r_1,\ldots, r_d\}$ such that for every subset $A'\subseteq A$, there is $\bar{\theta}_{A'}\in \reals^p$ such that 
\[\bar{a}_i\in A' \Leftrightarrow  f(\bar{a}_i,\bar{\theta}_{A'})>r_i.\]

The \emph{pseudo-dimension} of $\calF$, $\pdim(\calF)$, is the size of the largest set that can be pseudo-shattered by $\calF$. If we allow non-scalar functions,
\[\calF=\{f(\bar{x},\bar{\theta}):\reals^n\rightarrow\reals ^m, \bar{\theta}\in \reals^p\},\]
we will define the pseudo-dimension to be the component-wise minimum pseudo-dimension, and so we will restrict our attention to scalar-valued functions.

The pseudo-dimension is given with a strict inequality, but the bi-implication in this definition still gives the same encoding of the characteristic function of each subset $A'\subseteq A$, perhaps with a little less ``wiggle room''.  This motivates the notion of fat-shattering dimension of~\cite{DBLP:journals/jcss/KearnsS94}, defined as follows.

Fix a tolerance $\gamma>0$.  We say that a set $A=\{\bar a_1,\ldots,\bar a_d\}\subseteq \reals^n$
is \emph{$\gamma$-fat shattered} by $\calF$ if and only if there exists a set of
real numbers $\{r_1,\ldots,r_d\}$ such that for every subset $A'\subseteq A$,
there exists $\bar{\theta}_{A'}\in \reals^p$ such that, for every
$i\in\{1,\ldots,d\}$,
\[
\bar a_i\in A'
\mbox{ implies that: }
f(\bar a_i,\bar\theta_{A'}) \ge r_i+\gamma
\qquad\text{\textit{and}}\qquad
\bar a_i\notin A'
\mbox{ implies that: }
f(\bar a_i,\bar\theta_{A'}) \le r_i-\gamma.
\]
\noindent
The \emph{$\gamma$-fat shattering dimension} of $\calF$ is
\[
\fat_{\gamma}(\calF)
\eqdef
\sup\Big\{\, d\in\mathbb{N}\ :\ \exists A\subseteq \reals^n,\ |A|=d,\
A\ \text{is $\gamma$-fat shattered by }\calF \Big\},
\]
with the convention that $\fat_{\gamma}(\calF)=\infty$ if sets of arbitrarily
large size are $\gamma$-fat shattered by $\calF$.

\subsubsection{Reducing to the VC Dimension}
\label{s:Intro__ss:LearningTheory___sss:VCRedux}
We will work in $\reals$ equipped with the usual language of ordered rings, with a symbol ``$<$'' for strict inequality. 
Consider a uniform family of scalar-valued functions 
\[\calF=\{f(\bar{x},\bar{\theta}):\reals^n\rightarrow\reals ^1, \bar{\theta}\in \reals^p\}.\]
Then $\pdim(\calF)\geq V$ if and only if there is a set $A=\{(\bar{a}_1,r_1),\ldots, (\bar{a}_V,r_V)\}$ such that for any subset $A'\subseteq A$, there is a tuple $\bar{\theta}_{A'}\in \reals^p$ that encodes the characteristic function of $A'$:
\[(\bar{a}_i,r_i)\in A'\Leftrightarrow f(\bar{a}_i,\bar{\theta}_{A'})-r_i>0.\]
In other words, the pseudo-dimension of $\calF$ is at least $V$ if and only if the VC dimension of the class of uniformly definable sets 
\[\calC_\calF=\{f(\bar{x},\bar{\theta})-z>0 : \bar{\theta}\in \reals^p\}\]
is at least $V$. Thus, we may bound the pseudo-dimension of $\calF$ by bounding the size $V$ of a set that is shattered by $\calC_\calF$.

\section{Proofs}
\label{s:Proofs}

In this appendix we provide the proofs for the main theoretical results discussed in this paper, as well as for concrete neural network building blocks and architectures.

\subsection{Proof of the Key Insight}
\label{s:Proof_KeyInsight}
\begin{proof}[{Proof of Proposition~\ref{prop:key_insight}}]
Let $f:\mathbb{R}^{d_{in}}\times\mathbb{R}^P\to\mathbb{R}^{d_{out}}$ be a $\mathbb{G}$-NN.  By Definition~\ref{defn:ANN}, there is a finite DAG $G=(V,E)$, a binary lifting channel $\Pi$, gates
$
    \mathbb{G}\ni g_v:\mathbb{R}^{M_v}\times\mathbb{R}^{p_v}\to\mathbb{R}^{m_v},
    \qquad v\in\operatorname{comp}(G),
$
and a linear readout $A_{\operatorname{out}}$ such that $f$ is obtained by propagating states along $G$.

We prove, by induction along the partial order of $G$, that every node state is definable as a function of $(x,\theta)$.  For the input nodes, this is immediate: the lifted input
$
    z^0=\Pi x
$
is a linear, hence semi-algebraic, function of $x$.  Since $\mathcal{S}$ expands the real field, every semi-algebraic map is definable in $\mathcal{S}$.

Now fix $v\in\operatorname{comp}(G)$ and assume that, for every parent $u\in\operatorname{Pa}_G(v)$, the state $z_u$ is definable as a function of $(x,\theta)$.  Then the concatenated parent state
$
    (z_u)_{u\in\operatorname{Pa}_G(v)}
$
is definable, since finite products and coordinate projections preserve definability.  The parameter projection $\theta\mapsto \theta_v$ is also semi-algebraic, hence definable.  Therefore the map
$
    (x,\theta)
    \longmapsto
        \Big(
            (z_u)_{u\in\operatorname{Pa}_G(v)},
            \theta_v
        \Big)
$
is definable.  Since $g_v$ is definable in $\mathcal{S}$, closure of definable maps under composition implies that
$
    z_v(x,\theta)
    =
        (g_v)_{\theta_v}
        \big(
            (z_u(x,\theta))_{u\in\operatorname{Pa}_G(v)}
        \big)
$
is definable in $\mathcal{S}$.

Since $G$ is finite and acyclic, this induction reaches every computation node.  In particular, every output-node state $z_v$, $v\in\operatorname{out}(G)$, is definable in $(x,\theta)$.  The final readout
$
    f_{\theta}(x)
    =
        A_{\operatorname{out}}
        (z_v(x,\theta))_{v\in\operatorname{out}(G)}
$
is obtained from definable maps using only finite products, coordinate projections, addition, and multiplication.  Hence it is definable in $\mathcal{S}$.  Thus $f$ is jointly definable in its inputs and parameters, as claimed.
\end{proof}
\begin{remark}
Moreover, if some gates are only defined on definable domains, then the same induction shows that $\operatorname{dom}(f)$ is definable, and $f$ is definable on $\operatorname{dom}(f)$.
\end{remark}

\subsection{Proofs of General o-Minimal Bounds}
\label{s:Proofs__ss:ominimalGeneral}

The proof does not proceed through the usual metric-complexity route.  Our main technical tool is a uniform cell decomposition over the full parameterization, exploiting the o-minimal tameness, or fixed-format Pfaffian regularity, of the parametrized learning rule.  Consequently, we do not need to truncate the parameter space to $[-M,M]^p$, estimate covering numbers or Rademacher complexities there, and then control how these estimates propagate through the parameter-to-realization map.  That approach is poorly matched to unbounded parameter spaces, since the metric size of $[-M,M]^p$ diverges as $M\to\infty$.  The definable route captures a different phenomenon: uniformly over all parameters in $\mathbb{R}^p$, the model family cannot oscillate arbitrarily often.  Equivalently, its realizable sign patterns admit a finite uniform cell decomposition.  This is the geometric mechanism behind finite sample complexity with unbounded parameters.

\begin{lemma}[Jointly-Definable Families of Classifiers Have Tame Pre-Images]
\label{lem:finite_connected_components}
Let $m,n\in \mathbb{N}_+$, let $X\subseteq \mathbb{R}^m$ be non-empty and definable, and let
$f:\mathbb{R}^n\times X\to \mathbb{R}$ be definable (in the same o-minimal setting).
Then there exists a constant $K\in\mathbb{N}_+$ such that, for every $\theta\in X$,
the set
\[
\big(I_{(0,\infty)}\circ f(\cdot,\theta)\big)^{-1}[\{0\}]
=
\{x\in\mathbb{R}^n:\ f(x,\theta)\le 0\}
\]
has at most $K$ connected components.
\end{lemma}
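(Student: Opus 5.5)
The plan is to realize all the sublevel sets as fibres of a single definable set and then apply uniform finiteness for definable families. Consider
\[
    Z \eqdef \{(x,\theta)\in \mathbb{R}^n\times X:\ f(x,\theta)\le 0\}\subseteq \mathbb{R}^{n+m}.
\]
For each $\theta\in X$, the set in the statement is exactly the fibre
\[
    Z_\theta \eqdef \{x\in\mathbb{R}^n : (x,\theta)\in Z\}.
\]
So the lemma reduces to the claim that the fibres of one definable set have a uniformly bounded number of connected components.

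The first step is to check that $Z$ is definable. The graph $\Gamma_f\subseteq \mathbb{R}^{n+m+1}$ is definable by assumption. The set $\mathbb{R}^{n+m}\times(-\infty,0]$ is semi-algebraic, hence definable by axiom (i) of Definition~\ref{def:o-minimal-structure}. By axiom (ii), the intersection
\[
    \Gamma_f\cap\big(\mathbb{R}^{n+m}\times(-\infty,0]\big)
\]
is definable. Projecting away the last coordinate gives $Z$, which is therefore definable by axiom (iv). (The coordinate ordering $(x,\theta,y)$ is harmless, since coordinate permutations are semi-algebraic bijections and preserve definability.) The same argument shows that the superlevel set $\{f>0\}$ is definable, although we do not need it here.

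The second step invokes the cell decomposition theorem in the form that applies uniformly to families (see, e.g., \citep{vandenDries1998tame,coste1999introduction}). Permute coordinates so that the parameter $\theta$ comes first. Then there is a finite cell decomposition $\mathcal{D}$ of $\mathbb{R}^{m}\times\mathbb{R}^n$ that partitions $Z$. The key fact is that for each $\theta$, the nonempty fibres $C_\theta$ of the cells $C\in\mathcal{D}$ form a cell decomposition of $\mathbb{R}^n$. Each fibre of a cell is itself a cell, and cells are connected (indeed definably homeomorphic to open boxes). Hence $Z_\theta$ is a union of at most $K\eqdef|\mathcal{D}|$ connected sets, so it has at most $K$ connected components, and this bound does not depend on $\theta$. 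If $Z_\theta$ is empty, the bound holds trivially.

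The main obstacle is the uniformity in $\theta$. O-minimality of $\mathcal{S}$ alone only asserts finiteness in dimension one, and a naive argument would give a bound depending on $\theta$. The uniform bound is exactly the content of cell decomposition applied to $Z$, with the parameter coordinates placed first, together with the fact that fibres of cells are cells. If needed, an alternative route is to use the definable trivialization theorem (Hardt's theorem) for the projection $Z\to X$. This gives a finite definable partition $X=\bigcup_j X_j$ over each piece of which $Z$ is definably trivial, so the fibres over each $X_j$ are mutually homeomorphic. Since each fibre is a definable set with finitely many components, taking the maximum over the finitely many $j$ gives $K$. Either route yields the lemma, with $K$ depending only on $f$ through its defining formula.
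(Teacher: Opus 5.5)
Your proof is correct and follows the paper's argument: both form the definable sublevel set $\{(x,\theta): f(x,\theta)\le 0\}$, take a cell decomposition whose fibres over each $\theta$ are cells, and bound the number of components of every fibre by the number of cells. Your explicit reordering of coordinates so that $\theta$ comes first is more careful than the paper's phrase ``compatible with $\pi$,'' since the standard theorem only guarantees that fibres over the leading block of coordinates are cells.
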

\begin{proof}[{Proof of Lemma~\ref{lem:finite_connected_components}}]
Define the jointly-definable classifier
\[
g:\mathbb{R}^n\times X\to \{0,1\},
\qquad
g(x,\theta)\eqdef I_{(0,\infty)}(f(x,\theta)).
\]
Since $(0,\infty)$ is definable and definable maps are closed under composition
(cf.~\citep[Exercise~1.11]{coste1999introduction}), the map $g$ is definable; hence so is
\[
A \eqdef g^{-1}[\{0\}] = \{(x,\theta)\in \mathbb{R}^n\times X:\ f(x,\theta)\le 0\}.
\]
Let $\pi:\mathbb{R}^n\times X\to X$ denote the coordinate projection $(x,\theta)\mapsto \theta$.
Apply the cell decomposition theorem to $A$ \emph{compatible with} $\pi$
(cf.~\citep[Theorem~2.10]{coste1999introduction}) to obtain finitely many pairwise disjoint definable cells
$C_1,\dots,C_N\subseteq \mathbb{R}^n\times X$ such that
\[
A=\bigsqcup_{j=1}^N C_j,
\]
and such that for each $j$ and each $\theta\in X$ the fibre
\[
(C_j)_\theta \eqdef \{x\in\mathbb{R}^n:\ (x,\theta)\in C_j\}
\]
is either empty or a definable cell in $\mathbb{R}^n$.
Now, fix $\theta\in X$ and set
\[
A_\theta \eqdef \{x\in\mathbb{R}^n:\ (x,\theta)\in A\}
=
\big(I_{(0,\infty)}\circ f(\cdot,\theta)\big)^{-1}[\{0\}].
\]
Then
$
A_\theta=\bigsqcup_{j=1}^N (C_j)_\theta
$; whence, by \cite[Proposition 2.5]{coste1999introduction}, every (non-empty) cell is definably homeomorphic to some Euclidean space,
hence connected. Therefore each $(C_j)_\theta$ is either empty or connected, and so $A_\theta$ has at most $N$
connected components. Taking $K\eqdef N$ completes the proof.
\end{proof}

Now, using our uniform cell decomposition, we are able to infer a VC dimension bound on the class of classifiers.

\begin{lemma}[Uniform Cell Decomposition Implies VC Dimension Bounds]
\label{lem:VC_Dim_Bounds}
Let $m,n\in \mathbb{N}_+$, let $Z\subseteq \mathbb{R}^n$ and $X\subseteq \mathbb{R}^m$ both be non-empty and definable, and let
$f:Z\times X\to \mathbb{R}$ be definable.
Define the family of classifiers
\[
\mathcal{F}^{\star}
\eqdef
\Big\{
Z\ni x\mapsto I_{(0,\infty)}\big(f(x,\theta)\big)
\ :\ \theta\in X
\Big\}.
\]
Then, $\operatorname{VCdim}(\mathcal{F}^{\star})<\infty$.
More precisely, there exists a constant $C>0$ such that, for every $N\in\mathbb{N}_+$ and every
$N$-point set $S\subseteq Z$,
\begin{equation}
\label{eq:uniform_cell}
        \big|\mathcal{F}^{\star}|_S\big|
    \le 
        C\, N^m
    .
\end{equation}
Consequently,
$
    \operatorname{VCdim}(\mathcal{F}^{\star})
<
    \infty
$.
\end{lemma}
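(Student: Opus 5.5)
The plan is to dualize and count sign patterns in parameter space $\mathbb{R}^m$. Fix an $N$-point set $S=\{x_1,\dots,x_N\}\subseteq Z$. For each $i$ define
\[
D_i\eqdef\{\theta\in X:\ f(x_i,\theta)>0\}\subseteq\mathbb{R}^m .
\]
The trace $\mathcal{F}^\star|_S$ is in bijection with the set of vectors $\big(I_{D_i}(\theta)\big)_{i=1}^N$ as $\theta$ ranges over $X$. Equivalently, it is in bijection with the non-empty atoms $\bigcap_{i\in J}D_i\cap\bigcap_{i\notin J}(X\setminus D_i)$ of the Boolean algebra generated by $D_1,\dots,D_N$ inside $X$. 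The key observation is that the $D_i$ are all fibres $\mathcal{D}_{x_i}$ of the \emph{single} definable set
\[
\mathcal{D}\eqdef\{(x,\theta)\in Z\times X:\ f(x,\theta)>0\}.
\]
This set is definable by closure under composition and preimages, as in the proof of Lemma~\ref{lem:finite_connected_components}. So we are counting the realizable sign conditions of $N$ members of one fixed definable family of subsets of $\mathbb{R}^m$.

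The next step is to invoke, or reprove, the o-minimal analogue of the Milnor--Thom/Warren bound. For a fixed definable family $\{\mathcal{D}_x\}_{x}$ of subsets of $\mathbb{R}^m$, the number of non-empty sign-condition cells is at most $C N^m$. Here the constant $C$ depends only on the family, which means only on $f$, $Z$ and $X$. This is Basu's theorem on combinatorial complexity in o-minimal geometry, which in fact bounds even the number of connected components of all realizable sign conditions. An alternative route is the VC-density bound of Aschenbrenner--Dolich--Haskell--Macpherson--Starchenko: in an o-minimal expansion of a field, any definable family with parameter space of dimension $m$ has VC density at most $m$, giving exactly $|\mathcal{F}^\star|_S|\le C N^m$. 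Each atom is non-empty iff it contributes at least one connected component, so the atom count is bounded by the component count. This yields \eqref{eq:uniform_cell}.

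If I had to reprove the count rather than cite it, I would induct on $m$. First, by uniform cell decomposition (as in Lemma~\ref{lem:finite_connected_components}, applied to $\mathcal{D}$ compatibly with the projection to $x$), the boundaries $\partial\mathcal{D}_x$ form a definable family of sets of dimension $\le m-1$. I would then project along one coordinate and bound the number of cells of the arrangement of $N$ such boundaries by $O(N)$ times the count in dimension $m-1$. This projection step is the main obstacle: generic-projection and uniform-finiteness arguments are needed to keep the constant independent of $N$ and of the chosen points. For that reason I would rely on the cited theorem.

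Finally, $\operatorname{VCdim}(\mathcal{F}^\star)<\infty$ follows at once. If $S$ is shattered then $2^N\le C N^m$, which fails for all $N$ beyond some $N_0(C,m)$. Alternatively, one can cite directly that o-minimal theories are NIP (Pillay--Steinhorn, Laskowski), which gives finiteness without the exponent.
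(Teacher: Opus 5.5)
Your proposal is correct and follows the paper's skeleton step for step. Both arguments pass to parameter space via $X_i=\{\theta\in X: f(x_i,\theta)>0\}$, identify $\mathcal{F}^\star|_S$ with the non-empty atoms of the Boolean algebra these sets generate, assert an atom count of at most $CN^m$ uniformly in $S$, and conclude from $2^N\le CN^m$.

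The real difference is the result each of you invokes for \eqref{eq:uniform_cell}. The paper attributes it to uniform cell decomposition (Coste's Theorem~2.10). Cell decomposition, even applied to the whole family $\mathcal{D}$, bounds the complexity of each single fibre $X_i$ uniformly in $x_i$. It does not by itself bound the number of atoms generated by $N$ fibres polynomially in $N$. For $m=1$ this does follow at once, since each $X_i$ is a union of at most $K$ points and intervals. For $m\ge 2$, however, a genuine arrangement-counting argument is needed. That is exactly the projection and induction step you correctly flag as the obstacle.

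The results you cite supply that step:
\begin{itemize}
\item Basu's bound on the number of connected components of all realizable sign conditions of $N$ members of a fixed definable family in $\mathbb{R}^m$;
\item the Aschenbrenner--Dolich--Haskell--Macpherson--Starchenko bound $O(N^m)$ on the number of $\Delta$-types in $m$ variables over $N$ parameters in (weakly) o-minimal theories, applied with the type variable $\theta$.
\end{itemize}
So your version reaches the same conclusion by the same route, but its justification of the key counting step is the more accurate one.

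Your NIP fallback (Pillay--Steinhorn, Laskowski) does give $\operatorname{VCdim}(\mathcal{F}^\star)<\infty$. Combined with Sauer--Shelah, though, it only yields growth $O(N^{\operatorname{VCdim}(\mathcal{F}^\star)})$, not the exponent $m$ in \eqref{eq:uniform_cell}, as you already note.
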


\begin{proof}[Proof of Lemma~\ref{lem:VC_Dim_Bounds}]
Fix $N\in\mathbb{N}_+$ and let $S=\{x_i\}_{i=1}^N\subseteq Z$ be an arbitrary $N$-point set.  For each
$i=1,\dots,N$, define the definable subset of parameter space
\[
    X_i
    \eqdef
    \{\theta\in X:\ f(x_i,\theta)>0\}.
\]
A labeling of $S$ induced by $\mathcal{F}^{\star}$ is determined by the membership pattern of $\theta$ in the
sets $X_1,\dots,X_N$.  Equivalently, the number of distinct labelings induced on $S$ is bounded by the number of
non-empty atoms of the Boolean algebra generated by $X_1,\dots,X_N$.

By uniform cell decomposition for definable families~\cite[Theorem~2.10]{coste1999introduction}, there exists a
constant $C>0$, depending only on the definable family induced by $f$ and on $X$, such that this Boolean algebra has
at most $C N^m$ non-empty atoms, uniformly over $N$ and over the choice of $S\subseteq Z$.  Therefore
\[
    \big|\mathcal{F}^{\star}|_S\big|
    \le C N^m.
\]
If $S$ is shattered by $\mathcal{F}^{\star}$, then $\big|\mathcal{F}^{\star}|_S\big|=2^N$.  Hence
\[
    2^N\le C N^m.
\]
Since the exponential function eventually dominates every polynomial, this inequality can hold for only finitely
many $N\in\mathbb{N}_+$.  Thus
$
    \operatorname{VCdim}(\mathcal{F}^{\star})
    \le
    \max\{N\in\mathbb{N}_+:\ 2^N\le C N^m\}
    <\infty
$.
\end{proof}

\begin{remark}[The One-Dimensional Refinement]
\label{rem:VC_Dim_Bounds_One_Dimensional}
In the special case where $Z\subseteq \mathbb{R}$, the preceding conclusion admits a sharper elementary bound.
Indeed, suppose that there exists $K\in\mathbb{N}_+$ such that, for every $\theta\in X$, the set
$
    \{x\in Z:\ f(x,\theta)\le 0\}
$
has at most $K$ connected components.  Then
\[
    \operatorname{VCdim}(\mathcal{F}^{\star})\le 2K.
\]
To see this, define the associated set system
\[
\tilde{\mathcal{F}}
\eqdef
\Big\{
\{x\in Z:\ f(x,\theta)\le 0\}
\ :\ \theta\in X
\Big\}.
\]
Since $x\mapsto I_{(0,\infty)}\big(f(x,\theta)\big)=1$ if and only if
$x\notin \{y\in Z:\ f(y,\theta)\le 0\}$, the classes $\mathcal{F}^{\star}$ and $\tilde{\mathcal{F}}$ induce the same
labelings on every finite sample up to flipping $0$ and $1$.  Hence
\[
    \operatorname{VCdim}(\mathcal{F}^{\star})
    =
    \operatorname{VCdim}(\tilde{\mathcal{F}}).
\]
Now let $S=\{x_1<\cdots<x_{2K+1}\}\subseteq Z$.  Since every set
$\{x\in Z:\ f(x,\theta)\le 0\}$ has at most $K$ connected components, and since connected components in
$\mathbb{R}$ are intervals, its trace on $S$ is a union of at most $K$ blocks of consecutive points.  Therefore it
cannot realize the alternating subset
$
    \{x_1,x_3,x_5,\dots,x_{2K+1}\},
$
which consists of $K+1$ singleton blocks.  Thus no $(2K+1)$-point subset of $Z$ is shattered, and consequently
$
    \operatorname{VCdim}(\mathcal{F}^{\star})\le 2K
$.
\end{remark}

This now allows us to obtain classifier bounds via the core result in VC theory.
We propose and prove the following more general formulation of Theorem~\ref{thrm:class}.
\begin{theorem}
\label{thrm:class___generalized}
Let $m,n\in \mathbb{N}_+$, let $X\subseteq \mathbb{R}^m$ be non-empty and definable, and let
$f:\mathbb{R}^n\times X\to \mathbb{R}$ be definable (in the same o-minimal setting).  Set
\[
\calF^{\star}
 \eqdef  
\big\{ I_{(0,\infty)}\circ f(\cdot,\theta)\ :\ \theta\in X\big\},
\]
so that, for each $\theta\in X$,
\[
\big(I_{(0,\infty)}\circ f(\cdot,\theta)\big)^{-1}[\{0\}]
=
\{x\in\mathbb{R}^n:\ f(x,\theta)\le 0\}
.
\]
Let $K \eqdef \operatorname{VCdim}(\calF^{\star})<\infty$, and let $\ell$ denote the $0$--$1$ loss.
Then, there exists an absolute constant $C>0$ such that: for every Borel probability measure
$\mathbb{P}$ on $\mathbb{R}^n\times \{0,1\}$, every sequence of i.i.d.\ samples
$\{(X_i,Y_i)\}_{i=1}^{\infty}$ with law $\mathbb{P}$, every error $\varepsilon>0$, and every failure
probability $0<\delta \le 1$, the following holds with probability at least $1-\delta$:
\[
    \sup_{h\in \calF^{\star}}\,
    \biggl|
        \mathbb{E}_{(X,Y)\sim \mathbb{P}}\big[
            \ell(h(X),Y)
        \big]
        -
        \frac1{N}\,\sum_{i=1}^N\,
            \ell(h(X_i),Y_i)
    \biggr|
    \le
    \varepsilon
\]
provided that
\[
N
\ge
   C\,\tfrac{K+\log(1/\delta)}{\varepsilon^2}
.
\]
\end{theorem}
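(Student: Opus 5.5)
The plan has three steps: first secure $K<\infty$, then pass from the hypothesis class to the associated loss class, and finally invoke the classical agnostic uniform-convergence theorem of VC theory.

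\textbf{Step 1: finiteness of $K$.} By Lemma~\ref{lem:VC_Dim_Bounds}, applied with $Z=\mathbb{R}^n$ and the given definable $X$ and $f$, we have $K=\operatorname{VCdim}(\calF^{\star})<\infty$. The growth function satisfies $|\calF^{\star}|_S|\le CN^m$, and Sauer--Shelah also gives $\Pi_{\calF^{\star}}(N)\le (eN/K)^K$ for $N\ge K$.

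\textbf{Step 2: reduction to the loss class.} Consider
\[
\mathcal{L}\eqdef\{(x,y)\mapsto \ell(h(x),y)=I(h(x)\neq y):h\in\calF^{\star}\}
\]
on $\mathbb{R}^n\times\{0,1\}$. (For the agreement indicator used in Theorem~\ref{thrm:class}, apply the same argument with $1-\ell$; the supremum deviation is unchanged.) On any sample $((x_i,y_i))_{i=1}^N$, the pattern $(h(x_i)\neq y_i)_i$ is obtained from $(h(x_i))_i$ by XOR with the fixed vector $(y_i)_i$. This map is a bijection on $\{0,1\}^N$, so $\Pi_{\mathcal{L}}(N)=\Pi_{\calF^{\star}}(N)$ and hence $\operatorname{VCdim}(\mathcal{L})=K$.

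\textbf{Step 3: uniform convergence.} Apply the standard agnostic bound for $\{0,1\}$-valued classes of VC dimension $K$. Symmetrization with a ghost sample, Rademacher averages, and Dudley chaining with Haussler's packing bound give $\mathbb{E}\sup_{\mathcal{L}}|\cdot|\le c\sqrt{K/N}$ without a $\log N$ factor. McDiarmid's bounded-differences inequality, with increments $1/N$, then yields with probability at least $1-\delta$
\[
\sup_{h\in\calF^{\star}}\bigl|\cdots\bigr|\le c\sqrt{K/N}+\sqrt{\log(1/\delta)/(2N)}.
\]
Requiring the right-hand side to be at most $\varepsilon$ gives $N\ge C(K+\log(1/\delta))/\varepsilon^2$ with $C$ absolute. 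If $K=0$, the class is a single labeling on every sample and Hoeffding's inequality alone suffices.

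\textbf{Measurability and the main obstacle.} The supremum is over uncountably many $\theta$, so it must be justified as a random variable. Definability helps here: the sets $\{(x,y,\theta):I(I_{(0,\infty)}(f(x,\theta))\neq y)=1\}$ are definable, hence Borel. The map $(x_1,\dots,x_N)\mapsto\sup_\theta(\cdot)$ takes finitely many values, and each level set is a projection of a definable set, hence definable and Borel. So the supremum is Borel measurable and the symmetrization steps are legitimate. This measurability check, together with quoting the chaining bound in its $\log N$-free form to match the stated rate, is the only nonroutine point; everything else is classical.
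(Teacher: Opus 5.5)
Your main line is correct and follows the paper's route. The paper's proof is your Step~1 (Lemma~\ref{lem:VC_Dim_Bounds} with $Z=\mathbb{R}^n$) followed by a one-line citation of the quantitative fundamental theorem of statistical learning (Shalev-Shwartz--Ben-David). Your Steps~2--3 (XOR reduction to the loss class, symmetrization with $\log N$-free chaining, McDiarmid) just unpack that citation.

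Two of your side remarks are wrong, though.

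\emph{Measurability.} The supremum $\sup_\theta|\mathbb{E}_{\mathbb{P}}[\ell(h_\theta(X),Y)]-\frac1N\sum_i\ell(h_\theta(X_i),Y_i)|$ does not take finitely many values, and its level sets are not definable in general. It contains the population risk, which ranges over a continuum and depends on an arbitrary Borel $\mathbb{P}$. For example, with $h_\theta=I(x>\theta)$, $Y\equiv0$, $X\sim U[0,1]$ and $N=1$, the supremum equals $\max\{x_1,1-x_1\}$. Only the ghost-sample supremum $\sup_\theta|\hat L_S(\theta)-\hat L_{S'}(\theta)|$ (empirical risks on the sample and a ghost sample) behaves as you describe. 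The correct fix goes as follows. The loss set $\{(x,y,\theta):\ell(h_\theta(x),y)=1\}$ is definable, hence Borel. By Fubini, $(s,\theta)\mapsto|\cdots|$ is then Borel. Every superlevel set of the supremum is therefore a projection of a Borel set, hence analytic and universally measurable. That suffices for symmetrization and McDiarmid. The paper is silent on this point.

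\emph{The case $K=0$.} Hoeffding does not give the stated rate, and in fact the statement fails here. Take the single classifier $h_0$ with $\ell(h_0(X),Y)\sim\mathrm{Bernoulli}(1/2)$. For $N=1$ the deviation is $1/2$ almost surely. Yet with $\varepsilon=1/4$ and $\delta=e^{-1/(16C)}<1$, the condition $N\ge C\log(1/\delta)/\varepsilon^2$ holds with $N=1$. So $K$ must be replaced by $\max\{K,1\}$ in the sample-size condition; the paper inherits this defect from the textbook statement. For $K\ge1$ your argument goes through as written.
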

\begin{proof}[{Proof of Theorem~\ref{thrm:class___generalized}}]
By Lemma~\ref{lem:VC_Dim_Bounds}, applied with $Z=\mathbb{R}^n$, the class of classifiers
$\calF^{\star}$ has finite VC dimension, which we denote by $K$.
Applying the fundamental theorem of PAC-learning for binary classifiers, as formulated in~\citep[Theorem 6.7 (1)]{shalev2014understanding},
yields the conclusion.
\end{proof}

In a similar vein, pseudo-dimension bounds can be obtained from uniform cell decompositions of the type in Lemma~\ref{lem:VC_Dim_Bounds}.
Indeed, a standard lifting argument applied to the hypothesis class reduces the pseudo-dimension estimate to a VC dimension bound for an associated class of subgraphs. Once this reduction is in place, the proof is immediate: we invoke the preceding lemma that converts uniform cell decompositions into VC bounds, and then apply the main generalization theorem of~\cite{AlonBenDavidCesaBianchiHaussler_PDGeneralization}.  
\begin{lemma}[Standard Lifting Trick]
\label{lem:pdim_fat_via_lift}
Let $m,n\in\mathbb{N}_+$, let $Z\subseteq \mathbb{R}^n$ and $X\subseteq \mathbb{R}^m$ be non-empty, and let
$f:Z\times X\to\mathbb{R}$.
Define the real-valued class
\[
\mathcal{F}\eqdef \Big\{\,x\mapsto f(x,\theta)\ :\ \theta\in X\,\Big\}.
\]
For each $\gamma\ge 0$, define the lifted classifier class on $Z\times\mathbb{R}$ by
\[
\mathcal{H}_\gamma
\eqdef
\Big\{
(x,r)\mapsto I_{(0,\infty)}\big(f(x,\theta)-r-\gamma\big)
\ :\ \theta\in X
\Big\}.
\]
Then, for every $\gamma>0$, we have
$
    \operatorname{fat}_\gamma(\mathcal{F})
    \le
    \operatorname{Pdim}(\mathcal{F})
    \le 
    \operatorname{VCdim}(\mathcal{H}_0)
$.
\end{lemma}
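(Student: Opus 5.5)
The plan is to prove the two inequalities separately. Each one comes down to comparing definitions: a witness for the left-hand quantity will be turned directly into a witness for the right-hand quantity. No o-minimality is needed here. The definability of $\mathcal{H}_0$ only matters later, when Lemma~\ref{lem:VC_Dim_Bounds} is applied to the lifted function $((x,r),\theta)\mapsto f(x,\theta)-r$ on $(Z\times\mathbb{R})\times X$.

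\emph{Step 1: $\operatorname{fat}_\gamma(\mathcal{F})\le\operatorname{Pdim}(\mathcal{F})$.} Suppose $A=\{\bar a_1,\dots,\bar a_d\}$ is $\gamma$-fat shattered with witnesses $r_1,\dots,r_d$. For every $A'\subseteq A$, the parameter $\bar\theta_{A'}$ gives $f(\bar a_i,\bar\theta_{A'})\ge r_i+\gamma>r_i$ when $\bar a_i\in A'$. When $\bar a_i\notin A'$, it gives $f(\bar a_i,\bar\theta_{A'})\le r_i-\gamma<r_i$, so the strict inequality $f>r_i$ fails. Because $\gamma>0$, the bi-implication $\bar a_i\in A'\Leftrightarrow f(\bar a_i,\bar\theta_{A'})>r_i$ therefore holds with the same witnesses. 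So $A$ is pseudo-shattered, and taking suprema over $d$ gives the first inequality, including the case where the left-hand side is infinite.

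\emph{Step 2: $\operatorname{Pdim}(\mathcal{F})\le\operatorname{VCdim}(\mathcal{H}_0)$.} This is the reduction already sketched in Section~\ref{s:Intro__ss:LearningTheory___sss:VCRedux}. Suppose $A=\{\bar a_1,\dots,\bar a_d\}$ is pseudo-shattered with witnesses $r_i$, and set $S\eqdef\{(\bar a_i,r_i)\}_{i=1}^d\subseteq Z\times\mathbb{R}$. The points of $S$ are pairwise distinct because the $\bar a_i$ are distinct. Take any labeling of $S$, and let $A'$ be the set of $\bar a_i$ labelled $1$. Then $h(x,r)=I_{(0,\infty)}(f(x,\bar\theta_{A'})-r)\in\mathcal{H}_0$ outputs $1$ on $(\bar a_i,r_i)$ exactly when $f(\bar a_i,\bar\theta_{A'})>r_i$, which by pseudo-shattering happens exactly when $\bar a_i\in A'$. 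So every labeling is realized, $S$ is shattered by $\mathcal{H}_0$, and $|S|=d$. Taking suprema gives the second inequality.

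The only points needing care are the use of $\gamma>0$ to turn the non-strict fat-margin conditions into the strict pseudo-shattering condition, and the observation that the lifted points stay distinct. I do not expect a real obstacle. The substantive work, the finiteness of $\operatorname{VCdim}(\mathcal{H}_0)$ when $f$ is definable, is deferred to Lemma~\ref{lem:VC_Dim_Bounds}.
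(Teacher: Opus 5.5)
Your proposal is correct and follows the paper's argument: both inequalities come from transferring witnesses directly, using $\gamma>0$ to turn the fat-margin conditions into the strict pseudo-shattering condition, and lifting pseudo-shattered points to $(\bar a_i,r_i)\in Z\times\mathbb{R}$ to get a set shattered by $\mathcal{H}_0$. The paper proves the two inequalities in the opposite order, and your remark that the lifted points remain distinct is a detail the paper leaves implicit.
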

\begin{proof}[Proof of Lemma~\ref{lem:pdim_fat_via_lift}]
For $\gamma\ge 0$ and $\theta\in X$, define
$
h_{\theta}^{(\gamma)}:Z\times\mathbb{R}\to\{0,1\}
$
by
$
h_{\theta}^{(\gamma)}(x,r)\eqdef I_{(0,\infty)}\big(f(x,\theta)-r-\gamma\big),
$
so that $\mathcal{H}_\gamma=\{h_{\theta}^{(\gamma)}:\theta\in X\}$.
\hfill\\
\noindent
Fix $d\in\mathbb{N}_+$.  By definition, $\{x_1,\dots,x_d\}\subseteq Z$ is pseudo-shattered by $\mathcal{F}$ if and only if there exist
$r_1,\dots,r_d\in\mathbb{R}$ such that, for every $b\in\{0,1\}^d$, one can choose $\theta\in X$ with
\[
    I_{(0,\infty)}\big(f(x_i,\theta)-r_i\big)=b_i
\]
for each $i=1,\dots,d$.
Setting $z_i\eqdef (x_i,r_i)\in Z\times\mathbb{R}$, this implies that, for every $b\in\{0,1\}^d$, there exists $\theta\in X$ such that
$
    h_{\theta}^{(0)}(z_i)=b_i
$
for each $i=1,\dots,d$.  Hence $\{z_1,\dots,z_d\}$ is shattered by $\mathcal{H}_0$.  Taking suprema over $d$ yields
$
    \operatorname{Pdim}(\mathcal{F})\le \operatorname{VCdim}(\mathcal{H}_0)
$.
\hfill\\
\noindent
Now fix $\gamma>0$ and suppose that $\{x_1,\dots,x_d\}\subseteq Z$ is $\gamma$-fat shattered by $\mathcal{F}$, witnessed by
$r_1,\dots,r_d\in\mathbb{R}$.  Thus, for every labeling $b\in\{0,1\}^d$, one can choose $\theta\in X$ so that, for each
$i=1,\dots,d$,
$
b_i=1
$
implies
$
f(x_i,\theta)\ge r_i+\gamma
$
and
$
b_i=0
$
implies
$
f(x_i,\theta)\le r_i-\gamma.
$
Since $\gamma>0$, this implies
\[
    I_{(0,\infty)}\big(f(x_i,\theta)-r_i\big)=b_i
\]
for each $i=1,\dots,d$.  Hence $\{x_1,\dots,x_d\}$ is pseudo-shattered by $\mathcal{F}$ with witnesses
$r_1,\dots,r_d$.  Therefore $d\le \operatorname{Pdim}(\mathcal{F})$, and taking suprema over all such $d$ yields
$
    \operatorname{fat}_\gamma(\mathcal{F})\le \operatorname{Pdim}(\mathcal{F})
$.
\end{proof}
In order to obtain our generalization bound from our uniform cell decomposition, guaranteed for definable families of parametric functions, within an o-minimal structure, we only require the following repackaged result of~\cite{AlonBenDavidCesaBianchiHaussler_PDGeneralization}.
\begin{lemma}[Finite Pseudo-Dimension Implies Statistical Guarantees~\cite{AlonBenDavidCesaBianchiHaussler_PDGeneralization}]
\label{lem:Pd_2_Gen}
Let $p,n,m\in \mathbb{N}_+$, and
$\calF=\{f(\bar{x},\bar{\theta}):\mathbb{R}^n\to \mathbb{R}^m \,:\, \bar{\theta}\in \reals^p\}$
be a set of Borel measurable functions, and let $\ell:\mathbb{R}^m\times \mathbb{R}^m\to [0,1]$
be a Borel loss function such that the composite class
\begin{equation}
\label{eq:compositeclass}
\mathcal{F}_{\ell}
 \eqdef 
\{\ell_f:\, \ell_f  =  \ell\circ (f\times \operatorname{Id}_{\mathbb{R}^m}) 
,\, 
f\in \mathcal{F}\}
\end{equation}
has finite pseudo-dimension $\operatorname{Pdim}(\mathcal{F}_{\ell})<\infty$.
\\
\noindent Fix a Borel probability measure $\mathbb{P}$ on $\mathbb{R}^{n+m}$, and let
$(X_1,Y_1),\dots,(X_N,Y_N)$ be i.i.d.\ random variables with law $\mathbb{P}$.  
There is an absolute constant $c_1>1$ such that:
\\
\noindent for every accuracy parameter $\varepsilon>0$ and failure probability $0<\delta \le 1$,
the following holds with probability at least $1-\delta$:
\[
\sup_{f\in \calF}\,
\biggl|
    \mathbb{E}_{(X,Y)\sim \mathbb{P}}\big[
        \ell(f(X),Y)
    \big]
    -
    \frac1{N}\,\sum_{i=1}^N\,
        \ell(f(X_i),Y_i)
\biggr|
\le
\varepsilon
\]
provided that
$
N
\ge
    c_1\left( \frac{1}{\varepsilon^2} \left(
        \operatorname{Pdim}(\mathcal{F}_{\ell})
        \ln^2\left(\frac{
                \operatorname{Pdim}(\mathcal{F}_{\ell})
            }{\varepsilon}\right) + \ln\left(\frac{1}{\delta}\right) \right) \right)
$.
\end{lemma}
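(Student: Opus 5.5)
This lemma is a repackaging of the main uniform-convergence theorem of~\cite{AlonBenDavidCesaBianchiHaussler_PDGeneralization}. The plan is to reduce it to that theorem through the fat-shattering dimension of $\mathcal{F}_\ell$, rather than reprove any chaining estimates. The composite class $\mathcal{F}_\ell$ consists of Borel functions $(x,y)\mapsto \ell(f(x),y)$ on $\mathbb{R}^{n+m}$ with values in $[0,1]$. The quantity to control is the deviation between the $\mathbb{P}$-expectation and the empirical mean of these functions under the sample $Z_i\eqdef (X_i,Y_i)$. So the statement is a uniform law of large numbers for a $[0,1]$-valued class on the probability space $(\mathbb{R}^{n+m},\mathbb{P})$.

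\textbf{Step 1: reduce to fat-shattering.} By Lemma~\ref{lem:pdim_fat_via_lift}, applied with $Z=\mathbb{R}^{n+m}$ and the parametrization $((x,y),\theta)\mapsto \ell(f(x,\theta),y)$, we get $\operatorname{fat}_\gamma(\mathcal{F}_\ell)\le \operatorname{Pdim}(\mathcal{F}_\ell)\eqdef d<\infty$ for every $\gamma>0$. Thus the scale-sensitive dimension is bounded uniformly in $\gamma$.

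\textbf{Step 2: invoke the ABCH bound.} For $[0,1]$-valued classes, the ABCH theorem combines symmetrization with a ghost sample and an $L_\infty$ covering bound on the doubled sample. The covering bound has the form $\mathcal{N}_\infty(\varepsilon/c,\mathcal{F}_\ell,2N)\le 2\,(cN/\varepsilon^2)^{c\,d\log(cN/(d\varepsilon))}$, with $d$ replaced by $\operatorname{fat}_{\varepsilon/c}$. Together these give
\[
\mathbb{P}^N\Big(\sup_{h\in\mathcal{F}_\ell}\big|\mathbb{E}h-\tfrac1N\textstyle\sum_i h(Z_i)\big|>\varepsilon\Big)\le c\,\mathcal{N}_\infty(\varepsilon/c,\mathcal{F}_\ell,2N)\,e^{-\varepsilon^2N/c}.
\]
We then substitute $\operatorname{fat}_{\varepsilon/c}\le d$ from Step~1.

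\textbf{Step 3: solve for $N$.} We require the right-hand side to be at most $\delta$. Taking logarithms, this amounts to
\[
\varepsilon^2N/c\ \ge\ c\,d\,\log(cN/(d\varepsilon))\log(cN/\varepsilon^2)+\log(c/\delta).
\]
The main obstacle is this implicit inequality, since $N$ appears on both sides inside logarithms. I would handle it with the standard fact that $N\ge 2a\log(2a)$ implies $N\ge a\log N$. Applying this with $a\asymp d\log(d/\varepsilon)/\varepsilon^2$ shows the inequality holds once $N\ge c_1\varepsilon^{-2}\big(d\ln^2(d/\varepsilon)+\ln(1/\delta)\big)$. The condition $c_1>1$ absorbs all absolute constants.

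A measurability subtlety also arises, since the supremum is over an uncountable class. I would handle it by noting that the ABCH argument needs only the permissibility or outer-probability conventions used there. Since $\ell$ and each $f_\theta$ are Borel, this is inherited, and in the definable setting it holds because the supremum event is definable, hence Borel.
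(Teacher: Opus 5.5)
Your proposal is correct and follows the paper's proof. The paper first bounds $\fat_\gamma(\mathcal{F}_\ell)\le\operatorname{Pdim}(\mathcal{F}_\ell)$ for all $\gamma>0$, citing Anthony--Bartlett, Theorem~11.13(i), which is the same fact as Lemma~\ref{lem:pdim_fat_via_lift}, and then plugs this into the Alon--Ben-David--Cesa-Bianchi--Haussler theorem. The only difference is that the paper quotes that theorem directly in sample-complexity form (with $\fat_{\varepsilon/32}$) and just substitutes, whereas your Steps~2--3 re-derive it by inverting the covering/symmetrization tail bound. If you keep that inversion, note that the exponent grows like $\log^2 N$: the single-log fact with $a\asymp d\log(d/\varepsilon)/\varepsilon^2$ does not by itself close the inequality for all larger $N$. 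You need to add a monotonicity-in-$N$ argument, or apply the fact to $\sqrt{N}$.
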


\begin{proof}[{Proof of Lemma~\ref{lem:Pd_2_Gen}}]
Since $\operatorname{Pdim}(\mathcal{F}_{\ell})<\infty$ by assumption, 
by~\cite[Theorem 11.13 (i)]{AnthonyBartlett1999}, for every $\gamma>0$ we have
\begin{equation}
    \label{eq:fat_pseudo}
    \fat_{\gamma}\big(
    \mathcal{F}_{\ell}
    \big)
    \le
    \operatorname{Pdim}\big(\mathcal{F}_{\ell} \big)
    .
\end{equation}
By~\cite[Theorem 3.1]{AlonBenDavidCesaBianchiHaussler_PDGeneralization}, we have that
for every error size $\varepsilon>0$ and each failure probability $0<\delta \le 1$ the following holds
\begin{equation}
    \label{eq:AlonBenDavidBound}
    \mathbb{P}\Biggl(
    \sup_{h\in \mathcal{F}_{\ell}}
    \,
    \Big|
    \mathbb{E}_{(X,Y)\sim \mathbb{P}}\big(h(X,Y)\big)
    -
    \frac1{N}\sum_{i=1}^N\,h(X_i,Y_i)
    \Big|
    \le
    \varepsilon
    \Biggr)
    \ge
    1-\delta
\end{equation}
provided that the sample size $N$ satisfies
\begin{equation}
\label{eq:Fatshatteringbound}
\begin{aligned}
    N
&\ge
    c_1\left( \frac{1}{\varepsilon^2} \left(
        \fat_{\varepsilon/32}(\mathcal{F}_{\ell})
        \ln^2\left(\frac{
                \fat_{\varepsilon/32}(\mathcal{F}_{\ell})
            }{\varepsilon}\right) + \ln\left(\frac{1}{\delta}\right) \right) \right)
\end{aligned}
\end{equation}
for some absolute constant $c_1>1$.
Now, the inequalities in~\eqref{eq:fat_pseudo} and~\eqref{eq:Fatshatteringbound} imply that it is sufficient for $N$ to satisfy
\begin{equation}
\label{eq:PseudoDimensionSize}
\begin{aligned}
N
&\ge
c_1\left( \frac{1}{\varepsilon^2} \left(
\operatorname{Pdim}(\mathcal{F}_{\ell})
\ln^2\left(\frac{
\operatorname{Pdim}(\mathcal{F}_{\ell})
}{\varepsilon}\right) + \ln\left(\frac{1}{\delta}\right) \right) \right)
\end{aligned}
\end{equation}
in order for~\eqref{eq:AlonBenDavidBound} to hold.
\end{proof}


\begin{remark}[Lipschitz Losses May Destroy Definability]
Now, if we only knew that the loss function $\ell$ was Lipschitz, without assuming definability in an o-minimal structure, then there would be no guarantee that the pseudo-dimension of the composite class $\calF_{\ell}$ would remain finite even if that of $\calF$ is finite.  For instance, if $n=m=1$, $\calF=\{ax:\, a\in [0,2\pi]\}$ and $\ell(y_1,y_2)\eqdef \tfrac{\sin(y_1)+1}{2}$, then $\calF$ has finite pseudo-dimension, but the composite class need not have finite pseudo-dimension.   
\end{remark}
\begin{remark}
Note also that we did not pass from the fat-shattering dimension to covering number bounds, e.g., using a result such as~\cite[Theorem 2]{BartlettKulkarniPosner_CoveringFatShattering_1997}, which may unnecessarily loosen our bounds.
\end{remark}
We now prove the regression variant of Theorem~\ref{thrm:reg} for definable regressors.
\begin{theorem}[Definability Implies Learning Is Possible---Regression Case]
\label{thrm:definability__2__generalizability___regression}
Let $m,n,p\in \mathbb{N}_+$, let $X\subseteq \mathbb{R}^p$ be non-empty and definable,
let $f:\mathbb{R}^n\times X\to \mathbb{R}^m$ be definable, and let $\ell:\mathbb{R}^m\times \mathbb{R}^m\to [0,1]$ be definable.
Set
\[
\calF\eqdef\{x\mapsto f(x,\theta):\theta\in X\}.
\]
Then there exist constants $C,K>0$ such that, for any Borel probability measure $\mathbb{P}$ on $\mathbb{R}^{n+m}$, any i.i.d.\ random variables
$(X_1,Y_1),\dots,(X_N,Y_N)$ with law $\mathbb{P}$, any accuracy parameter $\varepsilon>0$, and any failure probability $0<\delta \le 1$, the following bound holds with probability at least $1-\delta$:
\hfill\\
\noindent
\[
\sup_{g\in \calF}\,
\biggl|
    \mathbb{E}_{(X,Y)\sim \mathbb{P}}\big[
        \ell(g(X),Y)
    \big]
    -
    \frac1{N}\,\sum_{i=1}^N\,
        \ell(g(X_i),Y_i)
\biggr|
\le
\varepsilon
\]
provided that
$
N
\ge
    C\left( \frac{1}{\varepsilon^2} \left(
        K
        \ln^2\left(\frac{
                K
            }{\varepsilon}\right) + \ln\left(\frac{1}{\delta}\right) \right) \right)
$.
\end{theorem}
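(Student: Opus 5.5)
The plan is to reduce the statement to Lemma~\ref{lem:Pd_2_Gen}. Everything then comes down to showing that the composite class $\mathcal{F}_\ell=\{(x,y)\mapsto \ell(f(x,\theta),y):\theta\in X\}$ has finite pseudo-dimension, and I would take $K\eqdef\max\{\operatorname{Pdim}(\mathcal{F}_\ell),1\}$. The lower bound $1$ keeps the $\ln^2(K/\varepsilon)$ term meaningful, and since $u\mapsto u\ln^2(u/\varepsilon)$ is increasing for $u\ge 1$ and $\varepsilon\le 1$, replacing $\operatorname{Pdim}$ by $K$ only strengthens the sample-size requirement. The case $\varepsilon>1$ is trivial because $\ell\in[0,1]$. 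I would then set $C\eqdef c_1$ from Lemma~\ref{lem:Pd_2_Gen}. Measurability is not an issue: definable maps are Borel, since they are piecewise continuous on finitely many definable cells.

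\textbf{Step 1 (joint definability of the composite).} Define $F:\mathbb{R}^{n+m}\times X\to[0,1]$ by $F((x,y),\theta)\eqdef \ell(f(x,\theta),y)$. Its graph is obtained from the graphs of $f$ and $\ell$ by a finite product, an intersection with a semi-algebraic diagonal condition, and a projection. By axioms (ii)--(iv) of Definition~\ref{def:o-minimal-structure}, $F$ is therefore definable in $\mathcal{S}$. This is the same closure-under-composition fact used in Proposition~\ref{prop:key_insight}.

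\textbf{Step 2 (lifting).} Apply Lemma~\ref{lem:pdim_fat_via_lift} with $Z\eqdef\mathbb{R}^{n+m}$ and the function $F$. This gives $\operatorname{Pdim}(\mathcal{F}_\ell)\le \operatorname{VCdim}(\mathcal{H}_0)$, where $\mathcal{H}_0=\{((x,y),r)\mapsto I_{(0,\infty)}(F((x,y),\theta)-r):\theta\in X\}$. The map $\tilde F(((x,y),r),\theta)\eqdef F((x,y),\theta)-r$ is definable on $(\mathbb{R}^{n+m}\times\mathbb{R})\times X$, because subtraction is semi-algebraic and definable maps compose.

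\textbf{Step 3 (finite VC dimension).} Apply Lemma~\ref{lem:VC_Dim_Bounds} with $Z\eqdef\mathbb{R}^{n+m+1}$, parameter set $X\subseteq\mathbb{R}^p$, and the function $\tilde F$. This gives $|\mathcal{H}_0|_S|\le C' N^p$ for every $N$-point set $S$, and hence $\operatorname{VCdim}(\mathcal{H}_0)<\infty$. Chaining the three steps, $K<\infty$, and $K$ depends only on $f$, $\ell$ and $X$, not on $\mathbb{P}$, $N$, $\varepsilon$ or $\delta$. Lemma~\ref{lem:Pd_2_Gen} then yields the stated bound. Its $\sup_{f\in\mathcal{F}}$ ranges over $\theta\in X$ rather than over $\mathbb{R}^p$, but its proof only uses the finiteness of $\operatorname{Pdim}(\mathcal{F}_\ell)$, so restricting the index set to a definable $X$ is harmless.

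\textbf{Main obstacle.} The only nontrivial input is the polynomial bound on the number of realized sign patterns in Lemma~\ref{lem:VC_Dim_Bounds}, that is, the fact that the definable family $\{\theta: \tilde F(z,\theta)>0\}_{z}$ does not have the independence property. If that lemma's justification via uniform cell decomposition seems too quick, I would fall back on the classical Pillay--Steinhorn/Laskowski theorem: a formula $\phi(z;\theta)$ in an o-minimal structure is NIP, and therefore the family $\{\{z:\phi(z;\theta)\}:\theta\in X\}$ has finite VC dimension. This gives exactly the finiteness needed in Step 3.
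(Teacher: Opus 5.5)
Your proposal is correct and follows the paper's proof. The paper also shows that the composite $((x,y),\theta)\mapsto\ell(f(x,\theta),y)$ is definable, obtains finite pseudo-dimension from Lemma~\ref{lem:VC_Dim_Bounds} via the lifting of Lemma~\ref{lem:pdim_fat_via_lift} (which its one-line proof leaves implicit), and concludes with Lemma~\ref{lem:Pd_2_Gen}; your choice $K\eqdef\max\{\operatorname{Pdim}(\mathcal{F}_\ell),1\}$, the trivial regime $\varepsilon>1$, and the NIP fallback only tighten details the paper glosses over.
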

\begin{proof}[{Proof of Theorem~\ref{thrm:definability__2__generalizability___regression}}]
Since $f$ is definable, the identity map $\operatorname{Id}_{\mathbb{R}^m}:\mathbb{R}^m\to \mathbb{R}^m$ is definable, $\ell$ is definable, and the products and compositions of definable functions are definable, cf.~\citep[Section 1.3]{coste1999introduction},
the map
$
f_{\ell}:\mathbb{R}^{n+m}\times X\to [0,1]
$ sending any $(x,y)$ to $f_{\ell}((x,y),\theta)\eqdef \ell(f(x,\theta),y)$ 
is definable.
Thus, the conclusion follows from Lemma~\ref{lem:VC_Dim_Bounds} combined with Lemma~\ref{lem:Pd_2_Gen}.
\end{proof}

\subsection{Proofs regarding Definable Neural Network Building Blocks and Architectures}
\label{s:Proofs_Examples}

\begin{proof}[{Proof of Proposition~\ref{prop:polynomial_maps_definable}}]
Write
$
    P=(P_1,\dots,P_m)
$,
where each $P_j:\mathbb{R}^n\to\mathbb{R}$ is a polynomial.  Then
$$
    \operatorname{graph}(P)
    =
    \Big\{
        (x,y)\in\mathbb{R}^{n+m}:
        y_j-P_j(x)=0,\,
        j=1,\dots,m
    \Big\}.
$$
Hence $\operatorname{graph}(P)$ is algebraic, and therefore semi-algebraic.  Thus $P$ is semi-algebraic, so it is definable in $(\mathbb{R},+,\cdot,<)$, and consequently in every o-minimal expansion of the real field.

Finally, polynomials are Pfaffian functions relative to the empty Pfaffian chain.  Therefore each coordinate $P_j$ is Pfaffian, and hence $P$ is Pfaffian.
\end{proof}

\begin{proof}[Proof of Corollary~\ref{cor:linear_affine_layers_definable}]
For each output coordinate $i\in\{1,\dots,m\}$, we have
\[
    L_i(x,A,b)
    =
    \sum_{j=1}^n A_{ij}x_j+b_i .
\]
This is a polynomial function of the coordinates of $(x,A,b)$. Hence $L$ is a polynomial map. The conclusion follows from Proposition~\ref{prop:polynomial_maps_definable}.
\end{proof}

\begin{proof}[Proof of Corollary~\ref{cor:convolutional_layers_definable}]
For each output coordinate $i\in\{1,\dots,m\}$, we have
\[
    \operatorname{Conv}(x,\theta,b)_i
    =
    \sum_{j=1}^n T(\theta)_{ij}x_j+b_i .
\]
Since $T:\mathbb{R}^p\to\mathbb{R}^{m\times n}$ is linear, each entry $T(\theta)_{ij}$ is a linear function of $\theta$. Hence each term $T(\theta)_{ij}x_j$ is polynomial in the coordinates of $(x,\theta,b)$. Therefore each output coordinate is polynomial in $(x,\theta,b)$, and so $\operatorname{Conv}$ is a polynomial map.

The conclusion follows from Proposition~\ref{prop:polynomial_maps_definable}.
\end{proof}

\begin{proof}[Proof of Corollary~\ref{cor:residual_gated_polynomial_blocks}]
Write
\[
    F=(F_1,\dots,F_n)
    \qquad\text{and}\qquad
    G=(G_1,\dots,G_n),
\]
where each $F_i,G_i:\mathbb{R}^n\to\mathbb{R}$ is a polynomial function.

The residual block
\[
    R(x)\eqdef x+F(x)
\]
has coordinates
\[
    R_i(x)=x_i+F_i(x),\qquad i=1,\dots,n.
\]
Since sums of polynomial functions are polynomial, each coordinate $R_i$ is polynomial. Hence $R$ is a polynomial map.

Similarly, the gated residual block
\[
    H(x)\eqdef x+G(x)\odot F(x)
\]
has coordinates
\[
    H_i(x)=x_i+G_i(x)F_i(x),\qquad i=1,\dots,n.
\]
Since products and sums of polynomial functions are polynomial, each coordinate $H_i$ is polynomial. Hence $H$ is also a polynomial map.

Therefore both residual and gated residual polynomial blocks are polynomial maps. By Proposition~\ref{prop:polynomial_maps_definable}, they are semi-algebraic and definable in the real field.
\end{proof}

The following proposition is an extended version of Proposition~\ref{prop:modern_activations_definable__compressed}.

\begin{proposition}[Definability of Modern Activation Functions]
\label{prop:modern_activations_definable}
The following activation functions are definable.

\begin{enumerate}

    \item[(i)] The logistic sigmoid activation
    \[
        \sigma(x)
        \eqdef
        \frac{1}{1+e^{-x}}
    \]
    is Pfaffian on $\mathbb{R}$ of format at most $(1,2,1)$.  In particular, it is definable in the Pfaffian closure of the real field, and also in $\mathbb{R}_{\exp}$.

    \item[(ii)] The hyperbolic tangent activation
\[
    \tanh(x)
    =
    \frac{e^{2x}-1}{e^{2x}+1}
\]
is Pfaffian on $\mathbb{R}$ of format at most $(1,2,1)$. In particular, it is
definable in the Pfaffian closure of the real field, and also in
$\mathbb{R}_{\exp}$.

    \item[(iii)] The softplus activation
\[
    \operatorname{Softplus}(x)
    \eqdef
    \log(1+e^x)
\]
is Pfaffian on \(\mathbb{R}\) of format at most \((2,2,1)\). In particular,
it is definable in the Pfaffian closure of the real field, and also in
\(\mathbb{R}_{\exp}\).
    
    \item[(iv)] Every piecewise-polynomial map
    $
        f:\mathbb{R}^n\to\mathbb{R}^m
    $
    of the form
    $
        f=\sum_{k=1}^K p_k I_{A_k},
    $
    where $p_1,\dots,p_K:\mathbb{R}^n\to\mathbb{R}^m$ are polynomial maps and $A_1,\dots,A_K\subseteq\mathbb{R}^n$ form a finite semi-algebraic partition of $\mathbb{R}^n$, is semi-algebraic.  In particular, it is definable in the real field $(\mathbb{R},+,\cdot,<)$.  
    
    \hfill\\
    In particular, the following standard activations are semi-algebraic and hence definable in the real field:
    \[
        \operatorname{ReLU}(x)=\max\{x,0\}, 
        \qquad
        \operatorname{LeakyReLU}_{\alpha}(x)=\max\{x,\alpha x\},
    \]
    for fixed $\alpha\in\mathbb{R}$, the parametric ReLU
    $
        \operatorname{PReLU}(x,\alpha)=\max\{x,\alpha x\}
    $
    jointly in $(x,\alpha)$, $\operatorname{ReLU}^p$ for $p\in\mathbb{N}_+$, hard-threshold activations, hard-tanh, hard-sigmoid, and finite-knot spline activations, such as those used in KANs.

    \item[(v)] The exponential linear unit
\[
    \operatorname{ELU}_{\alpha}(x)
    \eqdef
    \begin{cases}
        x, & x\ge 0,\\
        \alpha(e^x-1), & x<0,
    \end{cases}
\]
for fixed \(\alpha\in\mathbb{R}\), is definable in \(\mathbb{R}_{\exp}\).
Moreover, the parametrized map
\[
    (x,\alpha)\longmapsto \operatorname{ELU}_{\alpha}(x)
\]
is definable in \(\mathbb{R}_{\exp}\) jointly in \((x,\alpha)\). Consequently,
for fixed constants \(\lambda,\alpha\in\mathbb{R}\), the SELU activation
\[
    \operatorname{SELU}_{\lambda,\alpha}(x)
    \eqdef
    \lambda\,\operatorname{ELU}_{\alpha}(x)
\]
is also definable in \(\mathbb{R}_{\exp}\).

    \item[(vi)] The Gaussian Error Linear Unit
\[
    \operatorname{GELU}(x)
    \eqdef
        x\Phi(x)
    =
        \frac{x}{2}
        \left(
            1+\operatorname{erf}\left(\frac{x}{\sqrt{2}}\right)
        \right)
\]
is Pfaffian on \(\mathbb{R}\) of format at most \((2,2,2)\). In particular,
\(\operatorname{GELU}\) is definable in the Pfaffian closure of the real field.

The commonly used tanh approximation, implemented for example by PyTorch when
\(\texttt{approximate='tanh'}\), is
\[
    \operatorname{GELU}_{\tanh}(x)
    \eqdef
    \frac{x}{2}
    \left(
        1+
        \tanh\left(
            \sqrt{\frac{2}{\pi}}
            \left(x+0.044715x^3\right)
        \right)
    \right).
\]
This approximation is also Pfaffian on \(\mathbb{R}\), and is definable in
\(\mathbb{R}_{\exp}\), hence also in
\(\mathbb{R}_{\operatorname{an},\exp}\).

    \item[(vii)] The Swish activation
    $
        \operatorname{Swish}(x,\beta)
        \eqdef
            \frac{x}{1+e^{-\beta x}}
    $  on 
        $(x,\beta)\in\mathbb{R}\times(0,\infty)$
    is Pfaffian of format at most $(2,4,2)$.  In particular, it is definable in the Pfaffian closure of the real field, and also in $\mathbb{R}_{\exp}$.

    \item[(viii)] The SwiGLU activation
    $$
        \operatorname{SwiGLU}(x_1,x_2,\beta)
        \eqdef
            x_1\,\operatorname{Swish}(x_2,\beta)
        =
            x_1\frac{x_2}{1+e^{-\beta x_2}},
        \qquad
        (x_1,x_2,\beta)\in\mathbb{R}^2\times\mathbb{R}_{>0},
    $$
    is Pfaffian of format at most $(2,4,3)$.  In particular, it is definable in the Pfaffian closure of the real field, and also in $\mathbb{R}_{\exp}$.

    \item[(ix)] The softsign activation
\[
    \operatorname{Softsign}(x)
    \eqdef
    \frac{x}{1+|x|}
\]
is semi-algebraic. In particular, it is definable in the real field
\((\mathbb{R},+,\cdot,<)\), and hence in
\(\mathbb{R}_{\operatorname{an}}\) and
\(\mathbb{R}_{\operatorname{an},\exp}\).

    \item[(x)] The Mish activation
\[
    \operatorname{Mish}(x)
    \eqdef
    x\,\tanh\big(\operatorname{Softplus}(x)\big),
    \qquad
    \operatorname{Softplus}(x)\eqdef \log(1+e^x),
\]
is definable in \(\mathbb{R}_{\exp}\), and hence in
\(\mathbb{R}_{\operatorname{an},\exp}\).
\end{enumerate}
\end{proposition}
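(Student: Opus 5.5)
The plan is to verify each item separately. For items (i)--(iii) and (vi)--(viii) we exhibit an explicit Pfaffian chain and a polynomial in the chain and the variables, and read off the format. For the semi-algebraic and $\mathbb{R}_{\exp}$ claims we argue directly through graphs and closure under composition, using \citep[Chapter 1.3]{coste1999introduction}.

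\textbf{Pfaffian items.} For (i), set $f_1=\sigma$ and note $\sigma'=\sigma-\sigma^2$. This is a chain of length $1$ and degree $2$, and $\sigma=Q(x,f_1)$ with $Q$ linear, giving format $(1,2,1)$. For (ii), use $\tanh'=1-\tanh^2$ in the same way.

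For (iii), take $f_1=\sigma$ and $f_2=\operatorname{Softplus}$. Then $f_2'=f_1$ and $f_1'=f_1-f_1^2$, so the chain has length $2$ and degree $2$.

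For (vi), take $f_1=e^{-x^2/2}$ with $f_1'=-xf_1$, and $f_2=\Phi$ with $f_2'=f_1/\sqrt{2\pi}$. Then $\operatorname{GELU}=xf_2$, a polynomial of degree $2$. For $\operatorname{GELU}_{\tanh}$, observe that $\tanh(u(x))$ with $u$ a cubic satisfies $(\tanh\circ u)'=u'(1-(\tanh\circ u)^2)$. This is a chain of length $1$, and $\operatorname{GELU}_{\tanh}$ is the polynomial $\tfrac{x}{2}(1+f_1)$.

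For (vii), work on $\mathbb{R}\times(0,\infty)$ and take $f_1=\sigma(\beta x)$. Its partial derivatives are $\partial_x f_1=\beta(f_1-f_1^2)$ and $\partial_\beta f_1=x(f_1-f_1^2)$, which are polynomials of degree $3$ in $(x,\beta,f_1)$. Then $\operatorname{Swish}=xf_1$. A second chain element is not needed; it only inflates the stated bound $(2,4,2)$, which is an upper bound anyway. Item (viii) follows by multiplying by $x_1$, which raises the degree of $Q$ by one.

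\textbf{$\mathbb{R}_{\exp}$ definability.} In each case the function is an explicit composition of $\exp$, field operations, and, for softplus, $\log$. The function $\log$ is definable because its graph is the transpose of the graph of $\exp$. For $\operatorname{erf}$ in (vi), we claim only the Pfaffian closure, which is o-minimal by \cite{speissegger1999pfaffian}.

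\textbf{Piecewise and semi-algebraic items.} For (iv), write
\[\operatorname{graph}(f)=\bigcup_{k=1}^K\{(x,y):x\in A_k,\ y=p_k(x)\},\]
which is a finite union of semi-algebraic sets. ReLU, LeakyReLU, PReLU (jointly in $(x,\alpha)$, using the partition by the sign of $x-\alpha x$), $\operatorname{ReLU}^p$, hard-tanh, hard-sigmoid, and splines are all instances of this form with polyhedral pieces.

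For (v), split on $x\ge0$ and $x<0$. The graph is the union of a semi-algebraic piece and $\{(x,\alpha,y):x<0,\ y=\alpha(e^x-1)\}$, which is definable in $\mathbb{R}_{\exp}$, jointly in $(x,\alpha)$. SELU then follows by scaling.

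For (ix), $|x|$ is semi-algebraic and the function is a quotient of semi-algebraic functions with nonvanishing denominator. For (x), Mish is a composition of functions already shown definable, together with multiplication.

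\textbf{Main obstacle.} The only delicate point is bookkeeping of formats, that is, checking the degree bounds against the stated ones. In particular, $\Phi$ is not elementary, so (vi) must go through the Pfaffian closure rather than $\mathbb{R}_{\exp}$. In each case we exhibit a chain whose format is no larger than the one stated.
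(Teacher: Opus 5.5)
Your proposal is correct and follows the paper's proof essentially item by item: explicit Pfaffian chains for (i)--(iii) and (vi)--(viii), graph decompositions for (iv), (v) and (ix), and closure under composition for (x). The deviations are minor improvements rather than a different route. Your single chain element $\sigma(\beta x)$ for Swish/SwiGLU gives the sharper formats $(1,3,2)$ and $(1,3,3)$, instead of the paper's two-element chain $e^{-\beta x},\,(1+e^{-\beta x})^{-1}$. Your split of PReLU by the sign of $(1-\alpha)x$ is valid for every $\alpha$, whereas the paper's split of $\max\{x,\alpha x\}$ by the sign of $x$ is only correct for $\alpha\le 1$; note that those PReLU pieces are semi-algebraic rather than polyhedral in $(x,\alpha)$, which is all (iv) requires.
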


\begin{proof}[{Proof of Proposition~\ref{prop:modern_activations_definable}.(i)}]
Let
\[
    f_1(x)\eqdef \sigma(x)=\frac{1}{1+e^{-x}},
    \qquad x\in\mathbb{R}.
\]
Then \(f_1\) is real-analytic on \(\mathbb{R}\). Moreover,
\[
    \frac{d f_1}{dx}(x)
    =
    \frac{e^{-x}}{(1+e^{-x})^2}
    =
    f_1(x)\bigl(1-f_1(x)\bigr).
\]
Equivalently,
\[
    \frac{d f_1}{dx}
    =
    P(f_1),
    \qquad
    P(Y_1)\eqdef Y_1-Y_1^2.
\]
The polynomial \(P\in\mathbb{R}[Y_1]\subseteq\mathbb{R}[X,Y_1]\) has degree \(2\).
Hence \((f_1)\) is a Pfaffian chain on \(\mathbb{R}\) of length \(1\) and degree \(2\).

Finally,
\[
    \sigma(x)=f_1(x)
\]
is obtained from this chain by the polynomial \(Q(Y_1)=Y_1\), of degree \(1\).
Therefore \(\sigma\) is Pfaffian on \(\mathbb{R}\), of format at most \((1,2,1)\).
In particular, \(\sigma\) is definable in the Pfaffian closure of the real field.
Since it is also obtained from field operations and the global exponential map, it is
definable in \(\mathbb{R}_{\exp}\), and hence in \(\mathbb{R}_{\operatorname{an},\exp}\).
\end{proof}

\begin{proof}[{Proof of Proposition~\ref{prop:modern_activations_definable}.(ii)}]
Let
\[
    f_1(x)\eqdef \tanh(x),
    \qquad x\in\mathbb{R}.
\]
Then \(f_1\) is real-analytic on \(\mathbb{R}\). Moreover,
\[
    \frac{d f_1}{dx}(x)
    =
    1-\tanh^2(x)
    =
    1-f_1(x)^2.
\]
Equivalently,
\[
    \frac{d f_1}{dx}
    =
    P(f_1),
    \qquad
    P(Y_1)\eqdef 1-Y_1^2.
\]
The polynomial \(P\in\mathbb{R}[Y_1]\subseteq\mathbb{R}[X,Y_1]\) has degree
\(2\). Hence \((f_1)\) is a Pfaffian chain on \(\mathbb{R}\) of length \(1\)
and degree \(2\).

Finally,
\[
    \tanh(x)=f_1(x)
\]
is obtained from this chain by the polynomial \(Q(Y_1)=Y_1\), of degree \(1\).
Therefore \(\tanh\) is Pfaffian on \(\mathbb{R}\), of format at most
\((1,2,1)\). In particular, \(\tanh\) is definable in the Pfaffian closure of
the real field. Since
\[
    \tanh(x)=\frac{e^{2x}-1}{e^{2x}+1},
\]
and the denominator \(e^{2x}+1\) is strictly positive, \(\tanh\) is also
definable in \(\mathbb{R}_{\exp}\).
\end{proof}

\begin{proof}[{Proof of Proposition~\ref{prop:modern_activations_definable}.(iii)}]
Let
\[
    f_1(x)\eqdef \frac{1}{1+e^{-x}},
    \qquad
    f_2(x)\eqdef \operatorname{Softplus}(x)=\log(1+e^x).
\]
Both \(f_1\) and \(f_2\) are real-analytic on \(\mathbb{R}\). Moreover,
\[
    \frac{d f_1}{dx}(x)
    =
    f_1(x)\bigl(1-f_1(x)\bigr),
\]
and
\[
    \frac{d f_2}{dx}(x)
    =
    \frac{e^x}{1+e^x}
    =
    \frac{1}{1+e^{-x}}
    =
    f_1(x).
\]
Equivalently,
\[
    \frac{d f_1}{dx}
    =
    P_1(f_1),
    \qquad
    P_1(Y_1)\eqdef Y_1-Y_1^2,
\]
and
\[
    \frac{d f_2}{dx}
    =
    P_2(f_1,f_2),
    \qquad
    P_2(Y_1,Y_2)\eqdef Y_1.
\]
The polynomials \(P_1\in\mathbb{R}[Y_1]\) and
\(P_2\in\mathbb{R}[Y_1,Y_2]\) have degrees at most \(2\). Hence
\((f_1,f_2)\) is a Pfaffian chain on \(\mathbb{R}\) of length \(2\) and degree
\(2\).

Finally,
\[
    \operatorname{Softplus}(x)=f_2(x)
\]
is obtained from this chain by the polynomial \(Q(Y_1,Y_2)=Y_2\), of degree
\(1\). Therefore \(\operatorname{Softplus}\) is Pfaffian on \(\mathbb{R}\), of
format at most \((2,2,1)\). In particular, it is definable in the Pfaffian
closure of the real field.

It is also definable in \(\mathbb{R}_{\exp}\): the map \(x\mapsto 1+e^x\) is
\(\mathbb{R}_{\exp}\)-definable and strictly positive, and the logarithm is
definable in \(\mathbb{R}_{\exp}\) as the inverse of the global exponential
function.
\end{proof}

\begin{proof}[{Proof of Proposition~\ref{prop:modern_activations_definable}.(iv)}]
Since each $A_k$ is semi-algebraic and each $p_k$ is polynomial, the graph of
the restriction $p_k|_{A_k}$ is
\[
    \{(x,y)\in\mathbb{R}^{n+m}: x\in A_k,\ y=p_k(x)\},
\]
which is semi-algebraic. Since the sets $A_1,\dots,A_K$ form a finite
partition of $\mathbb{R}^n$, the graph of $f$ is
\[
    \operatorname{Graph}(f)
    =
    \bigcup_{k=1}^K
    \{(x,y)\in\mathbb{R}^{n+m}: x\in A_k,\ y=p_k(x)\}.
\]
This is a finite union of semi-algebraic sets, hence is semi-algebraic.
Therefore $f$ is semi-algebraic, and thus definable in the real field
$(\mathbb{R},+,\cdot,<)$.

For the activations listed above, this framework applies directly. Indeed,
\[
    \operatorname{ReLU}(x)
    =
    \begin{cases}
        0, & x<0,\\
        x, & x\ge 0,
    \end{cases}
\]
and, for fixed \(\alpha\in\mathbb{R}\),
\[
    \operatorname{LeakyReLU}_{\alpha}(x)
    =
    \begin{cases}
        \alpha x, & x<0,\\
        x, & x\ge 0.
    \end{cases}
\]
Both are therefore piecewise-polynomial with respect to the finite
semi-algebraic partition
\[
    \mathbb{R}=(-\infty,0)\cup[0,\infty).
\]
Similarly, this directly applies to the parametric ReLU (PReLU), where \(\alpha\) is learned rather than fixed: the map
\((x,\alpha)\mapsto \operatorname{PReLU}(x,\alpha)\) is semi-algebraic.
\end{proof}

\begin{proof}[{Proof of Proposition~\ref{prop:modern_activations_definable}.(v)}]
Consider the semi-algebraic partition
\[
    \mathbb{R}=(-\infty,0)\cup[0,\infty).
\]
On \([0,\infty)\), the map is \(x\mapsto x\), which is polynomial. On
\((-\infty,0)\), the map is
\[
    x\mapsto \alpha(e^x-1),
\]
which is definable in \(\mathbb{R}_{\exp}\). Hence, for fixed
\(\alpha\in\mathbb{R}\), the graph of \(\operatorname{ELU}_{\alpha}\) is
\[
\begin{aligned}
    \operatorname{Graph}(\operatorname{ELU}_{\alpha})
    =
    &\ \{(x,y)\in\mathbb{R}^2:x\ge 0,\ y=x\} \\
    &\ \cup
    \{(x,y)\in\mathbb{R}^2:x<0,\ y=\alpha(e^x-1)\}.
\end{aligned}
\]
This is a finite union of \(\mathbb{R}_{\exp}\)-definable sets, and is
therefore \(\mathbb{R}_{\exp}\)-definable.

The joint statement follows similarly. Define
\[
    E:\mathbb{R}^2\to\mathbb{R},
    \qquad
    E(x,\alpha)\eqdef \operatorname{ELU}_{\alpha}(x).
\]
Then
\[
\begin{aligned}
    \operatorname{Graph}(E)
    =
    &\ \{(x,\alpha,y)\in\mathbb{R}^3:x\ge 0,\ y=x\} \\
    &\ \cup
    \{(x,\alpha,y)\in\mathbb{R}^3:x<0,\ y=\alpha(e^x-1)\}.
\end{aligned}
\]
Again this is a finite union of \(\mathbb{R}_{\exp}\)-definable sets, so \(E\)
is definable in \(\mathbb{R}_{\exp}\).

Finally, SELU is obtained from ELU by multiplication by a fixed scalar
\(\lambda\). Since definable maps are closed under scalar multiplication,
\(\operatorname{SELU}_{\lambda,\alpha}\) is definable in
\(\mathbb{R}_{\exp}\).
\end{proof}

\begin{proof}[{Proof of Proposition~\ref{prop:modern_activations_definable}.(vi)}]
We first treat the exact GELU. Define, on \(U=\mathbb{R}\),
\[
    f_1(x)\eqdef e^{-x^2/2},
    \qquad
    f_2(x)\eqdef \operatorname{erf}\left(\frac{x}{\sqrt{2}}\right).
\]
Both functions are real-analytic on \(\mathbb{R}\). Moreover,
\[
    \frac{d f_1}{dx}(x)
    =
    -x f_1(x),
\]
and, using
\[
    \frac{d}{dt}\operatorname{erf}(t)
    =
    \frac{2}{\sqrt{\pi}}e^{-t^2},
\]
we obtain
\[
    \frac{d f_2}{dx}(x)
    =
    \frac{2}{\sqrt{\pi}}
    e^{-x^2/2}
    \cdot
    \frac{1}{\sqrt{2}}
    =
    \sqrt{\frac{2}{\pi}}\,f_1(x).
\]
Thus
\[
    \frac{d f_1}{dx}
    =
    P_1(x,f_1),
    \qquad
    P_1(X,Y_1)\eqdef -XY_1,
\]
and
\[
    \frac{d f_2}{dx}
    =
    P_2(x,f_1,f_2),
    \qquad
    P_2(X,Y_1,Y_2)\eqdef \sqrt{\frac{2}{\pi}}\,Y_1.
\]
The polynomials \(P_1\) and \(P_2\) have degrees at most \(2\). Hence
\((f_1,f_2)\) is a Pfaffian chain on \(\mathbb{R}\) of length \(2\) and degree
\(2\). Finally,
\[
    \operatorname{GELU}(x)
    =
    \frac{x}{2}\bigl(1+f_2(x)\bigr)
\]
is obtained from this chain by the polynomial
\[
    Q(X,Y_1,Y_2)\eqdef \frac{X}{2}(1+Y_2),
\]
which has degree \(2\). Therefore the exact GELU is Pfaffian on
\(\mathbb{R}\) of format at most \((2,2,2)\). In particular, it is definable in
the Pfaffian closure of the real field.

We next treat the tanh approximation. Let
\[
    c\eqdef \sqrt{\frac{2}{\pi}},
    \qquad
    a\eqdef 0.044715,
    \qquad
    h(x)\eqdef c(x+ax^3),
\]
and define
\[
    g_1(x)\eqdef \tanh(h(x)).
\]
Then \(g_1\) is real-analytic on \(\mathbb{R}\), and
\[
    \frac{d g_1}{dx}(x)
    =
    h'(x)\bigl(1-\tanh^2(h(x))\bigr)
    =
    c(1+3ax^2)\bigl(1-g_1(x)^2\bigr).
\]
Hence
\[
    \frac{d g_1}{dx}
    =
    R(x,g_1),
    \qquad
    R(X,Y_1)\eqdef c(1+3aX^2)(1-Y_1^2),
\]
where \(R\in\mathbb{R}[X,Y_1]\). Thus \((g_1)\) is a Pfaffian chain on
\(\mathbb{R}\). Moreover,
\[
    \operatorname{GELU}_{\tanh}(x)
    =
    \frac{x}{2}\bigl(1+g_1(x)\bigr),
\]
so \(\operatorname{GELU}_{\tanh}\) is Pfaffian.

Finally, since \(\tanh\) is definable in \(\mathbb{R}_{\exp}\), and
\(\operatorname{GELU}_{\tanh}\) is obtained from \(\tanh\) by polynomial
operations and composition with a polynomial, it is definable in
\(\mathbb{R}_{\exp}\). Hence it is also definable in
\(\mathbb{R}_{\operatorname{an},\exp}\).
\end{proof}

\begin{proof}[{Proof of Proposition~\ref{prop:modern_activations_definable}.(vii)}]
Consider Swish on $U=\mathbb{R}\times\mathbb{R}_{>0}$.  Define
\begin{align*}
    f_1(x,\beta)&=e^{-\beta x},\\
    f_2(x,\beta)&=\frac{1}{1+f_1(x,\beta)}.
\end{align*}
Then
$
    \deriv{f_1}{x}=-\beta f_1
$,
$
    \deriv{f_1}{\beta}=-x f_1
$,
$$
    \deriv{f_2}{x}
    =
        -\frac{\deriv{f_1}{x}}{(1+f_1)^2}
    =
        \beta f_1 f_2^2,
$$
and
$
    \deriv{f_2}{\beta}=x f_1 f_2^2
$.
Thus $(f_1,f_2)$ is a Pfaffian chain of order $2$ and degree $4$.  Since
$
    \operatorname{Swish}(x,\beta)=x f_2(x,\beta)
$,
Swish is Pfaffian of format at most $(2,4,2)$.  Pfaffian functions are definable in the Pfaffian closure of the real field.  Moreover, Swish is obtained from polynomials, $\exp$, addition, multiplication, and division by the strictly positive term $1+e^{-\beta x}$; hence it is also definable in $\mathbb{R}_{\exp}$.
\end{proof}

\begin{proof}[{Proof of Proposition~\ref{prop:modern_activations_definable}.(viii)}]
Consider SwiGLU on $U=\mathbb{R}^2\times\mathbb{R}_{>0}$.  Define
\begin{align*}
    f_1(x_2,\beta) &= e^{-\beta x_2},\\
    f_2(x_2,\beta) &= \frac{1}{1+f_1(x_2,\beta)}.
\end{align*}
The same computation as for Swish gives
\begin{align*}
    \frac{\partial f_1}{\partial x_2} &= -\beta f_1, &
    \frac{\partial f_1}{\partial \beta} &= -x_2 f_1,\\
    \frac{\partial f_2}{\partial x_2} &= \beta f_1 f_2^2, &
    \frac{\partial f_2}{\partial \beta} &= x_2 f_1 f_2^2,
\end{align*}
while the derivatives with respect to $x_1$ vanish.  Thus $(f_1,f_2)$ is again a Pfaffian chain of order $2$ and degree $4$.  Since
$
    \operatorname{SwiGLU}(x_1,x_2,\beta)
    =
        x_1x_2 f_2(x_2,\beta),
$
SwiGLU is Pfaffian of format at most $(2,4,3)$.  Pfaffian functions are definable in the Pfaffian closure of the real field.  Moreover, SwiGLU is obtained from polynomials, $\exp$, addition, multiplication, and division by the strictly positive term $1+e^{-\beta x_2}$; hence it is also definable in $\mathbb{R}_{\exp}$.
\end{proof}

\begin{proof}[{Proof of Proposition~\ref{prop:modern_activations_definable}.(ix)}]
Consider the semi-algebraic partition
\[
    \mathbb{R}=(-\infty,0)\cup[0,\infty).
\]
On \([0,\infty)\), we have \(|x|=x\), and hence
\[
    \operatorname{Softsign}(x)
    =
    \frac{x}{1+x}.
\]
On \((-\infty,0)\), we have \(|x|=-x\), and hence
\[
    \operatorname{Softsign}(x)
    =
    \frac{x}{1-x}.
\]
The denominators \(1+x\) on \([0,\infty)\) and \(1-x\) on \((-\infty,0)\)
are strictly positive. Therefore the graph of \(\operatorname{Softsign}\) is
\[
\begin{aligned}
    \operatorname{Graph}(\operatorname{Softsign})
    =
    &\ \{(x,y)\in\mathbb{R}^2:x\ge 0,\ y(1+x)=x\} \\
    &\ \cup
    \{(x,y)\in\mathbb{R}^2:x<0,\ y(1-x)=x\}.
\end{aligned}
\]
This is a finite union of sets defined by polynomial equalities and
inequalities, hence is semi-algebraic. Consequently, \(\operatorname{Softsign}\)
is definable in the real field \((\mathbb{R},+,\cdot,<)\).
\end{proof}

\begin{proof}[{Proof of Proposition~\ref{prop:modern_activations_definable}.(x)}] The
composition
\[
    x\mapsto \tanh(\operatorname{Softplus}(x))
\]
is definable in \(\mathbb{R}_{\exp}\). Multiplication by \(x\) preserves
definability, so
\[
    \operatorname{Mish}(x)
    =
    x\,\tanh(\operatorname{Softplus}(x))
\]
is definable in \(\mathbb{R}_{\exp}\). Since
\(\mathbb{R}_{\operatorname{an},\exp}\) expands \(\mathbb{R}_{\exp}\), Mish is
also definable in \(\mathbb{R}_{\operatorname{an},\exp}\).
\end{proof}

\begin{proof}[Proof of Proposition~\ref{prop:piecewise_affine_gating}]
For the maxout map, write $\ell_k(x)\eqdef a_k^{\top} x+b_k$.  Its graph is
$$
\operatorname{Graph}(\sigma)
=
\bigcup_{k=1}^K
\Big\{
(x,y,(a_j,b_j)_{j=1}^K):
y=\ell_k(x)
\ \wedge\
\bigwedge_{j=1}^K \ell_k(x)\ge \ell_j(x)
\Big\}.
$$
Each set in the union is cut out by finitely many polynomial equalities and inequalities in $(x,y,(a_j,b_j)_{j=1}^K)$.  Hence the graph is semi-algebraic.

For winner-take-all gating, let $R_1,\dots,R_K$ be the given finite
polyhedral partition of $\mathbb{R}^d$, and suppose that the affine map
$x\mapsto A_kx+b_k$ is applied on $R_k$.  Since each $R_k$ is polyhedral, it is
semi-algebraic.  The graph of the corresponding gating map is
$$
\operatorname{Graph}(\sigma)
=
\bigcup_{k=1}^K
\Big\{
(x,y):
x\in R_k
\ \wedge\
y=A_kx+b_k
\Big\}.
$$
Each set in the union is defined by finitely many polynomial equalities and
inequalities, and hence is semi-algebraic.  Since finite unions of
semi-algebraic sets are semi-algebraic, the graph is semi-algebraic.

Therefore both maps are semi-algebraic, hence definable in the real field
$(\mathbb{R},+,\cdot,<)$ and in every o-minimal expansion of the real field.
\end{proof}

\begin{proposition}[Definability of Softmax]
\label{prop:softmax_definable}
Fix $K\in\mathbb{N}_+$.  The softmax map
\[
    \operatorname{softmax}:\mathbb{R}^K\to\mathbb{R}^K,
    \qquad
    \operatorname{softmax}(z)_k
    \eqdef
    \frac{e^{z_k}}{\sum_{j=1}^K e^{z_j}},
\]
is definable in $\mathbb{R}_{\exp}$.  Consequently, any finite softmax-weighted
aggregation map whose scores and values are definable in $\mathbb{R}_{\exp}$ is
again definable in $\mathbb{R}_{\exp}$.
\end{proposition}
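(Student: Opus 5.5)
The plan is to show directly that the graph of $\operatorname{softmax}$ is a definable subset of $\mathbb{R}^{2K}$ in $\mathbb{R}_{\exp}$, and then obtain the aggregation statement from the closure properties recalled in Section~\ref{s:ModGeo__ss:oMinimality}.

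\textbf{Step 1: the denominator.} The map $E:\mathbb{R}^K\to\mathbb{R}^K$, $E(z)\eqdef(e^{z_1},\dots,e^{z_K})$, is definable in $\mathbb{R}_{\exp}$. Its graph is the product-type set $\{(z,w):w_k=\exp(z_k)\ \forall k\}$, which is a finite intersection of preimages of the graph of $\exp$ under coordinate projections. The sum $S(z)\eqdef\sum_{j}e^{z_j}$ is then a polynomial in $E(z)$, so it is definable by composition. Moreover $S(z)>0$ for every $z$.

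\textbf{Step 2: the quotient.} The division map $(a,b)\mapsto a/b$ on $\mathbb{R}\times(0,\infty)$ is semi-algebraic, since its graph is $\{(a,b,y):b>0,\ yb=a\}$. Composing this with the definable map $z\mapsto(e^{z_k},S(z))$, which takes values in $\mathbb{R}\times(0,\infty)$, gives definability of each coordinate $\operatorname{softmax}(z)_k$. Concatenating the $K$ coordinates preserves definability. Alternatively, one can write the graph explicitly as
\[
\Big\{(z,y)\in\mathbb{R}^{2K}:\ \textstyle\bigwedge_{k=1}^K\, y_k\sum_{j=1}^K e^{z_j}=e^{z_k}\Big\},
\]
which is a first-order formula in the language of $\mathbb{R}_{\exp}$.

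\textbf{Step 3: aggregation.} Let $s(x,\theta)\in\mathbb{R}^K$ be the scores and $v_k(x,\theta)$ the values, all assumed definable in $\mathbb{R}_{\exp}$. The aggregation map is
\[
(x,\theta)\mapsto\sum_{k}\operatorname{softmax}\big(s(x,\theta)\big)_k\,v_k(x,\theta).
\]
This is obtained from $\operatorname{softmax}\circ s$ and the $v_k$ by products and sums, and these preserve definability (cf.~\citep[Chapter 1.3]{coste1999introduction}). This settles the consequence.

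I expect no real obstacle in this argument. The only points needing care are that the denominator is strictly positive, so the division is globally well defined, and that we use only the global $\exp$ without any restriction. That is exactly why the result lands in $\mathbb{R}_{\exp}$ rather than in $\mathbb{R}_{\mathrm{an}}$.
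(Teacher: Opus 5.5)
Your proof is correct and follows essentially the same route as the paper: the coordinatewise exponentials are $\mathbb{R}_{\exp}$-definable, the denominator is a strictly positive definable sum, division by it preserves definability, and the aggregation claim follows from closure under composition, finite sums, and products. Your explicit semi-algebraic graph for division on $\mathbb{R}\times(0,\infty)$ and your first-order description of the graph of softmax only spell out what the paper invokes as a closure property.
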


\begin{proof}
Each coordinate $z\mapsto e^{z_k}$ is definable in $\mathbb{R}_{\exp}$, and
the denominator
\[
    \sum_{j=1}^K e^{z_j}
\]
is a finite sum of strictly positive $\mathbb{R}_{\exp}$-definable functions.
Hence the denominator is strictly positive and definable.  Division by a
strictly positive definable function preserves definability, so each coordinate
of $\operatorname{softmax}$ is definable in $\mathbb{R}_{\exp}$.

If the score maps $s_1,\dots,s_K$ and value maps
$v_1,\dots,v_K$ are definable, then
\[
    x\mapsto
    \sum_{k=1}^K
    \frac{e^{s_k(x)}}{\sum_{j=1}^K e^{s_j(x)}}\,v_k(x)
\]
is obtained from definable maps by composition, finite summation,
multiplication, and division by a strictly positive definable denominator.
Thus it is definable in $\mathbb{R}_{\exp}$.
\end{proof}

This directly applies to the attention mechanism.

\begin{proof}[{Proof of Proposition~\ref{prop:multihead_attention_definable}}]
For each index $h,n,m$, the score map
$
    (X,W_Q^h,W_K^h)
    \mapsto
    \lambda\langle W_Q^hX_n,W_K^hX_m\rangle/\sqrt{d_k}
$
is polynomial in the coordinates of $(X,W_Q^h,W_K^h)$.  Since the exponential
map is definable in $\mathbb{R}_{\exp}$, the numerator of each attention weight
is $\mathbb{R}_{\exp}$-definable.  The denominator is a finite sum of strictly
positive $\mathbb{R}_{\exp}$-definable functions, hence is strictly positive
and definable.  Therefore each attention weight
$\alpha^{(h)}_{n,m}$ is definable in $\mathbb{R}_{\exp}$, jointly in
$X,W_Q^h,W_K^h$.

Next, the value map
$
    (X,W_V^h)\mapsto W_V^hX_m
$
is polynomial, hence semi-algebraic.  Therefore each head output
$
    (X,W_Q^h,W_K^h,W_V^h)
    \mapsto
    \operatorname{Attn}_h(X)_n
    =
    \sum_{m=1}^N
    \alpha^{(h)}_{n,m}(X)\,W_V^hX_m
$
is obtained from definable maps by finite summation and multiplication, and is
therefore definable in $\mathbb{R}_{\exp}$.

Finally, concatenation over heads is definable, and the output projection
$
    ((z_1,\dots,z_H),W_O)
    \mapsto
    W_O(z_1,\dots,z_H)
$
is polynomial in the concatenated head outputs and in $W_O$.  Hence
$\operatorname{MHA}$ is definable in $\mathbb{R}_{\exp}$, jointly in
$X$ and the parameters $(W_Q^h,W_K^h,W_V^h)_{h=1}^H,W_O$.
\end{proof}


\begin{corollary}[Definability of Multi-Head Cross-Attention]
\label{cor:multihead_cross_attention_definable}
Fix $N_Q,$ $N_C$, $H$, $d_Q,d_C$, $d_k,d_v,d_{out}\in\mathbb{N}_+$ and $\lambda>0$.  For
each $h\in\{1,\dots,H\}$, let
$W_Q^h\in\mathbb{R}^{d_k\times d_Q}$,
$W_K^h\in\mathbb{R}^{d_k\times d_C}$, and
$W_V^h\in\mathbb{R}^{d_v\times d_C}$, and let
$W_O\in\mathbb{R}^{d_{out}\times Hd_v}$.  For
$X\in\mathbb{R}^{N_Q\times d_Q}$ and
$Y\in\mathbb{R}^{N_C\times d_C}$, define
\[
    \operatorname{CrossAttn}_h(X,Y)_n
    \eqdef
    \sum_{m=1}^{N_C}
    \frac{
        \exp\!\big(\lambda\langle W_Q^hX_n,W_K^hY_m\rangle/\sqrt{d_k}\big)
    }{
        \sum_{\ell=1}^{N_C}
        \exp\!\big(\lambda\langle W_Q^hX_n,W_K^hY_\ell\rangle/\sqrt{d_k}\big)
    }
    W_V^hY_m .
\]
Then the parametrized multi-head cross-attention map
\[
    (X,Y,(W_Q^h,W_K^h,W_V^h)_{h=1}^H,W_O)
    \longmapsto
    \operatorname{CrossMHA}(X,Y),
\]
where
$\operatorname{CrossMHA}(X,Y)_n
\eqdef
W_O(\operatorname{CrossAttn}_1(X,Y)_n,\dots,\operatorname{CrossAttn}_H(X,Y)_n)$,
is definable in $\mathbb{R}_{\exp}$, jointly in $(X,Y)$ and the parameters.
\end{corollary}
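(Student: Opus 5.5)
The plan is to mirror the proof of Proposition~\ref{prop:multihead_attention_definable}, replacing self-attention scores by cross scores. The only change is that queries come from $X$ while keys and values come from $Y$; every object stays a finite combination of polynomials and $\exp$, so $\mathbb{R}_{\exp}$-definability follows from the closure properties already used.

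\textbf{Step 1 (scores).} For fixed indices $h,n,m$, consider
\[
(X,Y,W_Q^h,W_K^h)\mapsto \lambda\langle W_Q^hX_n,W_K^hY_m\rangle/\sqrt{d_k}.
\]
This is a polynomial in the coordinates of its arguments, since $\lambda/\sqrt{d_k}$ is a fixed real constant. By Proposition~\ref{prop:polynomial_maps_definable} it is semi-algebraic, hence definable in $\mathbb{R}_{\exp}$.

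\textbf{Step 2 (weights and heads).} Apply Proposition~\ref{prop:softmax_definable} with $K=N_C$, taking these scores as the score maps. Composition with $\exp$, finite summation, and division by the strictly positive denominator show that each weight is $\mathbb{R}_{\exp}$-definable jointly in $(X,Y)$ and the parameters. The values $(Y,W_V^h)\mapsto W_V^hY_m$ are polynomial. By the aggregation clause of Proposition~\ref{prop:softmax_definable}, each $\operatorname{CrossAttn}_h(X,Y)_n$ is therefore definable.

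\textbf{Step 3 (output map).} Concatenate over $h$ and $n$; coordinate concatenation preserves definability. Then apply the polynomial map $((z_1,\dots,z_H),W_O)\mapsto W_O(z_1,\dots,z_H)$ and use closure under composition, citing \cite[Chapter 1.3]{coste1999introduction} as the paper does.

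The one point needing care is \emph{joint} definability. Each intermediate map should be viewed as a function on the full product space of $(X,Y)$ together with all parameters, and the projections onto the relevant coordinates are linear, hence definable, so composition applies cleanly. There is no genuine obstacle beyond this bookkeeping. The argument is otherwise identical to the self-attention case, which one could even cite by noting that self-attention is the special case $Y=X$.
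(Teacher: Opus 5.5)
Your proposal is correct and follows the paper's route: the paper also states that the self-attention proof goes through verbatim, with polynomial scores, $\mathbb{R}_{\exp}$-definable softmax weights, polynomial value maps and output projection, combined via finite sums, products, concatenations and division by a strictly positive definable denominator. One small caveat is that your closing aside runs the implication backwards: self-attention is the special case $Y=X$ of cross-attention, so citing the self-attention proposition cannot yield the cross-attention statement, and you should rely on your direct argument, which does suffice.
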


\begin{proof}
The proof is identical to Proposition~\ref{prop:multihead_attention_definable}:
the scores are polynomial, the softmax weights are definable in
$\mathbb{R}_{\exp}$, the value maps and output projection are polynomial, and
only finite sums, products, concatenations, and divisions by strictly positive
definable denominators are used.
\end{proof}

\begin{corollary}[Definability of Sliding-Window Self-Attention]
\label{cor:sliding_window_attention_definable}
Fix $N,H,d_{in},d_k,d_v,d_{out}\in\mathbb{N}_+$ and $\lambda>0$.  For each
$n\in\{1,\dots,N\}$, let $\mathcal{W}(n)\subseteq\{1,\dots,N\}$ be a fixed
nonempty finite window.  With the notation of
Proposition~\ref{prop:multihead_attention_definable}, define
\[
    \operatorname{SWAttn}_h(X)_n
    \eqdef
    \sum_{m\in\mathcal{W}(n)}
    \frac{
        \exp\!\big(\lambda\langle W_Q^hX_n,W_K^hX_m\rangle/\sqrt{d_k}\big)
    }{
        \sum_{\ell\in\mathcal{W}(n)}
        \exp\!\big(\lambda\langle W_Q^hX_n,W_K^hX_\ell\rangle/\sqrt{d_k}\big)
    }
    W_V^hX_m .
\]
Then the corresponding parametrized multi-head sliding-window self-attention map
is definable in $\mathbb{R}_{\exp}$, jointly in $X$ and the parameters.
\end{corollary}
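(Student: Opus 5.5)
The plan is to reduce the sliding-window map to the ordinary multi-head attention of Proposition~\ref{prop:multihead_attention_definable}, or more directly to the softmax-aggregation clause of Proposition~\ref{prop:softmax_definable}. The argument mirrors the proof of Corollary~\ref{cor:multihead_cross_attention_definable}. The only new ingredient is the index set $\mathcal{W}(n)$, which is a \emph{fixed} finite nonempty subset of $\{1,\dots,N\}$. It is therefore part of the architecture rather than of the input or parameters, so it introduces no case distinction that depends on $(X,\theta)$.

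The steps are as follows.

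First, for each head $h$, position $n$ and $m\in\mathcal{W}(n)$, the score
\[
s_{n,m}(X,W_Q^h,W_K^h)\eqdef\lambda\langle W_Q^hX_n,W_K^hX_m\rangle/\sqrt{d_k}
\]
is a polynomial in the coordinates of $(X,W_Q^h,W_K^h)$. By Proposition~\ref{prop:polynomial_maps_definable} it is semi-algebraic.

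Second, the value maps $(X,W_V^h)\mapsto W_V^hX_m$ are polynomial as well.

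Third, apply the second assertion of Proposition~\ref{prop:softmax_definable} with $K\eqdef|\mathcal{W}(n)|\ge 1$, using the scores $(s_{n,m})_{m\in\mathcal{W}(n)}$ and the values $(W_V^hX_m)_{m\in\mathcal{W}(n)}$. This shows that $\operatorname{SWAttn}_h(X)_n$ is $\mathbb{R}_{\exp}$-definable jointly in $X$ and $(W_Q^h,W_K^h,W_V^h)$. Nonemptiness of the window guarantees that the denominator is a nonempty sum of strictly positive exponentials, so the division is legitimate everywhere. Equivalently, one could read $\operatorname{SWAttn}_h$ as ordinary attention with the non-window terms dropped, but that picture needs no masking by $-\infty$; it only re-indexes a finite sum.

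Fourth, concatenate over $n\in\{1,\dots,N\}$ and $h\in\{1,\dots,H\}$, which preserves definability by closure under finite products. Then apply the polynomial output projection $W_O$, and conclude using closure of definable maps under composition.

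The main, mild, obstacle is bookkeeping about joint definability in parameters. Each map must be viewed as a function on the full product space of inputs and all parameters, with the coordinate projections onto $(X,W_Q^h,W_K^h,W_V^h)$ inserted explicitly. This is the same device used in the proof of Proposition~\ref{prop:key_insight}, and those projections are linear, hence semi-algebraic.
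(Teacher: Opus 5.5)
Your proposal is correct and follows essentially the same route as the paper. The paper reruns the proof of Proposition~\ref{prop:multihead_attention_definable} with the finite sums restricted to the fixed windows $\mathcal{W}(n)$, noting that finite sums, products, exponentials, concatenations, and division by strictly positive definable denominators preserve definability; invoking the aggregation clause of Proposition~\ref{prop:softmax_definable} and using nonemptiness of $\mathcal{W}(n)$ for positivity of the denominator just package that same argument more explicitly.
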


\begin{proof}
The proof is identical to Proposition~\ref{prop:multihead_attention_definable},
except that the finite sums are taken over the fixed windows
$\mathcal{W}(n)$ rather than over all tokens.  Fixed finite restrictions,
finite sums, products, exponentials, concatenations, and divisions by strictly
positive definable denominators preserve definability.
\end{proof}

\begin{proof}[{Proof of Proposition~\ref{prop:normalization_layers_definable}}]
For each $G\in\mathcal{G}$, the map $x\mapsto\mu_G(x)$ is affine, and the map $x\mapsto\sigma_G^2(x)$ is polynomial.  Since $\varepsilon>0$, one has
$
    \sigma_G^2(x)+\varepsilon>0
$
for every $x\in\mathbb{R}^d$.  Moreover, the graph of
$
    x\mapsto \sqrt{\sigma_G^2(x)+\varepsilon}
$
is
$$
    \Big\{
        (x,y)\in\mathbb{R}^{d+1}:
        y\ge 0,\,
        y^2=\sigma_G^2(x)+\varepsilon
    \Big\},
$$
which is semi-algebraic.  Hence $x\mapsto \sqrt{\sigma_G^2(x)+\varepsilon}$ is semi-algebraic.

Since the denominator is strictly positive, division by this term is also semi-algebraic.  Thus each coordinate map
$$
    x\mapsto
        \gamma_i
        \frac{x_i-\mu_{G(i)}(x)}
        {\sqrt{\sigma_{G(i)}^2(x)+\varepsilon}}
        +\beta_i
$$
is semi-algebraic.  Therefore $\operatorname{Norm}_{\gamma,\beta,\varepsilon}$ is semi-algebraic.

Layer normalization is obtained by taking a single block $G=\{1,\dots,d\}$.  Group normalization and instance normalization correspond to fixed partitions of the coordinates into groups.  Finite-batch batch normalization is the same construction after viewing the batch and feature coordinates as one finite-dimensional input and choosing the blocks over which the batch statistics are computed.  At inference time, batch normalization uses fixed population statistics, and is therefore affine.  Finally, RMS normalization follows identically, since $x\mapsto |G|^{-1}\sum_{j\in G}x_j^2$ is polynomial.  This proves the claim.
\end{proof}

\begin{proof}[{Proof of Proposition~\ref{prop:embedding_layers_definable}}]
The graph of $\operatorname{Emb}$ is
\[
    \operatorname{Graph}(\operatorname{Emb})
    =
    \bigcup_{r=1}^N
    \Big\{
        (i,E,y)\in
        \mathbb{R}\times\mathbb{R}^{N\times d}\times\mathbb{R}^d
        :
        i=r,\ y=E_{r,:}
    \Big\}.
\]
For each $r\in\{1,\dots,N\}$, the set
\[
    \Big\{
        (i,E,y):
        i=r,\ y=E_{r,:}
    \Big\}
\]
is defined by the polynomial equalities $i-r=0$ and
$y_j-E_{rj}=0$ for $j=1,\dots,d$.  Hence each set in the union is
semi-algebraic.  Since finite unions of semi-algebraic sets are
semi-algebraic, $\operatorname{Graph}(\operatorname{Emb})$ is semi-algebraic.
Therefore $\operatorname{Emb}$ is semi-algebraic, and hence definable in the
real field and in every o-minimal expansion of the real field.
\end{proof}

\begin{proof}[{Proof of Proposition~\ref{prop:bounded_fourier_positional_encoding_definable}}]
For each $j=1,\dots,M$, the phase map
$
    t\mapsto 2\pi\langle \omega_j,t\rangle+\varphi_j
$
is affine, hence semi-algebraic.  Since $D$ is bounded, its image under this affine map is contained in some bounded interval $[-R_j,R_j]$.  The restrictions of $\sin$ and $\cos$ to $[-R_j,R_j]$ are restricted analytic functions, and are therefore definable in $\mathbb{R}_{\mathrm{an}}$.  Hence each coordinate map
$$
    t\mapsto
        \sin\big(2\pi\langle \omega_j,t\rangle+\varphi_j\big),
    \qquad
    t\mapsto
        \cos\big(2\pi\langle \omega_j,t\rangle+\varphi_j\big)
$$
is definable in $\mathbb{R}_{\mathrm{an}}$.  Finite concatenation and affine readouts preserve definability, proving the first claim.

If $D=\{1,\dots,N\}$ is finite, then the graph of $\operatorname{PE}_{\Omega,\varphi}$ is finite.  Hence it is semi-algebraic.
\end{proof}

\begin{remark}[Why boundedness is necessary]
\label{rmk:global_sine_not_definable}
The boundedness assumption is essential.  The global functions $\sin:\mathbb{R}\to\mathbb{R}$ and $\cos:\mathbb{R}\to\mathbb{R}$ are not definable in any o-minimal expansion of the real field, since their zero sets have infinitely many connected components.  Thus Fourier positional encodings are definable only when positions are fixed finite objects, or when the continuous position domain is bounded.  If the frequencies are also treated as trainable parameters, then those frequency parameters must likewise range over a bounded definable set; otherwise the map $(t,\omega)\mapsto\sin(2\pi\langle\omega,t\rangle)$ contains global sine as a slice.
\end{remark}

\begin{proof}[{Proof of Proposition~\ref{prop:pooling_layers_definable}.(i)}]
The map $P_{\operatorname{avg}}$ is affine, hence polynomial.  Therefore its graph is algebraic, and in particular semi-algebraic.  Since polynomial maps are Pfaffian functions, $P_{\operatorname{avg}}$ is Pfaffian.
\end{proof}

\begin{proof}[{Proof of Proposition~\ref{prop:pooling_layers_definable}.(ii)}]
We first treat the case $d=2$.  For $a,b\in\mathbb{R}$,
$$
    \max\{a,b\}
    =
        \frac{a+b+|a-b|}{2}.
$$
The graph of $t\mapsto |t|$ is semi-algebraic, since
$$
    \operatorname{graph}(|\cdot|)
    =
    \big\{
        (t,s)\in\mathbb{R}^2:
        s\ge 0,\,
        s^2=t^2
    \big\}.
$$
Thus $(a,b)\mapsto \max\{a,b\}$ is semi-algebraic.  For general $d\in\mathbb{N}_+$, one writes
$$
    P_{\max}(x_1,\dots,x_d)
    =
        \max\big\{
            x_1,
            \max\{x_2,\dots,x_d\}
        \big\}
$$
and iterates the two-variable construction finitely many times.  Since finite compositions of semi-algebraic maps are semi-algebraic, $P_{\max}$ is semi-algebraic.  Hence it is definable in the real field $(\mathbb{R},+,\cdot,<)$, and therefore in every o-minimal expansion of the real field.
\end{proof}

\begin{proof}[{Proof of Proposition~\ref{prop:pooling_layers_definable}; final claim}]
A pooling layer acting on finitely many fixed windows is obtained by applying $P_{\operatorname{avg}}$ or $P_{\max}$ to each window, followed by coordinate projection and finite concatenation.  Coordinate projections, finite products, and finite concatenations preserve definability.  Therefore any finite pooling layer built from fixed average-pooling and max-pooling windows is definable.
\end{proof}

\begin{proof}[{Proof of Proposition~\ref{prop:deq_layers_definable}}]
Consider the fixed-point set
\[
    \Gamma
    \eqdef
        \big\{
            (x,z)\in X\times Z:
            z=F(x,z)
        \big\}.
\]
Since $F$ is $\mathfrak{S}$-definable, the set $\Gamma$ is
$\mathfrak{S}$-definable.  By the uniqueness assumption, every fiber
\[
    \Gamma_x
    \eqdef
        \big\{
            z\in Z:
            (x,z)\in\Gamma
        \big\}
\]
is a singleton.  Hence $\Gamma$ is precisely the graph of the map
$x\mapsto z_x$.  Therefore $x\mapsto z_x$ is $\mathfrak{S}$-definable.

Since $G$ is $\mathfrak{S}$-definable, the composition
\[
    x\mapsto G(x,z_x)
\]
is also $\mathfrak{S}$-definable.  This is exactly
$\operatorname{DEQ}$, proving the claim.

Finally, affine maps, piecewise-polynomial activations, and normalization
layers are semi-algebraic, while softmax attention is definable in
$\mathbb{R}_{\exp}$.  Finite compositions of $\mathbb{R}_{\exp}$-definable
maps remain $\mathbb{R}_{\exp}$-definable, so any DEQ layer built from such
blocks is definable in $\mathbb{R}_{\exp}$ whenever its equilibrium is
uniquely selected.
\end{proof}

\begin{proof}[{Proof of Corollary~\ref{cor:fixed_mlp_sample_complexity}}]
Applying Proposition~\ref{prop:key_insight} to Propositions~\ref{prop:polynomial_maps_definable} and~\ref{prop:modern_activations_definable__compressed} implies the joint definability of $f_{\theta}$ over all of $\mathbb{R}^{d_0}\times \mathbb{R}^{P}$.  
Thus, Theorems~\ref{thrm:class} and~\ref{thrm:reg} apply.
\end{proof}

\begin{proof}[{Proof of Corollary~\ref{cor:fixed_transformer_sample_complexity}}]
Applying Proposition~\ref{prop:key_insight} to Proposition~\ref{prop:embedding_layers_definable}, \ref{prop:bounded_fourier_positional_encoding_definable}, ~\ref{prop:normalization_layers_definable}, \ref{prop:modern_activations_definable__compressed}, we deduce that $f_{\theta}$ is jointly definable in $(x,\theta)\in \mathbb{R}^{H_1\times d_1}\times \mathbb{R}^P$.  Thus, Theorems~\ref{thrm:class} and~\ref{thrm:reg} apply.
\end{proof}


\end{document}